\documentclass{article}

\usepackage{comment}
\usepackage{amsmath}
\usepackage{amsthm}
\usepackage{amssymb}
\usepackage{graphicx, mathtools}
\usepackage{sidecap}
\usepackage{enumitem}
\usepackage{algorithm}
\usepackage{algpseudocode}
\usepackage{wrapfig}
\usepackage{tcolorbox}
\usepackage{natbib}

\theoremstyle{definition} % Definitions, Assumptions, Examples usually roman font
\newtheorem{definition}{Definition}[section]

\theoremstyle{plain}      % Propositions, Theorems usually italics
\newtheorem{theorem}{Theorem}[section]
\newtheorem{lemma}[theorem]{Lemma}
\newtheorem{corollary}[theorem]{Corollary}
\newtheorem{proposition}[theorem]{Proposition} % Shared counter with definition

\theoremstyle{remark}
\newtheorem*{remark}{Remark}  % MS: this was one bug (typo 'rem' vs 'remark')

\providecommand{\texorpdfstring}[2]{#1}

\providecommand{\R}{\mathbb R}

\providecommand{\Lip}{\operatorname{Lip}}
\providecommand{\supp}{\operatorname{supp}}
\providecommand{\co}{\operatorname{co}}

\providecommand{\EE}{\mathbb E}
\providecommand{\cN}{\mathcal N}

\PassOptionsToPackage{numbers}{natbib}

\usepackage[preprint]{neurips_2026}

\usepackage[utf8]{inputenc} % allow utf-8 input
\usepackage[T1]{fontenc}    % use 8-bit T1 fonts
\usepackage[colorlinks=true, breaklinks=true,
            linkcolor=blue!80!black, %red
            citecolor=blue!80!black, %blue,
            urlcolor=blue!80!black]
            {hyperref}
\usepackage{url}            % simple URL typesetting
\usepackage{booktabs}       % professional-quality tables
\usepackage{amsfonts}       % blackboard math symbols
\usepackage{nicefrac}       % compact symbols for 1/2, etc.
\usepackage{microtype}      % microtypography
\usepackage{xcolor}         % colors

\title{
Attention as In-Context Empirical Bayes: \\ A Two-Stage View via Particle Dynamics
}

\author{%
Matthew Smart\textsuperscript{1,*}
\quad
Soumya Ganguly\textsuperscript{2,*}\quad
Nilava Metya\textsuperscript{2}\\[2pt]
{\bfseries 
Alexandre V. Morozov\textsuperscript{3}\quad
Anirvan M. Sengupta\textsuperscript{3,4}}\\[4pt]
\textsuperscript{1}
Lewis-Sigler Institute for Integrative Genomics, Princeton University, Princeton, NJ, USA\\
\textsuperscript{2}
Department of Mathematics, Rutgers University, Piscataway, NJ, USA\\
\textsuperscript{3}
Department of Physics and Astronomy, Rutgers University, Piscataway, NJ, USA\\
\textsuperscript{4}
Center for Computational Quantum Physics and Center for Computational Mathematics, \\Flatiron Institute, Simons Foundation, New York, NY, USA\\[4pt]
}
\begin{document}

% ============================================
\maketitle

\renewcommand{\thefootnote}{}
\renewcommand{\footnotesize}{\small}
\footnotetext{
*Equal contribution.\:
Correspondence: 
\texttt{mattsmart@princeton.edu, 
soumya.ganguly@rutgers.edu,
nilava.metya@rutgers.edu, 
morozov@physics.rutgers.edu, 
anirvans.physics@gmail.com}
}
\renewcommand{\thefootnote}{\arabic{footnote}}
% ============================================

\begin{abstract}
We study minimal attention-only transformers under all-token corruption and show they admit a two-stage empirical Bayes interpretation. A single attention step computes a kernel-weighted posterior mean with respect to the empirical distribution defined by the context. Depth refines this distribution through particle dynamics (Stage 1), while a long-range skip-connection carries the noisy input as a query for posterior inference (Stage 2), revealing distinct statistical roles for depth and attention residuals. The framework isolates a minimal setting in which the context itself induces a depth-dependent energy landscape governing in-context inference. We show that effective denoising can emerge without an explicit noise schedule: a fixed kernel bandwidth and finite integration horizon suffice, yielding a principled depth–noise relationship. We further establish a posterior-mean recovery guarantee for a class of well-behaved priors, where the empirical estimator converges to the Bayes-optimal predictor under asymptotic conditions. Connecting these dynamics to reverse-diffusion limits, our results provide a statistical interpretation of attention as in-context inference via sample-based posterior estimation, without explicit density modeling.
\end{abstract}

% ============================================
\section{Introduction}
% ============================================

\begin{figure*}[ht!]
\centering
\includegraphics[width=0.999\textwidth]{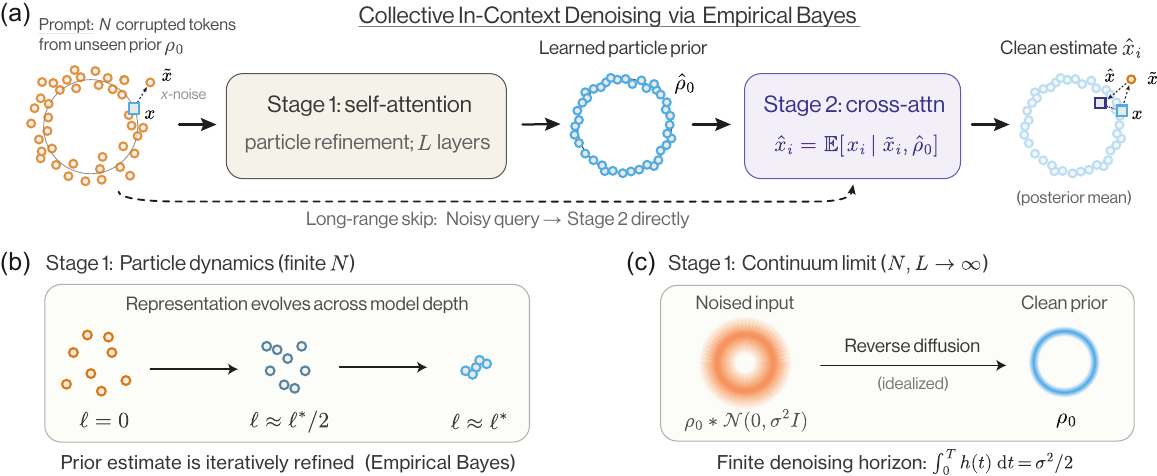}
\caption{
  (a) \textbf{Collective in-context denoising}: Multilayer attention implements a two-stage EB procedure. Depth iteratively refines a particle prior (Stage 1), while a long-range skip--- acting as an Attention Residual (AttnRes)---performs posterior averaging against the initial noisy input (Stage 2).
  (b) \textbf{Particle dynamics}: Discrete self-attention updates move corrupted tokens from their initial noised distribution toward high-density regions of the underlying clean prior.
  (c) \textbf{Theoretical limit}: In the large-context, continuous-depth regime, these dynamics recover an anti-diffusive denoising operator consistent with a reverse diffusion process.
  }
\label{fig:setup}
\end{figure*}

Transformers \citep{vaswani2017attention} have achieved remarkable empirical success across language, vision, and generative modeling, yet the mechanisms underlying this flexibility are only partially understood. Several distinct perspectives on their defining ingredient --- \emph{attention} --- have emerged. \citet{geshkovski2023emergence, rigollet2025mean, bruno2025_neurips_meanfield} treat multilayer attention as interacting particle systems; \citet{rosu2025_denoising} connect attention to score-based denoising; and \citet{ramsauer2021iclr} interpret attention as memory retrieval in dense associative memory networks \citep{krotov2016dense}. This latter connection raises the question of what task would naturally motivate identity-scaled weights and single-step energy minimization. \citet{smart2025context} proposed \emph{in-context denoising} as such a task, showing that a single attention layer provably solves Bayes-optimal denoising when context tokens are clean. However, these perspectives have largely been studied in isolation, and their relationship --- as well as the architectural consequences they imply --- remains unclear.

Motivated by this lineage, we study transformer dynamics through a collective in-context denoising task: $N$ tokens drawn i.i.d.\ from an unknown prior $\rho_0$ are corrupted by isotropic Gaussian noise of known variance $\sigma^2$ and processed jointly by multilayer self-attention, with the goal of recovering the clean tokens. This generalizes \citet{smart2025context} to a regime in which the prior itself must be inferred from corrupted samples, so that depth becomes necessary. Our central claim is that attention-only transformers in this setting implement a finite-sample empirical Bayes (EB) denoiser, decomposable into two stages (Fig.~\ref{fig:setup}). \emph{Stage 1} (self-attention across depth) iteratively refines a particle approximation \citep{del2013mean, sander2022sinkformers} to the prior $\rho_0$ from the corrupted samples. \emph{Stage 2} (a final cross-attention step from the noisy input, enabled by a long-range residual connection) computes a posterior mean against the refined prior. This decomposition gives the architectural elements of depth and attention residuals statistically derivable roles.

This framework has roots in classical empirical Bayes \citep{robbins1956empirical, miyasawa1961empirical, efron2011tweedie, JohnstoneSilverman2005, raphan2011_simoncelli}, where multiple observations from a common unknown prior are used to estimate said prior before performing posterior inference. 
Under Gaussian corruption, the posterior mean connects naturally to score-based denoising through Tweedie’s formula,
linking our perspective to modern diffusion modeling
\citep{ho2020denoising, song2021scorebased}. 
Our EB approach is complementary to existing perspectives on attention: 
unlike recent score-based denoising treatments \citep{rosu2025_denoising}, 
we provide an explicit empirical Bayes interpretation, and we do not impose a noise schedule. 
By connecting mean-field particle dynamics \citep{geshkovski2023emergence, rigollet2025mean, bruno2025_neurips_meanfield} to a denoising task with a finite integration horizon, our work contextualizes the long-time regimes those works emphasize.
Finally, our two-stage decomposition reveals a \emph{dynamic} associative memory landscape \citep{ramsauer2021iclr, smart2025context} that evolves in-context with model depth and governs the network's inference step. 

%\textbf{Contributions}
Our main contributions are as follows:

\begin{enumerate}

  \item We generalize \emph{in-context denoising} \citep{smart2025context} to the all-token corruption setting, showing that attention-only transformers admit a two-stage empirical Bayes interpretation: \textbf{depth} iteratively refines a particle approximation of the prior from noisy samples alone (Stage 1), while \textbf{attention residuals} use the original noisy input to query the resulting energy landscape for posterior inference (Stage 2). This decomposition clarifies the provenance of stored patterns in associative-memory views of attention \citep{ramsauer2021iclr}.

  \item In the continuous-depth and large-context regime, we show that self-attention dynamics approximate an anti-diffusive denoising operator acting on a particle system, yielding a continuous-time flow that refines a particle-based prior in a nonparametric, in-context manner. This connects multilayer attention to score-based denoising and reverse diffusion.
  
  \item 
  We show that effective denoising does not require a noise or parameter schedule: it can be achieved via a fixed attention kernel bandwidth $\beta$ and a finite integration horizon $T^* \approx \sigma^2/2$, providing a principled depth–noise relationship that informs architectural design.

 \item 
 We prove a sequential posterior-mean recovery theorem for an admissible class of 
 priors \(\mathcal{A}_\tau\) characterized by stability of the mean-field flow under hard truncation. The theorem shows that the truncated particle-flow estimator converges, through particular sequential limits, to the Bayes posterior mean uniformly on compact query sets. We verify that Gaussian priors belong to \(\mathcal{A}_\tau\).
 In contrast to \citet{bruno2025_neurips_meanfield}, whose particle system is compact via spherical layer norm, our analysis is set on $\mathbb{R}^d$ and must control the dynamics through hard truncation, requiring stability estimates for the truncated mean-field flow.
 
\end{enumerate}

% =============================================
\section{Problem formulation and background}
% =============================================
\label{sec:problem_setup}

\paragraph{Collective denoising task.}
We assume we are given a collection of noisy samples $\{\tilde x_i\}_{i=1}^N \subset \mathbb{R}^d$ 
obtained by adding noise to uncorrupted datapoints $\{ x_i\}_{i=1}^N$ independently sampled from an unknown prior distribution $P_0$ with density $\rho_0$ (Fig. \ref{fig:setup}). We have $\tilde x_i=x_i+\epsilon_i, \epsilon_i\overset{iid}\sim \mathcal{N}(0,\sigma^2 I_d)$ where $I_d$ is the $d$-dimensional identity matrix.
The task is to approximately recover the uncorrupted data under the
$x$-prediction mean squared error (MSE) objective $\mathbb{E}\big[\| \hat{x}_i - x_i\|^2\big]$. We generate denoised estimates $\hat{x}_i$ using an empirical Bayes approach which is divided into two stages:

\begin{itemize}
\item[\textbf{Stage 1.}] We perform iterative refinement on $\{\tilde x_i\}$ to construct a particle approximation to the underlying prior distribution $\rho_0$ from which $\{x_i\}$ were sampled. 

\item[\textbf{Stage 2.}] Given the particle approximation for the prior from Stage 1, we denoise each $\tilde x_i$ by computing the posterior mean under the reconstructed prior.
\end{itemize}

At first glance, this problem appears challenging, as the prior $\rho_0$ is not known \emph{a priori} and must be inferred from the corrupted samples alone.
Nevertheless, we will show that simple nonparametric schemes can achieve near Bayes-optimal performance in certain settings, and that these schemes align naturally with the structure of attention-based architectures.
The problem generalizes the single-token denoising task studied in \cite{smart2025context}, which was shown to be solvable by a \emph{single} attention layer (corresponding to Stage 2 above). 
In contrast, our all-token corruption setting necessitates depth, as the prior itself must now be estimated (Stage 1). 

Below, we begin by drawing an explicit connection between Stage 1 and reverse diffusion. This leads to an attention-based construction of the score (log-density gradient) that can be applied directly to noisy data in a non-parametric manner. This connection motivates a discrete particle update (Alg.~\ref{alg:two_stage_multilayer_attn}), which admits a continuous-time limit (Alg.~\ref{alg:full-sample-sequential-posterior}). For analytical tractability, we introduce a truncated variant (Alg.~\ref{alg:truncated-sequential-posterior}), whose behavior can be studied rigorously. We provide empirical demonstrations in a finite-sample setting before formally analyzing the resulting particle dynamics and establishing posterior recovery guarantees. We emphasize that the reverse-diffusion connection is heuristic, and several steps in this chain are not controlled in full generality.

% ========================================================
\section{Reverse diffusion as an attention-like particle dynamics for Stage 1} 
\label{sec:reverse_diffusion_prop}
% ========================================================

A connection between attention and score-based methods has been noted \citep{rosu2025_denoising, ilin2026discoformerplugindensityscore}, with classical roots in kernel-based estimation of density gradients \citep{FukunagaHostetler1975, CamaniciuMeer1999meanshift}.
We revisit this correspondence to set up the subsequent analysis.
Here we provide an informal proposition
showing that reverse diffusion can be approximated by a particle system whose update rule takes the form of a leaky residual averaging step. The resulting dynamics closely resembles attention mechanisms~\citep{vaswani2017attention} and kernel regression \citep{nadaraya1964estimating,watson1964smooth}.

\begin{proposition}
\label{prop:informal_reverse_diffusion_derivation}
    {\bf{(Informal)}} Assume that the typical density of the corrupted data distribution is $\bar\rho$. Under the setup described in section \ref{sec:problem_setup}, we start with noisy outputs $\left\{ \tilde x_i \right\}_{i=1}^N \subset \mathbb{R}^d$ and run the following dynamics 
    \begin{equation} 
    \label{upd:discr}
    z_i^{(\ell+1)} = (1-\eta) z_i^{(\ell)} + \eta \frac{ \sum_j e^{-\frac{\beta}{2}\|z_i^{(\ell)} - z_j^{(\ell)}\|^2} z_j^{(\ell)} }{ \sum_j e^{-\frac{\beta}{2}\|z_i^{(\ell)} - z_j^{(\ell)}\|^2} } \equiv (1-\eta) z_i^{(\ell)} + \eta \sum_j a_{ij} (\{ z_i^{(\ell)} \}) z_j^{(\ell)}, 
    \end{equation}
    for $l=0,1, \ldots ,L-1$, starting with $z_i(0)=\tilde x_i$. Here, $\eta \in (0,1)$ is the step size and $L \approx {\beta\sigma^2}/{2\eta}$ is the number of steps or transformation layers. If we let $N\to\infty$ and $\beta\to\infty$ in such a way that $\tfrac{N\bar \rho}{\beta^{d/2}}\to\infty$ and also let $\eta\to 0$,
    then $\{ z_i^{(L)} \}_{i=1}^N \subset \mathbb{R}^d$ forms a good particle approximation of $P_0$.
\end{proposition}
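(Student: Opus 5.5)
The plan is to read the update \eqref{upd:discr} as an explicit Euler step of a continuous-time particle flow driven by a kernel estimate of the score. Then we identify that flow, in the limit, with the reverse-time heat equation that undoes the Gaussian corruption. First rewrite the update as
\[
z_i^{(\ell+1)} - z_i^{(\ell)} = \eta\bigl(m_\beta(z_i^{(\ell)}) - z_i^{(\ell)}\bigr),
\qquad m_\beta(z) = \frac{\sum_j e^{-\frac{\beta}{2}\|z - z_j\|^2} z_j}{\sum_j e^{-\frac{\beta}{2}\|z-z_j\|^2}} .
\]
Let $\hat\rho_\beta = \frac1N\sum_j \cN(z_j,\beta^{-1}I_d)$ be the Gaussian kernel density estimate of the current particle cloud. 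A direct differentiation gives the mean-shift identity
\[
m_\beta(z) - z = \beta^{-1}\nabla\log\hat\rho_\beta(z).
\]
With the rescaled time $t = \ell\eta/\beta$, the update is an Euler step of size $\Delta t=\eta/\beta$ for $\dot z_i = \nabla\log\hat\rho_\beta(z_i)$. The total horizon is $L\eta/\beta \approx \sigma^2/2$, which is exactly why the stated $L \approx \beta\sigma^2/(2\eta)$ is chosen.

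Second, pass to the mean field. In the regime $N\to\infty$, $\beta\to\infty$ with $N\bar\rho/\beta^{d/2}\to\infty$, the kernel window of volume $\sim\beta^{-d/2}$ contains many particles. Hence $\hat\rho_\beta$ converges to the current particle density $\rho_t$, in the sense that bias and variance of the estimate and of its log-gradient both vanish. As $\eta\to0$ the Euler scheme converges to the ODE $\dot z = \nabla\log\rho_t(z)$, so the empirical measure follows the continuity equation
\[
\partial_t\rho_t = -\nabla\cdot(\rho_t\nabla\log\rho_t) = -\Delta\rho_t .
\]
This is the backward heat equation. The corrupted density satisfies $\bar\rho = \rho_0 * \cN(0,\sigma^2 I_d)$, i.e.\ it is the heat flow $\partial_s\rho=\tfrac12\Delta\rho$ run for time $\sigma^2$, or equivalently $\partial_s\rho = \Delta\rho$ run for time $\sigma^2/2$. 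Running $\partial_t\rho = -\Delta\rho$ from $\rho_{t=0}=\bar\rho$ for time $\sigma^2/2$ therefore formally returns $\rho_0$. This is the probability-flow ODE of reverse diffusion, so the particles at layer $L$ are approximately distributed as $P_0$.

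The main obstacle is the mean-field step. The backward heat equation is ill-posed, so small errors in the score estimate can be amplified along the flow. The log-gradient estimate is also unreliable in low-density regions, and in the tails the window-count condition fails. Since the proposition is informal, I would handle this by assuming $\rho_t$ remains smooth and bounded below on the relevant region along the whole reverse path, which holds e.g.\ for Gaussian priors where everything is explicit. I would control the score error locally by the bias $O(\beta^{-1})$ plus the variance $O(\beta^{d/2}/(N\bar\rho))$, and then argue that both vanish in the stated limit faster than any fixed-horizon amplification. The order of limits matters: $N,\beta\to\infty$ first at fixed $\eta$-time grid, then $\eta\to0$. A rigorous treatment of this step would require a truncation/stability argument of the kind deferred to later sections.
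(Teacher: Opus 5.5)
Your sketch is sound at the informal level and is essentially the paper's own heuristic in Appendix~\ref{app:informal_argument}. There, your mean-shift identity is the barycentric form $X_\beta[\mu]=\beta^{-1}\nabla\log(G_\beta*\mu)=F_{\beta,\mu}-\mathrm{Id}$ (Proposition~\ref{prop:sc-barycentric-drift}), the step $h=\eta/\beta$, $L=T\beta/\eta$, $T=\sigma^2/2$ and the large-$\beta$ reduction of the kernel-smoothed score flow to the backward heat equation all appear, and the paper additionally motivates the kernel average by Tweedie's formula on a slice of width $\delta=1/(2\beta)$ rather than by KDE consistency. One correction to your error budget: the pointwise relative variance of the kernel \emph{score} estimate scales like $\beta^{d/2+1}/(N\bar\rho)$ (the usual $Nh^{d+2}\to\infty$ condition for kernel derivative estimates with $h=\beta^{-1/2}$), not $\beta^{d/2}/(N\bar\rho)$, so the stated regime yields barycenter errors $o(\beta^{-1/2})$ but not $o(\beta^{-1})$, the size of the mean-shift signal; your sketch shares this looseness with the paper's informal argument.
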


\begin{proof} The details of the arguments are provided in Appendix \ref{app:informal_argument}. 
\end{proof}

The update Eq.~\eqref{upd:discr} admits three equivalent interpretations: (i) reverse diffusion via local denoising, (ii) Nadaraya--Watson kernel regression, (iii) attention-like averaging with Gaussian similarity weights. 

In the infinite-particle, continuous-depth limit of these dynamics, the Stage 1 integration time remains $\sigma^2/2$ at leading order in $\beta$, with a first-order correction of size $O(\beta^{-1})$.

\begin{proposition}
{\bf{(Informal)}}
\label{prop:denoising_time}
For sufficiently regular priors $P_0$ in the limit $N\to\infty$ and finite $\beta$, the denoising time is $\frac{\sigma^2}{2}+O(\beta^{-1})$.
\end{proposition}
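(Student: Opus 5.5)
In the limit $N\to\infty$ I would replace the particle system by its mean-field continuity equation. Write $\rho_t$ for the particle density at depth-time $t$, with time rescaled as $t=\eta\ell/\beta$. The continuous-depth version of \eqref{upd:discr} is then the velocity field
\[
v_t(z)=\beta\big(m_t(z)-z\big),\qquad m_t(z)=\frac{\int e^{-\frac{\beta}{2}\|z-y\|^2}\,y\,\rho_t(y)\,dy}{\int e^{-\frac{\beta}{2}\|z-y\|^2}\,\rho_t(y)\,dy},
\]
with $\partial_t\rho_t+\nabla\cdot(\rho_t v_t)=0$ and $\rho_0^{\rm init}=P_0*\mathcal N(0,\sigma^2 I)$.

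By Tweedie's formula applied to the Gaussian kernel of variance $1/\beta$,
\[
m_t(z)-z=\beta^{-1}\nabla\log(\rho_t*g_{1/\beta})(z).
\]
So $v_t=\nabla\log(\rho_t*g_{1/\beta})$ exactly, for finite $\beta$. The plan is to compare this flow with the exact reverse heat flow $\dot z=\nabla\log\rho_t$, the deterministic probability-flow ODE of variance-exploding diffusion. For that flow the variance parameter $s(t)$ of $\rho_t=P_0*g_{s(t)}$ obeys $\dot s=-2$. It therefore reaches $P_0$ exactly at $t=\sigma^2/2$, which is the leading term.

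\textbf{Step 1 (ansatz).} Take $\rho_t\approx P_0*g_{s(t)}$ as an approximate ansatz. Then $\rho_t*g_{1/\beta}=P_0*g_{s+1/\beta}$, and the velocity becomes $\nabla\log(P_0*g_{s+1/\beta})$. The exact heat-flow velocity that keeps the family $P_0*g_s$ closed under the continuity equation is $\nabla\log(P_0*g_s)$ times $-\dot s/2$. So I would match these by choosing $\dot s$ so that the family's evolution is reproduced as well as possible.

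\textbf{Step 2 (effective rate).} Expand in $1/\beta$:
\[
\nabla\log(P_0*g_{s+1/\beta})=\nabla\log(P_0*g_s)+\beta^{-1}\,\partial_s\nabla\log(P_0*g_s)+O(\beta^{-2}).
\]
The first-order term is not of the form constant times score, so the Gaussian-convolution family is not exactly preserved. I would project onto the family by defining $s(t)$ through the second moment: set $s(t)=(\operatorname{Var}\rho_t-\operatorname{Var}P_0)/d$. Then compute $\tfrac{d}{dt}\int\|z\|^2\rho_t=2\int z\cdot v_t\,\rho_t$. This gives $\dot s=-2+O(\beta^{-1})$, where the correction $c(s)/\beta$ is an explicit functional of $P_0$ that is bounded for regular priors. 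In the Gaussian case $P_0=\mathcal N(\mu,\tau^2 I)$ everything is closed form: the flow is linear, $\dot s=-2(\tau^2+s)/(\tau^2+s+1/\beta)$, and integrating from $s=\sigma^2$ to $0$ gives
\[
T=\frac{\sigma^2}{2}+\frac{1}{2\beta}\log\frac{\tau^2+\sigma^2}{\tau^2}.
\]
This confirms the $O(\beta^{-1})$ form and serves as a sanity check.

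\textbf{Step 3 (integrate).} Integrate $dt=ds/(-2+c(s)/\beta)$ from $s=\sigma^2$ down to $0$. This yields $T=\sigma^2/2+\beta^{-1}\cdot\tfrac14\int_0^{\sigma^2}c(s)\,ds+O(\beta^{-2})$.

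The main obstacle is Step 2 for general priors. Showing that $c(s)$ stays bounded down to $s=0$ needs regularity of $P_0$, for instance a finite Fisher information of $P_0*g_s$ uniformly in $s$, since $\partial_s\nabla\log(P_0*g_s)$ involves third derivatives of the log-density. Controlling the deviation of $\rho_t$ from the ansatz family also needs a Grönwall-type stability estimate, at least heuristically; I would state this as the regularity assumption implicit in ``sufficiently regular'' and keep the Gaussian computation as the rigorous witness.
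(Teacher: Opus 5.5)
Your argument holds at the level of this informal statement, and your Gaussian check reproduces Theorem~\ref{thm:gaussiandenoise} exactly. The route, however, differs from the paper's.

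The paper combines two lemmas. Lemma~\ref{lem:purturb} is a full function-space expansion $f_t^{(\beta)}=g_t+\beta^{-1}h_t+O_{H^s}(\beta^{-2})$ around the backward-heat orbit $g_t=P_0*\gamma_{\sigma^2-2t}$. The corrector $h$ is driven by $-\tfrac12Q[g_t]$, and the remainder is closed by a bootstrap under an \emph{assumed} bounded backward-heat propagator along the orbit. Lemma~\ref{lem:time} then defines the denoising time as the first minimizer of $t\mapsto\|f_t^{(\beta)}-P_0\|_X$ near $T_0=\sigma^2/2$. It localizes this minimizer within $C/\beta$ using $\partial_tg_t|_{t=T_0}=-\Delta P_0\neq0$ and a backward-heat continuation of $g_t$ past $T_0$.

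You instead define the denoising time as a variance-hitting time, which is the notion the paper uses for Gaussians and in Fig.~\ref{fig:gaussian_linear_variance_decay}(b). You then close a scalar ODE by projecting onto the heat family. This buys three things:
\begin{itemize}
\item An explicit, norm-independent first-order coefficient $\tfrac14\int_0^{\sigma^2}c(s)\,ds$. The paper only bounds $|T^*_{(\beta)}-T_0|\le C/\beta$, and the order-$\beta^{-1}$ position of its argmin generally depends on the choice of $X$.
\item A one-sided criterion that needs neither $\Delta P_0\neq0$ nor a continuation of the heat family past $T_0$, since you only use $P_0*g_s$ with $s\ge0$.
\item Some robustness. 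Because $2\int(z-\bar z)\cdot\nabla\rho\,dz=-2d$ for every density, an $O(\beta^{-1})$ deviation from the ansatz perturbs $\dot s$ only at order $\beta^{-2}$.
\end{itemize}
The two computations mesh. In coordinates centered at the mean of $P_0$, your coefficient equals $\tfrac{1}{2d}\int|z|^2h_{T_0}(z)\,dz$ for the paper's corrector. Lemma~\ref{lem:purturb} supplies, on $[0,T_0]$, the $O(\beta^{-1})$ closeness to the heat family that your closure assumes. What the paper's route buys in return is a statement about the whole law rather than a single moment: hitting $\operatorname{Var}P_0$ does not by itself certify that $\rho_T\approx P_0$.

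There are two cautions.
\begin{itemize}
\item Compute $\tfrac{d}{dt}\operatorname{Var}\rho_t=2\int(z-\bar z_t)\cdot v_t\,\rho_t$ rather than the raw second moment, or center at the initial mean. For asymmetric priors the finite-$\beta$ flow moves the mean at order $\beta^{-1}$.
\item Do not expect a plain Gr\"onwall estimate to control the deviation from the ansatz. The linearized finite-$\beta$ flow is anti-diffusive at frequencies below $\sqrt\beta$, with growth rates up to order $\beta$ (multiplier $|k|^2e^{-|k|^2/(2\beta)}$ around a flat background). A generic Gr\"onwall constant is therefore $e^{c\beta}$, which wipes out the $O(\beta^{-1})$ bound. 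This is why the paper states orbit-wise stability as an explicit hypothesis (assumption 3 of Lemma~\ref{lem:purturb}), and you should present it the same way.
\end{itemize}
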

\begin{proof}
See Lemmas~\ref{lem:purturb} and~\ref{lem:time} for the formal statements and proofs.
\end{proof}

Leveraging insights from Propositions~\ref{prop:informal_reverse_diffusion_derivation} and ~\ref{prop:denoising_time},
we next demonstrate that multilayer attention schemes can implement a finite-sample empirical Bayes procedure that dynamically refines a particle representation to improve posterior inference with depth, before presenting our asymptotic analysis.

\begin{figure}[t]
\centering
\includegraphics[width=0.77\textwidth]
{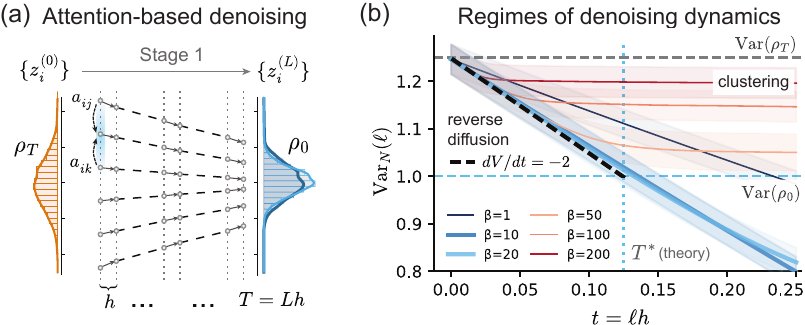}
\caption{
  \textbf{Attention as discretized reverse diffusion.}
  (a) Particles initialized from a noised distribution $\rho_T$ are iteratively updated across layers using Gaussian attention steps ($t=\ell h$) to approximately recover the clean prior $\rho_0 = \mathcal{N}(0, 1)$. 
  (b) Variance dynamics for Gaussian denoising under different kernel bandwidths $\beta$. 
  Parameters: 20 seeds, $N=5000, L_0=200, \sigma^2=0.25$.
  }
\label{fig:gaussian_linear_variance_decay}
\end{figure}

\begin{algorithm}[h]
\small
\caption{Two-stage denoising via multilayer attention}
\label{alg:two_stage_multilayer_attn}
\begin{algorithmic}[1]
\Require Noisy tokens $\{\tilde x_i\}_{i=1}^N \subset \mathbb{R}^d$, noise variance $\sigma^2$, bandwidth $\beta$, step size $\eta$, depth $L$.
\State Initialize $z_i^{(0)} \gets \tilde x_i$.

\For{$\ell=0,\dots,L-1$}  \Comment{Stage 1 (particle refinement)}
    \State $a_{ij}^{(\ell)} =
    \dfrac{\exp\!\big(-\tfrac{\beta}{2}\|z_i^{(\ell)}-z_j^{(\ell)}\|^2\big)}
    {\sum_k \exp\!\big(-\tfrac{\beta}{2}\|z_i^{(\ell)}-z_k^{(\ell)}\|^2\big)}$
    %\Comment{Compute self-attention weights}
    \State $z_i^{(\ell+1)} \gets (1-\eta)z_i^{(\ell)} + \eta\sum_j a_{ij}^{(\ell)} z_j^{(\ell)}$
    
\EndFor

\State $\beta_c \gets 1/\sigma^2$
\For{$i=1,\dots,N$} \Comment{Stage 2 (posterior mean)}
    \State $b_{ij} =
    \dfrac{\exp\!\big(-\tfrac{\beta_c}{2}\|\tilde x_i - z_j^{(L)}\|^2\big)}
    {\sum_k \exp\!\big(-\tfrac{\beta_c}{2}\|\tilde x_i - z_k^{(L)}\|^2\big)}$
    %\Comment{Compute cross-attention weights}
    \State $\hat x_i \gets \sum_j b_{ij} z_j^{(L)}$
    
\EndFor
\end{algorithmic}
\end{algorithm}

% ------------------------
\section{Attention architecture for collective denoising} 
Algorithm \ref{alg:two_stage_multilayer_attn} outlines a multilayer attention architecture implementing the collective denoising approach to Fig. \ref{fig:setup} (details in Appendix \ref{app:attention_arch_details}). 
We use this for finite-sample, finite-depth empirical demonstrations before presenting an asymptotic variant that connects to our theoretical analysis.

Fig.~\ref{fig:gaussian_linear_variance_decay}
studies Stage 1 variance dynamics for the canonical problem of denoising a Gaussian distribution. 
The predicted reverse-diffusion behavior is linear variance decay $dv/dt = -2$ over the horizon $[0, T^*]$ with $T^* = \sigma^2/2$. 
Empirically, only intermediate values of kernel bandwidth $\beta$ recover the diffusion-like linear variance decay and stopping time predicted by theory. Large $\beta$ induces premature clustering at finite $N$, while small $\beta$ produces overly global averaging that misses the local denoising structure and leads to slower denoising. The slowed $\beta = 1$ behavior is captured by the solution to an ODE $v'(t) = -2v / (v + \beta^{-1})$ arising from a perturbative analysis 
(Appendix~\ref{sec:stoppingtime}, Theorem~\ref{thm:gaussiandenoise}).

Unlike typical diffusion modeling and related work \citep{rosu2025_denoising}, which invoke time-dependent noise schedules, our iterative attention mechanism uses a single time-independent bandwidth $\beta$. Correct denoising is recovered provided the depth $L$ and step size $\eta$ are scaled according to noise $\sigma^2$.
%the input noise level $\sigma^2$.

% ----------------------------------------------------
\subsection{In-context learning of particle priors and energy landscape of posterior inference}
% ----------------------------------------------------

\begin{figure*}[h!]
\centering
\includegraphics[width=0.99\textwidth]
{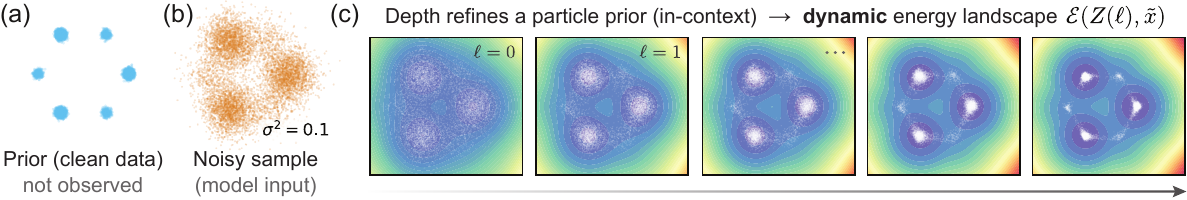}
\caption{
  \textbf{Dynamic energy landscape.}
  (a) Ground truth GMM prior. (b) Corrupted input prompt ($N=5,000$). (c) Dynamic refinement: As model depth $\ell$ increases, Stage 1 self-attention iteratively sharpens the particle prior (white cloud). This process dynamically sculpts an associative memory landscape $\mathcal{E}(Z(\ell), \tilde{x})$ which determines the posterior averaging step in Stage 2; Eq. (\ref{eq:energy_DAM}).
  }
\label{fig:stage1_vs_stage2_energy_mechanism}
\end{figure*}

Fig. \ref{fig:gaussian_linear_variance_decay}(b) provides numerical indication that multilayer attention can denoise distributions (Stage~1) when model depth and parameters are in line with the theoretical predictions. We next study the downstream role of Stage 2 for sample denoising ($x$-prediction). For such posterior inference tasks, note that the prediction step of Alg. \ref{alg:two_stage_multilayer_attn} (details in App. \ref{app:attention_arch_details}) can be interpreted as in \cite{smart2025context}: 
each noisy query $\tilde x_i$ is updated by taking a gradient step on an energy landscape defined by the refined particle approximation
\begin{equation} 
\label{eq:energy_DAM}
\hat x_i = \tilde x_i - \nabla_q \mathcal{E}(Z(\ell); q) \big|_{q = \tilde x_i},
\qquad
\mathcal{E}(Z(\ell); q) = -\frac{1}{\beta_c} \log \sum_j e^{-\frac{\beta_c}{2}\|q - z_j^{(\ell)}\|^2}.
\end{equation}

This defines a dense associative memory \citep{krotov2016dense, ramsauer2021iclr} induced by the context. In contrast to \citet{smart2025context}, where the memories are fixed during inference, here Stage 1 dynamically \emph{sculpts} the landscape across depth: as the particle prior is refined, the energy surface on which the noisy query descends sharpens correspondingly (Fig.~\ref{fig:stage1_vs_stage2_energy_mechanism}). 
We next check that approaching Bayes optimality on the all-token corruption task requires depth-dependent refinement.

% ----------------------------------------------------
\subsection{Roles of context length, depth, and cross-attention for denoising all-token corruption}
% ----------------------------------------------------

While Stage 1 can recover the underlying structure of the data, it does not by itself yield the Bayes-optimal estimator. 
Using a symmetric two-component Gaussian mixture prior, we observe that improvements from depth are pronounced in regimes where the posterior distribution is ambiguous across multiple modes (Fig. \ref{fig:stage1_vs_stage2_bayes_optimal}). 
% MS: phrasing/show fig displaying this?
In this minimal setting, the Bayes optimal predictor (exact posterior mean given the clean prior) is analytically tractable (see Appendix \ref{sec:app_bayes_opt_GMM_formula}). This example establishes a key validation: the combination of Stage 1 prior refinement and Stage 2 posterior averaging approaches the Bayes optimal bound as a function of model depth. Additional numerics in Fig. \ref{fig:app_stage1_vs_stage2_bayes_optimal} of App. \ref{app:additional_numerics}.

\begin{figure*}[h!]
\centering
\includegraphics[width=0.95\textwidth]{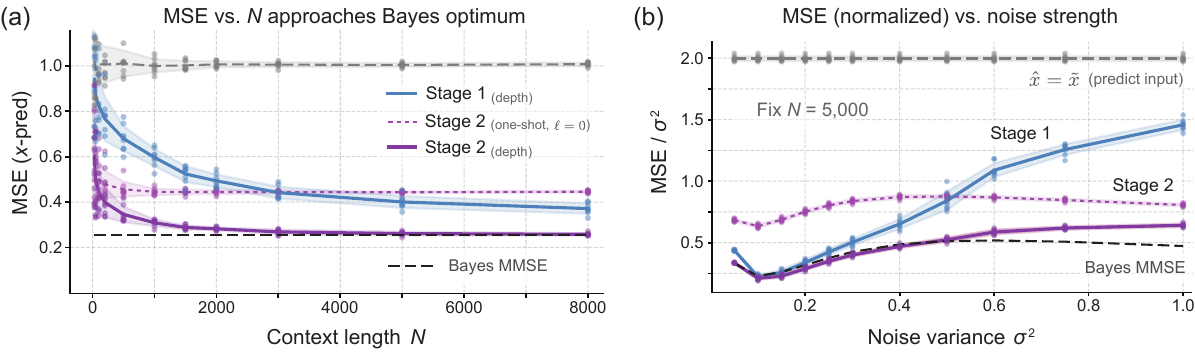}
\caption{
  Multilayer attention-based denoiser approaches Bayes-optimal $x$-pred MSE with increasing context and depth.
  (a) MSE vs. context length (particle count) $N$. (b) MSE (normalized) vs. noise variance $\sigma^2$. 
  Solid lines show the mean (bands $\pm 1$ standard deviation) over 8 seeds (with subsampling preserved across $N$).  
  Prior: Symmetric Gaussian mixture with $\mu=(\pm 1, 0)$ and $a=0.1$.
  Parameters:
  (a) $\beta=20$, $\sigma^2=0.5$, $L_0=200$; 
  (b) $N=5000$ is fixed for varying $\sigma^2$. Details in Appendix \ref{app:attention_arch_details}.
  }
\label{fig:stage1_vs_stage2_bayes_optimal}
\end{figure*}

% ====================================================
% ----------------------------------------------------
\subsection{Attention dynamics corresponding to asymptotic analysis (hard truncation)}
% ----------------------------------------------------

Algorithm \ref{alg:two_stage_multilayer_attn} and the associated numeric demonstrations are equivalent to using Euler steps on a neural ODE \citep{chen2018neural} acting on a finite context length. This discrete dynamics admits a continuous-time mean-field limit, which we formalize via an ODE governing particle evolution. Algorithm~\ref{alg:full-sample-sequential-posterior} is the corresponding practical full-sample version. For technical reasons, our analysis is carried out on a truncated version of this flow to ensure compact support. Algorithm~\ref{alg:truncated-sequential-posterior} is the theorem-certified version.   
For Gaussian noisy laws, the tail estimate shows that taking \(R=R_N\) slightly larger than \(\sqrt{\log N}\) makes the full-sample and truncated procedures agree with high probability.
We refer to Appendix~\ref{subsec:pseudocode} for the precise algorithms.

% ====================================================
\section{Sequential posterior-mean recovery}
\label{sec:sequential-posterior-mean-recovery}
% ====================================================

We now state the recovery guarantee underlying the Stage~2 posterior-mean estimator.  Recall that the clean law is denoted by \(P_0\), and the observation model is
\[
Y=X+\sqrt{\tau}\,Z,
\qquad
X\sim P_0,\qquad Z\sim \mathcal N(0,I_d),
\qquad \tau=\sigma^2.
\]
Thus the noisy one-particle law is
\[
f_0=\gamma_\tau*P_0,
\qquad
\gamma_\tau(z)=(2\pi\tau)^{-d/2}e^{-|z|^2/(2\tau)}.
\]
For any candidate prior \(\nu\), define the Gaussian posterior-mean map
\begin{equation}\label{eq:main-posterior-mean-map}
m_\nu(y)
:=
\frac{\int_{\mathbb R^d} x\,\gamma_\tau(y-x)\,\nu(dx)}
{\int_{\mathbb R^d}\gamma_\tau(y-x)\,\nu(dx)}.    
\end{equation}
For \(\nu=P_0\), this is the Bayes-optimal square-loss denoiser,
\[
m_{P_0}(y)=\mathbb E[X\mid Y=y].
\]
The goal of the sequential construction is therefore to recover the function
\(m_{P_0}\) asymptotically from noisy samples.
Let
\[
G_\beta(z)=\Bigl(\frac{\beta}{2\pi}\Bigr)^{d/2}
e^{-\frac{\beta}{2}|z|^2}
=
\mathcal N(0,\beta^{-1}I_d),
\]
and define the exact Gaussian attention drift
\begin{equation}\label{eq:main-exact-gaussian-attention-drift1}
X_\beta[\mu](x)
:=
\frac1\beta\nabla\log(G_\beta*\mu)(x).
\end{equation}
Equivalently,
\begin{equation}\label{eq:main-exact-gaussian-attention-drift2}
X_\beta[\mu](x)=F_{\beta,\mu}(x)-x,
\qquad
F_{\beta,\mu}(x)
:=
\frac{\int z\,G_\beta(x-z)\,\mu(dz)}
{\int G_\beta(x-z)\,\mu(dz)}.
\end{equation}
Thus \(X_\beta[\mu]\) is precisely the continuous-depth Gaussian-attention
barycentric update.

Appendix~\ref{subsec:recoverable-admissible-priors}
defines a class
\(\mathcal A_\tau\subset\mathcal P_1(\mathbb R^d)\) of
\emph{recoverable admissible priors}.  In main-text terms, a prior
\(P_0\in\mathcal A_\tau\) is one whose noisy law \(f_0=\gamma_\tau*P_0\)
has a well-defined noncompact mean-field flow
\[
\partial_t f_t^\beta
+
\nabla\cdot\bigl(f_t^\beta X_\beta[f_t^\beta]\bigr)
=0,
\qquad
f_{t=0}^\beta=f_0,
\]
which is obtained as the \(R\to\infty\) limit of the corresponding hard-truncated
compactly supported flows, and which recovers the clean prior at a $\beta$-scaled denoising time, $T_\beta:=\frac{\beta\tau}{2}$,
\[
W_1(f_{T_\beta}^{\beta},P_0)\longrightarrow0
\qquad\text{as }\beta\to\infty .
\]
This class packages the two requirements needed for the theorem: propagation of
chaos after hard truncation, and deterministic recovery of the clean law in the
large-\(\beta\) denoising limit.

The following theorem is the main-text version of the hard-truncated recovery
result proved as Theorem~\ref{thm:recoverable-prior-hard-truncated-posterior-recovery}
in Appendix~\ref{sec:exact-gaussian-drift-hard-truncation-recovery}.

\begin{theorem}[Sequential posterior-mean recovery]
\label{thm:main-text-sequential-posterior-mean-recovery}
Let \(P_0\in\mathcal A_\tau\), and define
\[
f_0^{[R]}:=\frac{\mathbf 1_{B_R}f_0}{f_0(B_R)}.
\]
Initialize \(N\) particles independently from \(f_0^{[R]}\), and evolve them by
\[
\dot X_i(t)=X_\beta[\mu_t^{N,\beta,[R]}](X_i(t)),
\qquad
\mu_t^{N,\beta,[R]}:=\frac1N\sum_{j=1}^N\delta_{X_j(t)}.
\]
Then, for every bounded observation region \(\{|y|\le M\}\),
\begin{equation}\label{eq:main-sequential-posterior-mean-recovery}
\lim_{\beta\to\infty}
\limsup_{R\to\infty}
\limsup_{N\to\infty}
\mathbb E
\sup_{|y|\le M}
\left|
m_{\mu_{T_\beta}^{N,\beta,[R]}}(y)-m_{P_0}(y)
\right|
=0.
\end{equation}
Consequently, the same convergence holds in probability.
\end{theorem}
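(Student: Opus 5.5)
The plan is to reduce the statement to a stability estimate for the posterior-mean map $\nu\mapsto m_\nu$ in Wasserstein distance on compact query sets. Then I would chain the three limits using the two properties packaged in $\mathcal A_\tau$: propagation of chaos for the hard-truncated flow, and large-$\beta$ recovery of $P_0$ at time $T_\beta$.

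\textbf{Step 1 (stability of $m_\nu$).} For $|y|\le M$ and probability measures $\nu,\nu'$, write $m_\nu(y)=A_\nu(y)/Z_\nu(y)$ with
\[
Z_\nu(y)=\int\gamma_\tau(y-x)\,\nu(dx),
\qquad
A_\nu(y)=\int x\,\gamma_\tau(y-x)\,\nu(dx).
\]
On $|y|\le M$ the integrands $x\mapsto\gamma_\tau(y-x)$ and $x\mapsto x\gamma_\tau(y-x)$ are bounded and Lipschitz, uniformly in $y$, with constants depending on $M,\tau,d$. Hence
\[
\sup_{|y|\le M}\bigl(|Z_\nu-Z_{\nu'}|+|A_\nu-A_{\nu'}|\bigr)\le C\,W_1(\nu,\nu').
\]
The delicate point is the denominator. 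I need $\inf_{|y|\le M}Z_{P_0}(y)\ge c>0$, which holds since $P_0\in\mathcal P_1$ and so has mass on some ball $B_r$. Then $Z_\nu\ge c/2$ whenever $W_1(\nu,P_0)\le\delta$ for small $\delta$. On this good event,
\[
\sup_{|y|\le M}|m_\nu-m_{P_0}|\le C'\,W_1(\nu,P_0).
\]
On the bad event I bound $|m_\nu(y)|$ crudely. The particles stay in the closed convex hull of their initial positions, because the drift is barycentric and points inward, so they lie in $B_R$ and $|m_\nu|\le R$. Thus
\[
\mathbb E\sup_{|y|\le M}|m_\nu-m_{P_0}|\le C'\,\mathbb E\bigl[W_1\wedge\delta\bigr]+(R+|m_{P_0}|_\infty)\,\mathbb P(W_1>\delta),
\]
and the second term is killed at fixed $R$ once $N\to\infty$, provided $W_1\to0$ in probability.

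\textbf{Step 2 (the limits in order).} By the triangle inequality,
\[
W_1\bigl(\mu^{N,\beta,[R]}_{T_\beta},P_0\bigr)\le W_1\bigl(\mu^{N,\beta,[R]}_{T_\beta},f^{\beta,[R]}_{T_\beta}\bigr)+W_1\bigl(f^{\beta,[R]}_{T_\beta},f^\beta_{T_\beta}\bigr)+W_1\bigl(f^\beta_{T_\beta},P_0\bigr),
\]
where $f^{\beta,[R]}_t$ is the truncated mean-field flow started from $f_0^{[R]}$. The three terms are handled as follows.

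\begin{itemize}
\item \emph{First term, $N\to\infty$.} On $B_R$ the drift $X_\beta[\mu]$ is Lipschitz in $x$ and in $\mu$ (w.r.t.\ $W_1$), with constants depending on $\beta,R$, since $G_\beta$ is bounded below on $B_{2R}$. A Dobrushin/Gronwall coupling then transfers the initial empirical error $\mathbb E W_1(\mu_0^N,f_0^{[R]})\to0$ to time $T_\beta$ with a factor $e^{LT_\beta}$. That factor is fixed because $\beta$ and $R$ are held fixed at this stage. This is the propagation-of-chaos clause of $\mathcal A_\tau$.
\item \emph{Second term, $R\to\infty$.} Its $\limsup_R$ vanishes by the $R\to\infty$ stability clause defining $\mathcal A_\tau$.
\item \emph{Third term, $\beta\to\infty$.} It vanishes by the recovery clause defining $\mathcal A_\tau$.
\end{itemize}

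\textbf{Step 3 (assembling).} The order of limits requires care because the crude bad-event bound grows with $R$. I would therefore take $\limsup_N$ first, at fixed $R$ and $\beta$. Step 1 gives
\[
\limsup_N\mathbb E\sup_{|y|\le M}|\cdots|\le C'\bigl[W_1(f^{\beta,[R]}_{T_\beta},f^\beta_{T_\beta})+W_1(f^\beta_{T_\beta},P_0)\bigr]
\]
whenever this deterministic quantity is below $\delta/2$, since then the bad-event probability tends to $0$. The constant $C'$ is independent of $R$ and $\beta$, depending only on $M,\tau,P_0$. When the deterministic quantity exceeds $\delta/2$, the bound is worse, but that regime disappears after $\limsup_R$ and then $\beta\to\infty$. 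Taking these two limits gives $0$. Convergence in probability then follows from Markov's inequality.

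\textbf{Main obstacle.} I expect the hardest step to be Step 1's control of the denominator together with the $R$-dependent crude bound. This is what forces $N\to\infty$ to be taken first. If the clause in $\mathcal A_\tau$ only gives $\mathbb E W_1\to0$ rather than a uniform Lipschitz constant, I would verify propagation of chaos directly via the coupling argument above.
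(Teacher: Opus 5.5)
Your proof is correct. It uses the same three ingredients as the paper's proof (Theorem~\ref{thm:recoverable-prior-hard-truncated-posterior-recovery}):
\begin{itemize}
\item propagation of chaos at fixed $(\beta,R)$;
\item the $R\to\infty$ admissibility clause;
\item the $\beta\to\infty$ recovery clause.
\end{itemize}
Both proofs join these through $W_1$-continuity of $\nu\mapsto m_\nu$ on $B_M$, and both take $N\to\infty$ first because the only a priori bound on $m_{\mu^N}$ is $R$.

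The difference is in how the pieces are chained.
\begin{itemize}
\item \emph{The paper} applies the triangle inequality to the posterior-mean maps themselves, passing through $m_{f^{\beta,[R]}_{T_\beta}}$ and $m_{f^\beta_{T_\beta}}$. It invokes the purely qualitative continuity result (Proposition~\ref{prop:nc-posterior-mean-W1}) three times, against three different reference measures. The $N\to\infty$ step is isolated as the fixed-$\beta$ Theorem~\ref{thm:nc-admissible-particle-approximation}, where convergence in probability plus the bound $2R$ gives convergence in expectation.
\item \emph{You} put the triangle inequality at the $W_1$ level with the single reference $P_0$. You replace qualitative continuity by a local Lipschitz bound at $P_0$, with a good/bad split at a scale $\delta=\delta(M,\tau,d,P_0)$.
\end{itemize}

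Your route gives an explicit reduction: for all large $\beta$,
$\limsup_R\limsup_N\mathbb E\sup_{|y|\le M}|m_{\mu^{N,\beta,[R]}_{T_\beta}}-m_{P_0}|\le C'\,W_1(f^\beta_{T_\beta},P_0)$.
So any rate in the recovery clause transfers directly to the posterior mean. The paper's route gives modularity: a finite bound at every fixed $\beta$ and every intermediate time $t$. Your bound, by contrast, is vacuous whenever $W_1(f^\beta_{T_\beta},P_0)\ge\delta/2$; this is harmless only because of the outer limit $\beta\to\infty$.

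Two smaller points. First, in Definition~\ref{def:recoverable-admissible-prior-class}, $\mathcal A_\tau$ contains no propagation-of-chaos clause. The fallback you mention is therefore genuinely needed, and the paper supplies it in Theorem~\ref{thm:sc-exact-fixed-beta-poc-compact-support}. Second, your way of proving it is slightly simpler than the paper's Sznitman coupling. The empirical path $\mu^N_t$ is itself a fixed point of the characteristic map in Proposition~\ref{prop:sc-deterministic-wellposedness-stability}. Dobrushin stability then gives the pathwise bound $W_1(\mu^N_{T_\beta},f^{\beta,[R]}_{T_\beta})\le e^{2L_{\beta,R}T_\beta}W_1(\mu^N_0,f^{[R]}_0)$, so only initial empirical convergence is required. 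The paper's coupling proves $k$-marginal chaos, which this theorem does not use.
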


The order of limits is part of the statement: first \(N\to\infty\) at fixed
\((\beta,R)\), then \(R\to\infty\) at fixed \(\beta\), and finally
\(\beta\to\infty\).  Thus this is a sequential consistency theorem rather than a
joint scaling law in \((N,R,\beta)\).  It says that after evolving the noisy samples
by the exact Gaussian-attention dynamics up to depth \(T_\beta=\beta\tau/2\), the
posterior mean computed from the evolved empirical measure converges uniformly on
compact observation regions to the Bayes-optimal denoiser \(m_{P_0}\).

The class \(\mathcal A_\tau\) is nonempty.  In particular,
Appendix~\ref{sec:gaussian-prior-posterior-mean-recovery} proves that every Gaussian
prior
\[
P_0=\mathcal N(a,\Sigma_0),
\]
with \(a\in\mathbb R^d\) and \(\Sigma_0\) symmetric nonnegative semidefinite, belongs
to \(\mathcal A_\tau\).  Indeed, if
\[
f_0=\gamma_\tau*P_0=\mathcal N(a,\Gamma_0),
\qquad
\Gamma_0=\Sigma_0+\tau I_d,
\]
then the deterministic mean-field flow remains Gaussian:
\[
f_t^\beta=\mathcal N(a,\Gamma_t),
\qquad
\dot\Gamma_t=-2\Gamma_t(I+\beta\Gamma_t)^{-1}.
\]
At \(T_\beta=\beta\tau/2\), \(\Gamma_{T_\beta}\to\Sigma_0\), hence
\(f_{T_\beta}^\beta\to P_0\) in \(W_1\).  Therefore Gaussian priors satisfy the full
sequential posterior-mean recovery theorem.

For Gaussian noisy data, the hard-truncation loss is explicit.  If
\(f_0=\mathcal N(a,\Gamma_0)\) and
\(\lambda_+=\lambda_{\max}(\Gamma_0)\), then for \(R\ge 1+2|a|\),
\[
f_0(B_R^c)
\le
C(1+R)^d
\exp\!\left(-\frac{R^2}{8\lambda_+}\right).
\]
Therefore, for untruncated noisy samples \(Y_1,\ldots,Y_N\sim f_0\),
\[
\mathbb P(Y_1,\ldots,Y_N\in B_R)
\ge
1-
CN(1+R)^d
\exp\!\left(-\frac{R^2}{8\lambda_+}\right).
\]
Choosing \(R=R_N\) slightly larger than \(\sqrt{\log N}\) makes this loss probability vanish.  This relates the practical
full-sample construction Alg. \ref{alg:full-sample-sequential-posterior} to the theorem-certified truncated construction Alg. \ref{alg:truncated-sequential-posterior}.

% ====================================================
\section{Related work}
% ====================================================

\textbf{Regimes of in-context denoising.}
\citet{smart2025context} introduced in-context denoising for attention networks, showing that a single layer solves it when context tokens are clean. The present work extends this to the \emph{all-token corruption} regime, where every token is noisy and representations evolve jointly across depth, requiring the network to infer the prior from corrupted samples. Concurrent work by \citet{rosu2025_denoising} studies architectural equivalences between attention and denoising algorithms, including manifold denoising \citep{hein2006manifold, belkin2003laplacian}. %of this regime. 
Our analysis is complementary: a finite-sample empirical Bayes interpretation, a two-stage decomposition in which a long-range residual enables posterior averaging, and an explicit depth--noise relation $T^* = \sigma^2/2$.

\textbf{Dynamics of attention layers.}
A growing body of work analyzes multilayer attention as an interacting particle system, including mean-field limits ~\citep{geshkovski2023emergence, geshkovski2025mathematical, rigollet2025mean, bruno2025_neurips_meanfield, burger2025roysoc}, particle-transport formulations \citep{sander2022sinkformers}, and unrolled-denoising analyses \citep{wang2025attention_unrolled_denoising}. These works emphasize long-time clustering and metastability. We complement this perspective by tying particle dynamics to a \emph{collective} in-context denoising task with a finite horizon. Also, many of these works consider dynamics on a compact manifold, while we have to contend with an initial distribution supported on the whole of $\mathbb{R}^d$.

\textbf{Empirical Bayes and score-based denoising.}
Classical empirical Bayes methods study posterior estimation under unknown priors from repeated observations \citep{robbins1956empirical, miyasawa1961empirical, stein1981estimation, efron2011tweedie, JohnstoneSilverman2005, raphan2011_simoncelli}. Under Gaussian corruption, these estimators are closely connected to score-based denoising and diffusion formulations through Tweedie's formula \citep{robbins1956empirical, miyasawa1961empirical, efron2011tweedie, vincent2011ieee, ho2020denoising, song2021scorebased}. 
Kernel approximation of this connection \citep{FukunagaHostetler1975} can be represented by Gaussian attention, which is leveraged here as well as in recent work \citep{rosu2025_denoising, ilin2026discoformerplugindensityscore}. 
Related transport-based perspectives include constrained EB denoising via optimal transport \citep{jaffe2025constrained} and stochastic interpolants for generative modeling from corrupted observations \citep{modi2025generative}. Our contribution within this lineage is to identify a specifically in-context empirical Bayes role for multilayer attention, where depth performs finite-sample refinement of a particle prior and attention residuals implement posterior averaging against the refined representation.

\textbf{Transformers and empirical Bayes.}
Recent work shows that transformers can be trained to solve empirical Bayes problems in an in-context manner \citep{teh2025solving}.
More broadly, our collective in-context denoising formulation instantiates the perspective that transformer depth executes an implicit algorithm in the forward pass \citep{von2023transformers}.

% ====================================================
\section{Discussion}
% ====================================================
Our collective in-context denoising framework decomposes attention-based denoising into two stages: iterative refinement of a particle prior (Stage 1) and posterior averaging via a long-range attention residual (Stage 2). The cross-attention from the noisy input to a dynamically refined particle prior distinguishes empirical Bayes denoising from iterative score-based denoising \citep{rosu2025_denoising, bruno2025_neurips_meanfield} and from the clean-context setting of \citet{smart2025context}, where Stage 1 is unnecessary and Stage 2 recovers the exact Bayes posterior mean. In the population limit, one-step Tweedie estimators are themselves Bayes-optimal; depth in our framework therefore plays the specific role of correcting finite-sample error. Stage 1 alone is valuable for representation learning and distribution-level denoising \citep{hein2006manifold}.

This setting connects to but differs from diffusion models such as DDPM \citep{ho2020denoising} in two ways. First, our denoiser is nonparametric and context-dependent --- the score is estimated directly in-context via attention (no time-dependent score is learned). Second, where diffusion models traverse a full trajectory of intermediate distributions $\rho_t$ via a noise schedule, we target recovery from a single corruption level $\sigma^2$, for which a fixed-bandwidth kernel and finite horizon $T^* = \sigma^2/2$ suffice. The convergence of these two paradigms --- transformer-based sequence models and diffusion-style denoising --- is increasingly visible in practical architectures (see DiT \citep{Peebles_2023_ICCV_DiT}, SiT \citep{Ma2024_SiT_Albergo_Boffi_EVE}, and JiT \citep{li2026basicsletdenoisinggenerative}). Our framework offers one statistical account of why this convergence is principled, showing that attention natively implements a finite-sample empirical Bayes operator for iterative denoising.

A growing body of work interprets attention through the lens of associative memory and energy-based models \citep{ramsauer2021iclr, smart2025context}, drawing on ideas from statistical physics \citep{krotov2016dense}. From the perspective of a single attention head, however, it is not clear where the ``memories'' encoded in keys and values should come from. We show that a deep, single-head attention network can construct such memories \emph{in-context}: the particle prior refined by Stage 1 acts as a working memory that is itself shaped by collective dynamics, then read out by Stage 2 for posterior inference. This picture --- where the energy landscape is not fixed but emerges from coupled interactions among many agents --- has parallels in multicellular self-organization \citep{smart2023emergent}, and connects to recent looped- and energy-transformer formulations \cite{yang2024looped, saunshi2025reasoning, hoover2023energytransformer, dehmamy2026nrgpt, gladstone2026energybased}.
An open question is whether the finite denoising horizon analyzed here can inform depth selection for recurrent attention networks more broadly.

\paragraph{Outlook.} 
Several practical directions are natural follow-ups. Efficient approximations to the kernel sum --- such as Nystr\"om-style schemes \citep{xiong2021nystromformer, rosu2025_denoising} or fast Gaussian-kernel approximations \citep{greengard1991fast} --- could broaden applicability, while lower-dimensional embeddings analogous to latent diffusion \citep{Rombach_2022_CVPR} could improve convergence rates. 
Unlike standard settings, however, the kernel geometry here is not fixed: the particle prior and associated energy landscape evolve in-context with depth. Extending acceleration and compression schemes to this dynamically evolving collective representation remains an interesting problem.

More conceptually, the two-stage decomposition suggests an in-context amortized inference perspective: once the particle prior is refined through Stage 1, new queries can be processed cheaply via Stage 2 without recomputing the full dynamics. The value of cross-depth attention in our framework also resonates with recent empirical work showing that depth-wise attention residuals improve training stability and downstream performance \citep{kimiteam2026attentionresiduals}; our framework provides one statistical reason for this benefit, since the original corrupted input carries token-identity information that posterior inference preserves. More broadly, the empirical Bayes lens on attention --- in which depth refines a particle approximation to an unknown prior --- may offer insight into the dynamic latent representations that emerge in deep generative models more generally.

\subsection*{Limitations}

\textbf{Theoretical considerations.} Our analysis relies on structural assumptions on the prior, formalized through the admissible class $\mathcal{A}_\tau$ that may be restrictive in practice. The theory does not establish propagation of chaos in full generality, and instead proceeds via a truncated mean-field construction with sequential limits. In particular, we do not obtain finite-sample convergence rates, joint scaling laws in $(N, R, \beta)$, or guarantees for the practical untruncated algorithm.
The posterior-recovery theorem is proved only for priors in $\mathcal{A}_\tau$ 
Gaussian priors are verified as a concrete case, while heavy-tailed, singular, or more general priors remain open. The proof relies on a hard-truncated compact-support flow with sequential limits $N\to\infty$ at fixed $(\beta,R)$, then $R\to\infty$, and finally $\beta\to\infty$; although Gaussian tail bounds make truncation negligible with high probability for $R\gtrsim\sqrt{\log N}$. Extending these results to broader classes of priors, establishing finite-sample rates, and developing joint scaling laws are natural directions for future work.

\textbf{Architectural scope.} The analysis applies to a minimal attention architecture: single-head, scaled-identity weights, 
%($W_Q = W_K = \sqrt{\beta} I$, $W_V = I$), 
tied parameters across depth, and no MLP blocks or positional embedding. We further neglect layer norm which motivates our use of the Gaussian attention kernel \citep{chen2021skyformer}.
Beyond these and other design choices, practical transformers learn their enormous weight vectors by training on fixed datasets. We conjecture the empirical Bayes lens extends to such settings, with learned weights potentially implementing approximations to the kernel structure analyzed here. A rigorous treatment would first require extending the analysis to anisotropic weights (cf. \cite{bruno2025_neurips_meanfield, rosu2025_denoising}) which should then also vary with depth. It will be particularly interesting to characterize how MLP layers augment the finite-$N$ particle dynamics studied here and elsewhere.

\textbf{Extensions left to future work.} 
Several architectural extensions are not addressed here. 
Allowing parameters to vary across depth opens the door to noise-schedule designs from diffusion models. Multi-head architectures with distinct scales $\{\beta_h\}$ would enable multi-scale particle refinement. Our framework takes the noise level $\sigma^2$ as input, setting both the integration time and the Stage 2 scale $\beta_c = 1/\sigma^2$; inferring $\sigma^2$ from corrupted samples in-context is a separate challenge. Finally, the theoretical horizon $T^*=\sigma^2/2$ is a leading-order prediction. Required depth in finite-$N$ settings depends on $N$, $\beta$, and prior structure, motivating adaptive depth-selection schemes; such schemes could be based on input characteristics or leverage limited clean samples from the prior if available.

\paragraph*{Broader impacts:}
This work is primarily theoretical and studies attention dynamics in controlled synthetic settings. Potential positive impacts include improved understanding of transformer architectures and principled inference mechanisms. Potential negative impacts are indirect, as broader advances in generative modeling could be misused for tasks such as synthetic content generation.

\paragraph*{Acknowledgements}
We thank Alberto Bietti, Joan Bruna, Bob Carpenter, Suryanarayana Maddu, Ilya Nemenman, and Gautam Reddy for valuable discussions and feedback. M.S. acknowledges support from the Simons Foundation. 
A.V.M. and A.M.S. acknowledge financial and logistical support from the Center for Quantitative Biology, Rutgers University.

%\newpage

% -----
% The natbib package will be loaded for you by default. Citations may be author/year or numeric, as long as you maintain internal consistency. 
% -----
\bibliographystyle{plainnat}
%\bibliographystyle{unsrtnat}
%\bibliographystyle{unsrt}

%\bibliography{transformer_dynamics}

%%%%%%%%%%%%%%%%%%%%%%%%%%%%%%%%%%%%%%%%%%%%%%%%%%%%%%%%%%%%

\newpage

\appendix

% ====================================
\section{The informal argument for Empirical Bayes, Stage 1, via reverse diffusion}
\label{app:informal_argument}
% ====================================

Before we give the informal arguments for the Proposition \ref{prop:informal_reverse_diffusion_derivation}, let us indicate its relationship to the formal Theorem \ref{thm:main-text-sequential-posterior-mean-recovery}:
\begin{itemize}
\item \textbf{Proposition~\ref{prop:informal_reverse_diffusion_derivation} (Informal particle dynamics).}
    Provides a heuristic description of attention updates as a finite-step particle system,
    where each update corresponds to kernel-weighted averaging. This perspective connects
    attention to reverse diffusion, kernel regression, and associative memory dynamics.

    \item \textbf{Algorithm~\ref{alg:two_stage_multilayer_attn} (Discrete attention dynamics).}
    Instantiates the informal dynamics of Proposition~3.1 as a finite-depth, finite-sample
    procedure. It can be viewed as an Euler discretization of an underlying continuous-time
    particle flow, with step size $\eta$ and depth $L$ determining the effective integration horizon.

    \item \textbf{Algorithm~\ref{alg:full-sample-sequential-posterior} (Continuous-time flow).}
    Provides a continuum formulation of the dynamics underlying Algorithm~1, in which the
    empirical particle distribution evolves according to a Gaussian-attention drift
    $X_\beta[\mu] = \beta^{-1} \nabla \log (G_\beta \ast \mu)$, where $\mu$ is the empirical distribution of the particles. 

    \item \textbf{Algorithm~\ref{alg:truncated-sequential-posterior} (Truncated, theorem-certified dynamics).}
    Introduces the hard-truncated version of the empirical flow, obtained by restricting the
initial noisy sample to a radius-\(R\) ball and then evolving by the same Gaussian-attention
drift. The rigorous theorem is stated for \(N\) i.i.d. samples from the normalized truncated
law \(f_0^{[R]}\); the algorithmic filtering version is the practical finite-sample analogue.
Compact support is preserved by the flow, which makes the propagation-of-chaos analysis
available.

    \item \textbf{Theorem~\ref{thm:main-text-sequential-posterior-mean-recovery} (Posterior-mean recovery).}
   Establishes that, for recoverable admissible priors and under appropriate sequential limits the posterior mean computed from
the hard-truncated evolved empirical measure converges uniformly on compact observation
regions to the Bayes-optimal predictor \(m_{P_0}(y)=\mathbb E[X\mid Y=y]\).
\end{itemize}

Now we discuss the steps towards Proposition \ref{prop:informal_reverse_diffusion_derivation}.

\subsection{Noise addition as forward diffusion} 
Let $X_0 \sim P_0$ be a distribution on $\mathbb{R}^d$ with density $\rho_0$.
Consider the forward diffusion over the time interval $s \in [0,T]$:
\begin{equation} \label{rho:s}
\partial_s \rho_s = \Delta \rho_s,
\end{equation}
where $\rho_s$ is the density at time $s$, with the initial condition
$\rho_{s=0} = \rho_0$. Eq.~\eqref{rho:s} is solved by
\begin{equation} 
\rho_s = \rho_0 * \mathcal{N}(0, 2sI). 
\end{equation} 

\subsection{Backward-time formulation} 
We introduce backward time:
\begin{equation} 
t = T - s, \qquad f_t := \rho_{T-t}. 
\end{equation} 
Then $f_t$ satisfies 
\begin{equation} \label{eq:dynamics}
\partial_t f_t = -\Delta f_t = -\nabla \cdot \bigl(f_t \nabla \log f_t\bigr). 
\end{equation} 
Comparing Eq.~\eqref{eq:dynamics} with the continuity equation, $\partial_tf_t+\nabla (f_t v_t)=0$, gives the velocity field as $v_t(x) = \nabla \log f_t(x)$. Thus we may view the diffusive flow as the transport of a cloud
of particles with instantaneous velocities determined by $v_t = \nabla \log f_t$.
If a particle has position $Z_t$ at time $t$, with the probability density $f_t$, its trajectory follows the deterministic probability flow~\citep{song2021scorebased}:
\begin{equation} \label{eq:particleflow}
\dot Z_t = \nabla \log f_t(Z_t), 
\end{equation} 
which transports $f_0 = \rho_T$ backwards in time toward $f_T = \rho_0$. Under this formulation, the drift is governed by the score function $\nabla \log f_t$.
The set of trajectories $\{Z_t\}$ constitutes a particle approximation to the backward density evolution~\citep{del2013mean}, effectively acting as a normalizing flow~\citep{rezende2015variational} which maps the diffused measure back to the data distribution.

We introduce the Gaussian-kernel dynamics: 
$$\partial_tf_t = -\nabla\cdot \left(f_t\frac{\nabla(G_\beta*f_t)}{G_\beta*f_t}\right) = -\nabla\cdot\left(f_t\nabla\log(G_\beta*f_t)\right), \qquad t\ge 0$$
where $G_\beta$
is the Gaussian distribution with mean $0$ and covariance $\beta^{-1}I$. This is an intermediate model because the original backward-time dynamics is expressed in terms of the exact score field $\nabla \log f_t$, which depends on the unknown population density itself and is therefore not directly accessible to a finite attention mechanism. In this case, the velocity field is $v_t = \nabla \log(G_\beta * f_t)$; replacing $\nabla \log f_t$ with the smoothed score $\nabla \log (G_\beta * f_t)$ rewrites the denoising flow in terms of Gaussian local averages.

By Proposition~\ref{prop:sc-barycentric-drift}, the corresponding flow equation (Eq.~\eqref{eq:particleflow}) becomes
\begin{equation} \label{flow:baby}
\dot{Z_t} = \beta\left(F_{\beta,f_t}(Z_t)-Z_t\right),
\end{equation}
where
\begin{equation} \label{F:beta_rho}
F_{\beta,\rho}(y) = \frac{\int_{\mathbb R^d} x \exp\left(-\frac{\beta\|y-x\|^2}{2}\right)\rho(x)dx}{\int_{\mathbb R^d} \exp\left(-\frac{\beta\|y-x\|^2}{2}\right)\rho(x)dx}.
\end{equation}
When the observation-noise variance is \(\tau=\beta^{-1}\), $F_{\tau^{-1},\rho}=m_\rho$, the Gaussian posterior mean for prior \(\rho\).

Next, we treat the $f_t$'s as the empirical measure on the time-evolved particles in discrete time. Consider $N$ particles at time $t$ with positions $\{Z_t^{(i)}\}_{i=1}^N$ and the corresponding empirical measure $\displaystyle \nu_N^{(t)} = \frac1N \sum_{i=1}^N \delta_{Z_i^{(t)}}$. Then the particle positions at time $t+\eta/\beta$ are given by:
\begin{equation} \label{Z:onestep}
Z_{t+\eta/\beta}^{(i)} -Z_{t}^{(i)} = \eta\left(F_{\beta,\nu_N^{(t)}}(Z_{t}^{(i)})-Z_{t}^{(i)}\right).
\end{equation}
If the total time budget is $T$ and passing through one layer corresponds to moving $h=\eta/\beta$ in time, $L\eta/\beta=T$, so that the total number of layers is $L = T\beta/\eta$ and the layer-wise dynamics becomes
$$Z_{\ell+1}^{(i)} = (1-\eta)Z_{\ell}^{(i)} + \eta F_{\beta, \nu_N^{(\ell)}} (Z_{\ell}^{(i)}).$$
In this way, the Gaussian-kernel formulation preserves the denoising interpretation while making the dynamics compatible with an empirical particle approximation. It also regularizes the ideal backward-heat flow, which is formally anti-diffusive and unstable, by replacing the raw score with a modified one. Finally, in the large-$\beta$ regime, the Gaussian attention barycenter satisfies $F_{\beta,\rho}(x)-x \approx \beta^{-1} \nabla \log \rho(x)$, and the corresponding nonlocal PDE reduces, after rescaling time, to the original backward-heat equation.
Thus the Gaussian-kernel dynamics is meant to provide the correct bridge from the exact continuum denoising law to the finite-sample attention dynamics implemented via particles.

\subsection{Tweedie's formula} 
Let $\tilde X = X + \varepsilon$, where $\varepsilon \sim \mathcal{N}(0,\sigma^2 I_d)$ and $X \sim \rho$. Then 
$$
\mathbb{E}[X \mid \tilde X = \tilde x] = \tilde x - \sigma^2 \nabla \log \tilde\rho(\tilde x), 
$$
where $\mathbb{E}[X \mid \tilde X = \tilde x] = F_{\sigma^{-2},\tilde\rho} (\tilde x)$, and 
$$
\nabla \log \tilde\rho(\tilde x) = \frac{1}{\gamma^2}\bigl(\tilde x - \mathbb{E}[X \mid \tilde X = \tilde x]\bigr). 
$$
Thus the score is a denoising residual. 

\subsection{Local denoising along the diffusion} 
Fix backward time $t$ and let $s = T-t$. Over a short forward time increment $\delta>0$, 
$$
X_{s+\delta} = X_s + \varepsilon, \qquad \varepsilon \sim \mathcal{N}(0,2\delta I).
$$
Equivalently, 
$$
f_t = f_{t-\delta} * \mathcal{N}(0,2\delta I). 
$$
Applying Tweedie's formula yields the local identity 
$$
\nabla \log f_t(x) = \frac{1}{2\delta} \Bigl( x - \mathbb{E}[Y \mid X = x] \Bigr), 
$$
where $Y \sim f_{t-\delta}$ and $X = Y + \varepsilon \sim f_t$.

\subsection{Particle approximation} 
We discretize backward time: 
$$
t_\ell = \ell h, \qquad h = \frac{T}{L}. 
$$
Let $\{z_i^{(\ell)}\}_{i=1}^N$ approximate $f_{t_\ell}$. The reverse flow gives 
$$
z_i^{(\ell+1)} = z_i^{(\ell)} + h \nabla \log f_{t_\ell} (z_i^{(\ell)}). 
$$ 
Using the denoising identity, we instead write the update as 
$$ z_i^{(\ell+1)} = (1-\eta) z_i^{(\ell)} + \eta\, \mathbb{E}[Y \mid X = z_i^{(\ell)}], \qquad \eta = \frac{h}{2\delta}. 
$$

\subsection{Kernel approximation} 
Since $X = Y + \varepsilon$ with Gaussian noise, we have:
$$
\mathbb{E}[Y \mid X = z] = \frac{ \int y\, e^{-\frac{\|z-y\|^2}{4\delta}} f_{t-\delta}(y) dy }{ \int e^{-\frac{\|z-y\|^2}{4\delta}} f_{t-\delta}(y) dy }. 
$$
Approximating $f_{t-\delta}$ by $f_t$, with further approximation of $f_t$ with particles, yields 
$$
\mathbb{E}[Y \mid X = z] \approx \frac{ \sum_j e^{-\frac{\|z-z_j^{(\ell)}\|^2}{4\delta}} z_j^{(\ell)} }{ \sum_j e^{-\frac{\|z-z_j^{(\ell)}\|^2}{4\delta}} }. 
$$
Let $\beta = {1}/{2\delta}$. The update becomes
\begin{equation} 
z_i^{(\ell+1)} = (1-\eta) z_i^{(\ell)} + \eta \frac{ \sum_j e^{-\frac{\beta}{2}\|z_i^{(\ell)} - z_j^{(\ell)}\|^2} z_j^{(\ell)} }{ \sum_j e^{-\frac{\beta}{2}\|z_i^{(\ell)} - z_j^{(\ell)}\|^2} }. 
\end{equation} 
In this replacement of the integral by the sum,  the condition $\tfrac{N\bar\rho}{\beta^{d/2}} \gg 1$ becomes important (here, $\bar{\rho}$ denotes a typical density, e.g. $\bar{\rho} \simeq \int \rho(x)^2 dx$). Since the Gaussian centered around a particle position is effectively supported by a spherical volume $\sim\beta^{-d/2}$, this condition corresponds to the number of other particles in this volume being large, making the replacement by the sum justifiable.

%%%%%%%%%%%%%%%%%%%%%%%%%%%

\section{Stability and well-definedness of the Gaussian kernelized score field}

\subsection{Stability under smooth score field}
The advantage of replacing the raw score $\nabla \log f_t$ by $\nabla \log (K_\beta * f_t)$ is stability and well-definedness. It  holds for any smooth positive kernel $K_\beta$.

Let $K_\beta:\mathbb R^d\to(0,\infty)$ be a $\mathcal C^1$ probability density, and define
$$u_f(x):=(K_\beta*f)(x),\qquad
b_f(x):=\nabla \log (K_\beta*f)(x)
=\frac{\nabla(K_\beta*f)(x)}{(K_\beta*f)(x)}.$$

The raw score $\nabla \log f$ is unstable because it may be undefined when $f$ vanishes, and differentiation amplifies microscopic oscillations. It vanishes at most points even if $f$ is the empirical measure. If $f$ is empirical, $\nabla \log f$ is not a classical vector field.

If $K_\beta$ is $\mathcal C^1$ and positive, then for probability measure with density $f$,
$$K_\beta*f\in \mathcal C^1(\mathbb R^d),\qquad \nabla(K_\beta*f)=(\nabla K_\beta)*f,$$
and $K_\beta*f>0$. Hence $b_f$ is globally well-defined.

We also have the following heuristic properties:
\begin{enumerate}
\item On any compact set $\mathcal B_R$,
$$b_f-b_g =
\frac{\nabla u_f-\nabla u_g}{u_f}
+
\nabla u_g\left(\frac{1}{u_f}-\frac{1}{u_g}\right),$$
so if $u_f$ and $u_g$ are bounded below on $\mathcal B_R$, then
$$ \|b_f-b_g\|_{L^\infty(B_R)} \lesssim \|\nabla K_\beta*f-\nabla K_\beta*g\|_{L^\infty(B_R)} + \|K_\beta*f-K_\beta*g\|_{L^\infty(B_R)}. $$
Thus stability of the score field reduces to stability of two linear convolutions.
\item If $K_\beta$ and $\nabla K_\beta$ are Lipschitz, then for probability measures $f,g$,
$$\|K_\beta*f-K_\beta*g\|_{L^\infty(B_R)} + \|\nabla K_\beta*f-\nabla K_\beta*g\|_{L^\infty(B_R)} \lesssim W_1(f,g),$$
hence $ \|b_f-b_g\|_{L^\infty(B_R)}\lesssim W_1(f,g)$ whenever the denominators stay uniformly positive on $\mathcal B_R$.
\item For empirical measures
$$\nu^N=\frac1N\sum_{j=1}^N \delta_{z_j},$$
one has
$$(K_\beta*\nu^N)(x)=\frac1N\sum_{j=1}^N K_\beta(x-z_j),
\qquad
\nabla(K_\beta*\nu^N)(x)=\frac1N\sum_{j=1}^N \nabla K_\beta(x-z_j),$$
so
$$\nabla\log(K_\beta*\nu^N)(x)
=
\frac{\sum_{j=1}^N \nabla K_\beta(x-z_j)}
{\sum_{j=1}^N K_\beta(x-z_j)}.$$
Thus the field is well-defined for empirical laws and varies continuously under perturbations of the particle cloud.
\item Finally, if $K_\beta\in \mathcal C^2$, then on any compact region where $K_\beta*f$ is bounded below, the field $b_f$ is locally Lipschitz in space, since
$$\nabla b_f
=
\frac{\nabla^2(K_\beta*f)}{K_\beta*f}
-
\frac{\nabla(K_\beta*f)\otimes \nabla(K_\beta*f)}{(K_\beta*f)^2}.$$
Hence the ODE
$$\dot Z_t=\nabla \log (K_\beta*f_t)(Z_t)$$
is locally well-posed.
\end{enumerate}

In summary, for any smooth positive kernel $K_\beta$, the map $$f\mapsto \nabla \log (K_\beta*f)$$ is a stable regularization of the raw score: it is well-defined for empirical measures, continuous under perturbations of the law, and suitable for particle and mean-field limits. 

The Gaussian is special only because it adds further exact structures, not because basic stability is unique to it. This is shown in the next subsection.

\subsection{Importance of the Gaussian kernel for smoothing}

Continuing from the previous discussion, let us say that we use a smooth density $K_\beta$. 

We focus on the following two properties:
\begin{enumerate}[label=(A.\Roman*)]
\item \label{ass:p1} There is a positive definite matrix $C_\beta $ (for every positive $\beta$) such that $\mathbb E\left[ X \mid X + e_\beta = y\right] = y + C_\beta \nabla \log (K_\beta * f) (y)$ for every prior $f$ on $X$, where $e_\beta \sim K_\beta$.
\item \label{ass:p2} There is a positive definite matrix $C_\beta$ such that for every empirical measure $$\nu^N = \frac{1}{N}\sum_{i=1}^N\delta_{z_j},$$  the field looks like $$\nabla \log(K_\beta * \nu_N) (x) = C_\beta^{-1}\left[\sum_{j=1}^N w_j(x) z_j - x\right],\qquad w_j(x) = \frac{K_\beta(x-z_j)}{\sum_{i=1}^NK_\beta(x-z_i)}.$$ 
\end{enumerate}

Both of these expressions appear in the paper and the clean formulae help us derive the attention dynamics. Now, assuming either one of \ref{ass:p1} or \ref{ass:p2} above holds, then $K_\beta$ is forced to be a Gaussian with covariance matrix $C_\beta$.

Now that we have $K_\beta$ is Gaussian, we still need $K_\beta * f \to f$ (either in $L^1$ convergence or almost everywhere convergence) if $\beta\to \infty$. This forces all eigenvalues of $C_\beta$ (which are all taken as $\beta^{-1}$ in the paper) to approach $0$ as $\beta\to\infty$. %Again, this can be proven formally. 
This still leaves us with a Gaussian with a covariance matrix $C_\beta$. For simplicity, we use an isotropic Gaussian with $C_\beta = r_\beta I$, where $r_\beta=\beta^{-1}$. Note that by the last paragraph, $\lim_{\beta\to\infty}r_\beta=0$.

% ================================================
\section{Attention architecture for two-stage collective denoising}
\label{app:attention_arch_details}
% ================================================

Here we detail the multilayer attention architecture (Alg. \ref{alg:two_stage_multilayer_attn}) that implements the two stages in Fig. \ref{fig:setup}: 

\textbf{Stage 1.} Given a collection of noisy input tokens $\{\tilde x_i\}_{i=1}^N$, 
the role of Stage 1 is to construct a particle approximation to the prior using Eq.~\eqref{upd:discr}.
This approximation serves as the model's
internal representation of the unobserved data distribution that supports downstream inference. 
Eq.~\eqref{upd:discr} can be interpreted as applying a Gaussian self-attention kernel \citep{chen2021skyformer} with scaled-identity weights $W_Q = W_K = \sqrt \beta I_d$ and $W_V = I_d$. Repeated application of the attention operation produces a dynamic internal representation $Z(\ell) = \{ z_i^{(\ell)} \}_{i=1}^N$, which we use interchangeably with its $N \times d$ matrix form when convenient.
$Z(\ell)$ serves as the model's particle estimate of the prior at depth $\ell$; it provides an estimate of $\hat \rho_0$ at depth $L$ in Fig. \ref{fig:setup}.

The first term of Eq. (\ref{upd:discr}) differs slightly from the canonical attention update $z_i \leftarrow z_i + \textrm{Attn}(W_V Z , W_K Z, W_Q z_i)$. Since $\eta \in (0,1)$, this \emph{leaky} residual update ensures forward invariance of the convex hull of $\{z_i^{(\ell)}\}_{i=1}^N$. Assuming that $\eta$ is fixed across layers (tied), the continuous-depth limit, $\eta \to 0$, $L \to \infty$ with $\eta L=T$ fixed, can be viewed as a neural ODE \citep{chen2018neural}:
\begin{equation}
\label{eq:z_dynamics_discrete_N_flow}
  \dot z_i = -z_i + \sum_{j=1}^N a_{ij}(Z) z_j,
\end{equation}
which is integrated from $t=0$ to $t=T$ with the initial condition $z_i(0) = \tilde x_i ~\forall\, i$. In practice, we use Eq.~\eqref{upd:discr} with a fixed step size, corresponding to the forward Euler discretization of Eq.~\eqref{eq:z_dynamics_discrete_N_flow}.

\textbf{Stage 2.} Once the approximate prior is obtained, the denoised estimates $\hat x_i$ are computed for each noised token $\tilde x_i$ using cross-attention with scale $\beta_c = 1/\sigma^2$ between the corrupted input $z_i(0)=\tilde x_i$ and the refined representation after $\ell$ steps, $Z(\ell)$, where $\ell$ is chosen to approximate the theoretical stopping depth $L$:
\begin{equation}
\label{eq:stage2_cross_attn}
    \hat x_i(\ell) = m_{\beta_c, Z(\ell)}(\tilde x_i) = \frac
    {\sum_j \exp
    \bigl(-\tfrac{\beta_c \|z_j^{(\ell)} - \tilde x_i \|^2}{2}\bigr) 
    z_j(\ell)
    }
    {\sum_k \exp 
      \bigl(-\tfrac{\beta_c\|z_k^{(\ell)} - \tilde x_i \|^2}{2}\bigr)
    }.
\end{equation}
As an architectural element, this is enabled by an additional skip connection from the input to layer $\ell$ (Fig. \ref{fig:setup}(a), dashed path).
This connects to recent work identifying the empirical benefits of residual pathways in attention models, with particular emphasis on AttnRes connections to the embedding layer  \citep{kimiteam2026attentionresiduals}.
Here, the preservation of the original noisy token plays a specific statistical role: it serves as the query in the final posterior estimation step. 
The long-range skip connection implements the separation between data (query) and learned prior (memory) required by the empirical Bayes formulation.

\subsection{Remarks on the two-stage architecture and numerics}

\textbf{Remark (Choice of prior)}. Gaussian mixture models admit closed-form Bayes-optimal estimators (Appendix \ref{sec:app_bayes_opt_GMM_formula}), which provides an unambiguous reference point against which to validate Stage 2 inference. Low-dimensional instantiations permit direct visualization of the internal energy landscape (Fig. \ref{fig:stage1_vs_stage2_energy_mechanism}), allowing us to illustrate the two-stage mechanism. We emphasize however that our framework is not specific to low-dimensional Gaussian mixtures.

\textbf{Remark (Finite discretization and parameter scaling).}
In the finite-depth setting, the parameters $(\sigma, L, \beta)$ cannot be chosen independently if the dynamics are to approximate the reverse diffusion flow. Fixing a noise level $\sigma^2$ determines the total (finite) time horizon $T^* = \sigma^2/2$. Discretizing with $L$ steps gives a step size $h = T^*/L = \sigma^2/(2L)$. Considering \emph{infinitesimal} denoising between adjacent time slices in the reverse diffusion introduces a parameter $\delta = (\beta \sigma^2)^{-1} \ll 1$, requiring $\beta$ to be sufficiently large, and induces an effective step size $\eta={h}/{\sigma^2 \delta}$. Substituting $\delta$ gives
\begin{equation}
\eta = \beta h = \frac{\beta \sigma^2}{2L} < 1,
\end{equation}
which should not be treated as an independent parameter. In particular, consistent approximation of the continuous reverse flow corresponds to regimes with large $L$, large $\beta$, and $\beta/L \to 0$, so that $\eta \to 0$ while $\delta \to 0$. In practice, the numerics fix large $L_0$ and integrate to $3 L_0$ using $h = T^*/L_0$.

\textbf{Remark (Annealing schedule).}
Allowing depth-dependent weights $\eta(t), \beta(t)$ induces a time-inhomogeneous flow, analogous to the noise (or annealing) schedules commonly used in diffusion models. 
In this interpretation, $\beta(t)$ controls the locality of the particle-based score approximation via $\delta(t) = (\beta(t)\sigma^2)^{-1}$, while the discrete flow step $h(t)=\eta(t) / \beta(t)$ satisfies  $\int_0^T h(t)\,dt \approx \sigma^2/2$. In the finite-depth setting, this implies $\sum_{t=1}^L \frac{\eta(t)}{\beta(t)} \approx T$.
We emphasize that the denoising behavior is governed not by $\beta(t)$ or $\eta(t)$ individually, but by their ratio through this finite-time horizon constraint.

\textbf{Remark (Convex hull invariance).} The convex hull of the particle set $\{z_i(t)\}_{i=1}^N$ is positively invariant under Eq.~\eqref{upd:discr} for any $\eta \in (0,1)$, $\beta > 0$. To see this, observe that each particle updates its position as a convex combination of its current position and the attention-weighted barycenter $\sum_j a_{ij}(t) z_j(t)$, which itself lies in the convex hull.

\textbf{Remark (Layer-norm).} We study simplified dynamics without layer-norm/spherical projection. Off sphere, distances are relevant and the Gaussian kernel is appropriate. If one projects to the sphere between attention layers, then the Gaussian kernel reduces to softmax dot-product attention.

\textbf{Remark (One-shot Tweedie-like predictor).}
Note that Eq.~\eqref{eq:stage2_cross_attn} can in principle be used without iterating Stage 1, yielding an empirical Tweedie-like estimator, 
\begin{equation} 
\hat x_i(0) 
= 
\tilde x_i + 
\frac
  {\sum_j b_{ij} 
  (\tilde x_j - \tilde x_i)
  }
  {\sum_k b_{ik}}
\approx
\tilde x_i + \sigma^2 \nabla \log p_{\tilde X}(\tilde x_i),
%\nabla \log f_t(x) = \frac{1}{2\delta} \Bigl( x - \mathbb{E}[Y \mid X = x] \Bigr), 
\end{equation} 
via the Gaussian kernel representation of the score. In the large $N$ limit, this estimator coincides with the Bayes-optimal posterior mean; in practically relevant finite-sample settings, it provides a baseline for the role of depth in the recovery of the uncorrupted data distribution. 

\textbf{Remark (Tweedie \& Finite-$N$).} The gap between one-shot and iterative estimators arises from finite-sample effects; in the population limit, both coincide with the Bayes-optimal estimator.

\textbf{Remark (Depth/stepsize).} While we intentionally use small steps (many layers) to match the theory, fewer layers could be used in practice, provided the step size per layer is increased according to $T^*$.

\textbf{Remark (Posterior sufficiency and clustering).} 
While Stage 1 is motivated by recovery of the clean prior $\rho_0$, we empirically observe that accurate posterior inference can be achieved even when the particle distribution concentrates into discrete clusters. This suggests that, for Stage 2 cross-attention, it may not be necessary to reconstruct the full prior density. Instead, it appears sufficient to concentrate particle mass near high-density regions of the data, forming a discrete surrogate representation that preserves the relevant local energy landscape. This provides a possible functional interpretation of the clustering behavior emphasized in prior analyses of attention dynamics, suggesting that it could enable sparse in-context representations rather than a geometric artifact or pathology.

\subsection{Additional numerics}
\label{app:additional_numerics}

Figure \ref{fig:app_stage1_vs_stage2_bayes_optimal} indicates that careful tuning of $\beta$ and sufficient depth can allow self-attention kernel denoising (Stage 1) alone to approach Bayes optimality. However, consistent with the empirical Bayes approach, it further emphasizes that the combination of depth and cross-attention to the input (Stage 2) provides a more robust mechanism for posterior averaging across regimes, requiring significantly less depth or context length to achieve the same error. 
This decomposition naturally points towards an amortized inference design: once the model refines the particle prior (Stage 1), it can be ``cached" so that novel queries can be processed cheaply (Stage 2) via a gradient descent step on Eq. (\ref{eq:energy_DAM}). 

\begin{figure*}[h!]
\centering
\includegraphics[width=0.99\textwidth]{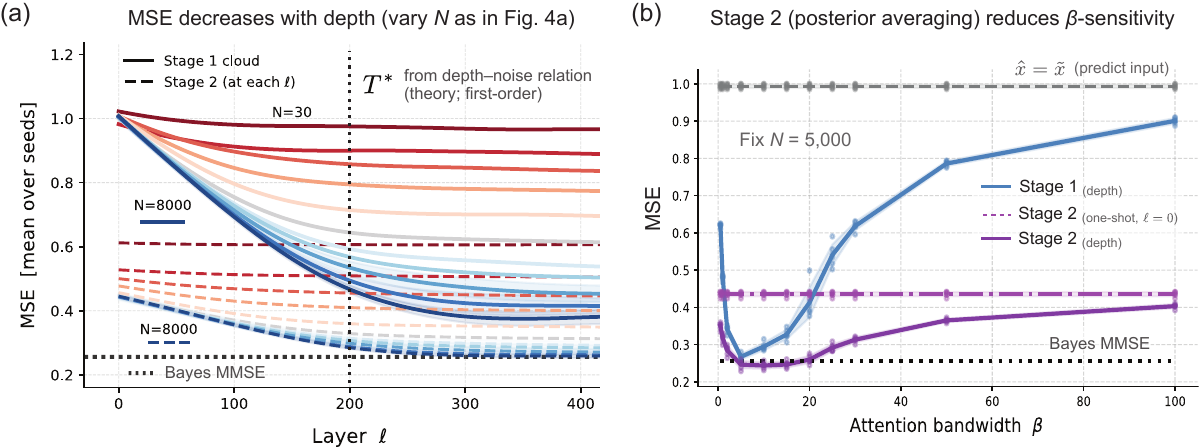}
\caption{
  (a) Mean squared error (MSE) as a function of depth $\ell$ for varying context size $N$. Increasing depth improves performance up to a finite horizon $T^*$ predicted by the depth–noise relation, beyond which gains saturate. Larger context sizes yield better particle priors and lower error, approaching the Bayes MMSE. (b) MSE as a function of attention bandwidth $\beta$ for fixed $N$. While Stage 1 (particle refinement) is sensitive to kernel bandwidth, Stage 2 (posterior averaging) substantially reduces this sensitivity, yielding robust performance across a wide range of $\beta$. These results highlight the complementary roles of depth (prior refinement) and cross-attention (posterior inference). Same parameters used as Fig. \ref{fig:stage1_vs_stage2_bayes_optimal}.
  }
\label{fig:app_stage1_vs_stage2_bayes_optimal}
\end{figure*}

Figure \ref{fig:app_stage1_vs_stage2_bayes_optimal}(a) shows the evolution of MSE with depth (i.e., in-context during a single forward pass) for varying context sizes $N$. Consistent with the empirical Bayes interpretation, refinement of the particle approximation to the prior facilitates improved posterior estimates. The improvement saturates near a finite depth $T^*$, in agreement with the predicted depth–noise relationship. Larger context sizes reduce finite-sample error and allow the estimator to approach the Bayes optimal bound more closely.

Figure \ref{fig:app_stage1_vs_stage2_bayes_optimal}(b) examines sensitivity to the attention bandwidth $\beta$. We emphasize that, for moderate noise strength, there should be no expectation that the particle prior alone be used for $x$-pred denoising. Posterior averaging (Stage 2) is the correct step for this task in the framework, and the empirics indicate its general robustness. Indeed, while Stage 1 performance depends relatively strongly on $\beta$ (consistent with Fig. \ref{fig:gaussian_linear_variance_decay}(b) in the main text), posterior averaging substantially mitigates this sensitivity.

% ------
\section{Posterior mean for Gaussian mixture prior}
\label{sec:app_bayes_opt_GMM_formula}
% ------

We recall the posterior mean for Gaussian mixture priors in order to make explicit its connection to the attention-based estimator used in the main text (see e.g. \citet{bishop2006pattern} for background). The point-mass case arises as a limiting regime.

Recall that under squared error loss (MSE), the Bayes-optimal estimator is the posterior mean
\[
\hat{x}(y) 
  = \mathbb{E}[X \mid \tilde X = y] 
  = \int x\, p_{X|\tilde X}(x \mid y)\, dx.
\]

Under Gaussian mixture priors, $\rho_0 = \sum_{i=1}^K \pi_i \mathcal{N}(\mu_i, \Sigma_i)$, this expression has a closed form. 
For observations corrupted by additive Gaussian noise, $\tilde X = X + \epsilon\,\,$ with $\epsilon \sim \mathcal{N}(0, \sigma^2 I_d)$,
the posterior distribution $p(X \mid \tilde X = y)$ is also a mixture, but with observation-dependent weights:
\[
w_i(y)
:= P(\textrm{component}=i \mid y)
=
\frac{
\pi_i \, \mathcal{N}(y \mid \mu_i, \Sigma_i + \sigma^2 I_d)
}{
\sum_{j=1}^K \pi_j \, \mathcal{N}(y \mid \mu_j,\,\Sigma_j + \sigma^2 I_d)}.
\]

Evaluating the posterior mean 
$\int x\, p_{X|\tilde X}(x \mid y)\, dx$ yields:
\begin{equation}
\label{eq:posterior_mean_GMM}
\mathbb{E}[X \mid \tilde X = y]
=
\sum_{i=1}^K w_i(y)\,
\Bigl[
\mu_i + \Sigma_i (\Sigma_i + \sigma^2 I_d)^{-1}(y - \mu_i)
\Bigr],
\end{equation}

which is a weighted combination of component-wise shrinkage estimators. The weights are determined by the likelihood of each mixture component under the noisy observation $y$. 
Intuitively, the estimate moves the observation toward nearby component means, combining them according to their likelihood and shrinking the displacement according to the signal-to-noise ratio.
E.g., for a single isotropic component with zero mean and covariance $\Sigma = a^2 I_d$, this reduces to the familiar shrinkage estimator $\mathbb{E}[X \mid \tilde X = y] = \frac{a^2}{a^2 + \sigma^2}\, y$.

\textbf{Point-mass mixture.}
In the limiting case where the prior is a point mass mixture, $\rho_0 = \sum_{i=1}^K \pi_i \delta_{\mu_i}$, the posterior weights reduce to 
$w_i(y) \propto \pi_i \exp\!\left(-\frac{\|y - \mu_i\|^2}{2\sigma^2}\right)$ and the posterior mean becomes 
\[
\mathbb{E}[X \mid \tilde X = y]
=
\sum_{i=1}^K w_i(y)\, \mu_i
=
\frac{
\sum_{i=1}^K \pi_i \exp\!\left(-\frac{\|y - \mu_i\|^2}{2\sigma^2}\right)\, \mu_i
}{
\sum_{j=1}^K \pi_j \exp\!\left(-\frac{\|y - \mu_j\|^2}{2\sigma^2}\right)
}.
\]
Thus, in the point-mass limit, the posterior mean reduces exactly to a Gaussian kernel-weighted average over support points, coinciding with the attention operation in Algorithm~\ref{alg:two_stage_multilayer_attn} and yielding a probabilistic interpretation of finite-sample attention as posterior averaging under a discrete prior.

\textbf{Bayes MMSE.}
Eq. (\ref{eq:posterior_mean_GMM}) defines the Bayes optimal estimator for the $x$-prediction MSE objective when the prior is a known Gaussian mixture. As this is the best estimate one could make for $\hat x$ when the prior is known, this serves as the key baseline in Fig.~\ref{fig:stage1_vs_stage2_bayes_optimal}. In contrast, our framework operates in the regime where the prior itself is unknown and must be jointly estimated from corrupted observations.

% --- Local notation conventions for Soumya appendix ---
% Expectation: use \E{...} when there is no subscript.
% For conditional/subscripted expectations use \Ex[...]{...}.
\newcommand{\Ex}[2][]{\mathbb{E}_{#1}\!\left[#2\right]}

% Posterior mean map, consistent with main paper:
\newcommand{\postmean}[1]{m_{#1}}

% Gaussian-attention barycenter / finite-beta denoising map:
\newcommand{\attmean}[1]{F_{\beta,#1}}

\section{Exact Gaussian Drift, Hard Truncation, and Sequential Posterior-Mean Recovery}
\label{sec:exact-gaussian-drift-hard-truncation-recovery}

\subsection{Pseudocode: ODE-based continuous-time sequential posterior-mean estimator}
\label{subsec:pseudocode}

We record the continuous-depth analogues of the two-stage estimator in Algorithm~\ref{alg:two_stage_multilayer_attn}.  The notation is chosen to match the main text: the observed noisy tokens are \(\tilde x_i\), the evolving particles are \(z_i\), and \(y\) denotes a query point.  To denoise the original prompt one takes \(y=\tilde x_i\) for each input token.  The observation-noise variance is denoted by \(\tau\) in the analysis below; it is the same quantity as \(\sigma^2\) in the main text, so \(T_\beta=\beta\tau/2=\beta\sigma^2/2\).  Algorithm~\ref{alg:full-sample-sequential-posterior} is the full empirical flow corresponding to the finite-sample attention dynamics, while Algorithm~\ref{alg:truncated-sequential-posterior} is the compact-support version used in the proof for fixed \(R\).

\begin{algorithm}[h]
\caption{Full-sample continuous-depth posterior-mean estimator}
\label{alg:full-sample-sequential-posterior}
\begin{algorithmic}[1]
\Require Noisy tokens \(\tilde x_1,\ldots,\tilde x_N\sim f_0=\gamma_\tau*P_0\), inverse temperature \(\beta\), terminal time \(T_\beta=\beta\tau/2\), query \(y\)
\State Initialize \(z_i(0)\gets \tilde x_i\) and \(\mu_t^N:=\frac1N\sum_{j=1}^N\delta_{z_j(t)}\)
\State Evolve, for \(0\le t\le T_\beta\) and \(i=1,\ldots,N\),
\[
\dot z_i(t)=X_\beta[\mu_t^N](z_i(t))
=
\frac{\sum_{j=1}^N z_j(t)G_\beta(z_i(t)-z_j(t))}
{\sum_{j=1}^N G_\beta(z_i(t)-z_j(t))}
-z_i(t).
\]
\State Output the posterior mean against the refined particle prior:
\[
\widehat m_N(y):=
m_{\mu_{T_\beta}^N}(y)
=
\frac{\sum_{i=1}^N z_i(T_\beta)\gamma_\tau(y-z_i(T_\beta))}
{\sum_{i=1}^N \gamma_\tau(y-z_i(T_\beta))}.
\]
\end{algorithmic}
\end{algorithm}

\begin{algorithm}[H]
\caption{Empirical hard-truncation continuous-depth posterior-mean estimator}
\label{alg:truncated-sequential-posterior}
\begin{algorithmic}[1]
\Require Noisy tokens \(\tilde x_1,\ldots,\tilde x_N\sim f_0=\gamma_\tau*P_0\), radius \(R\), inverse temperature \(\beta\), terminal time \(T_\beta=\beta\tau/2\), query \(y\)
\State Retain the tokens in the radius-\(R\) ball and relabel them as
\[
\{z_1(0),\ldots,z_{N_R}(0)\}:=\{\tilde x_i:\ |\tilde x_i|\le R\}.
\]
\If{\(N_R=0\)}
\State Increase \(R\) or resample.
\EndIf
\State Set \(\mu_t^{N_R,\beta,[R]}:=\frac1{N_R}\sum_{j=1}^{N_R}\delta_{z_j(t)}\)
\State Evolve, for \(0\le t\le T_\beta\) and \(i=1,\ldots,N_R\),
\[
\dot z_i(t)=X_\beta[\mu_t^{N_R,\beta,[R]}](z_i(t))
=
\frac{\sum_{j=1}^{N_R} z_j(t)G_\beta(z_i(t)-z_j(t))}
{\sum_{j=1}^{N_R} G_\beta(z_i(t)-z_j(t))}
-z_i(t).
\]
\State Output the posterior mean against the truncated refined particle prior:
\[
\widehat m_{N_R,R}(y):=
m_{\mu_{T_\beta}^{N_R,\beta,[R]}}(y)
=
\frac{\sum_{i=1}^{N_R} z_i(T_\beta)\gamma_\tau(y-z_i(T_\beta))}
{\sum_{i=1}^{N_R} \gamma_\tau(y-z_i(T_\beta))}.
\]
\end{algorithmic}
\end{algorithm}
%\newpage

\subsection{Mathematical preliminaries}
This section proves a sequential particle approximation theorem for Gaussian-prior
posterior means under the exact Gaussian logarithmic drift.  The argument has three
steps.  First, at fixed \(\beta\), we establish the compact-support mean-field theory,
including the barycentric form of the drift, invariant-ball estimates, deterministic
well-posedness, stability, and pointwise-in-time propagation of chaos.  Second, we
formulate a noncompact admissibility criterion based on hard truncation and show that
it transfers compact-support particle approximation to noncompact data.  Third, we
introduce an admissible-recoverable class of priors and verify that Gaussian priors
belong to it.

We will use the following background results: the Kantorovich--Rubinstein duality formula for \(W_1\)
\cite[Particular Case~5.16]{Villani2009OptimalTransport}, the characterization of
Wasserstein convergence by weak convergence plus moment convergence
\cite[Definition~6.8 and Theorem~6.9]{Villani2009OptimalTransport}, the continuity-equation
formulation of absolutely continuous Wasserstein curves
\cite[Sections~8.1--8.3]{AmbrosioGigliSavare2008GradientFlows}, the Dobrushin
fixed-point and stability method for Vlasov-type mean-field equations
\cite[main Theorem, Propositions~2--4, and Section~6]{Dobrushin1979Vlasov}, the standard coupling method
for propagation of chaos and the nonlinear-process viewpoint
\cite[Chapter~I, Sections~1--2]{Sznitman1991PropagationChaos}
\cite[Section~4.1]{ChaintronDiez2022PropagationChaosI}, the empirical \(W_1\) estimate obtained from
\cite[Theorem~1, with \(p=1\)]{FournierGuillin2015EmpiricalWasserstein}, the Bihari--Osgood form of the generalized Bellman lemma \cite[Sections~3--4]{Bihari1956GronwallGeneralization}.

We shall also use the following elementary form of Gronwall's inequality.  If
\(u:[0,T]\to[0,\infty)\) is continuous and
\[u(t)\le a(t)+L\int_0^t u(s)\,ds,\qquad 0\le t\le T,
\]
where \(L\ge0\) and \(a:[0,T]\to[0,\infty)\) is nondecreasing, then
\[u(t)\le a(t)e^{Lt},\qquad 0\le t\le T.
\]
In particular, if \(u(t)\le L\int_0^t u(s)\,ds\), then \(u\equiv0\).  The only
nonlinear variant used below is the Osgood--Bihari argument in
Theorem~\ref{thm:weak-strong-uniqueness-gaussian}.

All noncompact particle statements below are sequential: first \(N\to\infty\) at fixed
\((\beta,R)\), then \(R\to\infty\) at fixed \(\beta\), and finally \(\beta\to\infty\).

All existence assertions are made on a prescribed finite time interval.  For compactly
supported data this causes no loss, because the invariant-ball and Lipschitz estimates
below give global-in-time characteristic solutions on every finite horizon.  For general
noncompact data we do not assert a general global theory; instead, admissibility is a
finite-horizon hypothesis.  In the Gaussian case this hypothesis is verified directly by an
explicit positive-definite covariance flow.

\subsection{Fixed-\texorpdfstring{$\beta$}{beta} mean-field theory for the exact Gaussian drift}
\label{sec:selfcontained-compact-support-posterior-mean}

Throughout we write \(\mathcal P(\R^d)\) for the set of Borel probability
measures on \(\R^d\), and
\[
\mathcal P_1(\R^d):=\left\{\mu\in\mathcal P(\R^d):\;
\int_{\R^d}|x|\,\mu(dx)<\infty\right\}.
\]
For \(\mu,\nu\in\mathcal P_1(\R^d)\), we denote by \(W_1(\mu,\nu)\) the
\(1\)-Wasserstein distance.  If \(E\subset\R^d\) and \(F:E\to\R^k\), we write
\[
\Lip_E(F)
:=
\sup_{\substack{x,x'\in E\\x\ne x'}}
\frac{|F(x)-F(x')|}{|x-x'|}
\]
for the Lipschitz constant of \(F\) on \(E\).  When \(E=\R^d\), we write simply
\[
\Lip(F):=\Lip_{\R^d}(F).
\]
If \(F=F(x,\theta)\) depends on a spatial variable \(x\in\R^d\) and on an auxiliary
parameter \(\theta\), then
\[
\Lip_x F(\cdot,\theta)
\]
denotes the Lipschitz constant of the map \(x\mapsto F(x,\theta)\), with \(\theta\)
held fixed.

We shall use the Kantorovich--Rubinstein formula \cite[Particular Case~5.16]{Villani2009OptimalTransport}
\begin{equation}
\label{eq:sc-KR}
W_1(\mu,\nu)
=
\sup_{\substack{\varphi:\R^d\to\R\\ \Lip(\varphi)\le 1}}
\int_{\R^d}\varphi(x)\,(\mu-\nu)(dx).
\end{equation}
For \(R>0\), we write
\[
B_R:=\{x\in\R^d:\ |x|\le R\}
\]
for the closed Euclidean ball of radius \(R\) centered at the origin, and
\[
\mathcal P(B_R):=\{\mu\in\mathcal P(\R^d):\ \supp\mu\subset B_R\}.
\]

On \(\mathcal P(B_R)\), weak convergence and \(W_1\)-convergence agree by
\cite[Theorem~6.9]{Villani2009OptimalTransport}; since \(B_R\) is compact,
\(\mathcal P(B_R)\) is weakly compact, hence \((\mathcal P(B_R),W_1)\) is compact
and therefore complete.

\subsubsection{Notation and basic objects}
\label{subsec:exact-drift-notation}

The compact-support results in this section are stated for initial laws already supported in a fixed ball. No truncation is used in their formulation. Truncations enter only in Section~\ref{sec:noncompact-admissibility}, where noncompact initial laws are approximated by compactly supported laws.

Throughout this section,  $|\cdot|$ refers to Euclidean vector norm. For \(\beta>0\), set
\[
G_\beta(z):=\Bigl(\frac{\beta}{2\pi}\Bigr)^{d/2}e^{-\frac{\beta}{2}|z|^2},
\qquad z\in\R^d.
\]
For \(\mu\in\mathcal P_1(\R^d)\), define
\[
D_\beta[\mu](x):=(G_\beta*\mu)(x)=\int_{\R^d}G_\beta(x-y)\,\mu(dy).
\]
Since \(G_\beta>0\), the logarithmic drift
\begin{equation}
\label{eq:sc-exact-drift-def}
X_\beta[\mu](x):=\frac1\beta\frac{\nabla(G_\beta*\mu)(x)}{(G_\beta*\mu)(x)}
\end{equation}
is well defined for every \(x\in\R^d\). For a fixed observation-noise variance \(\tau>0\), define the Gaussian posterior-mean map
\begin{equation}
\label{eq:sc-posterior-mean-def}
m_\mu(y)
:=
\frac{\int_{\R^d} x\,\gamma_\tau(y-x)\,\mu(dx)}{\int_{\R^d} \gamma_\tau(y-x)\,\mu(dx)},
\qquad
\gamma_\tau(z):=(2\pi\tau)^{-d/2}e^{-|z|^2/(2\tau)}.
\end{equation}

For a matrix \(A\in\mathbb R^{d\times d}\), we write
\[
\|A\|_{\mathrm{HS}}
:=
\bigl(\operatorname{Tr}(A^\top A)\bigr)^{1/2}
=
\left(\sum_{i,j=1}^d A_{ij}^2\right)^{1/2}
\]
for its Hilbert--Schmidt, equivalently Frobenius, norm.

\subsubsection{Posterior means under \(W_1\)-convergence}
\label{subsec:posterior-mean-noncompact-W1}

\begin{proposition}[Local \(W_1\)-continuity of the Gaussian posterior mean]
\label{prop:nc-posterior-mean-W1}
Fix \(\tau>0\).  Let \(\nu_n,\nu\in\mathcal P_1(\R^d)\), and assume
\[
W_1(\nu_n,\nu)\to0.
\]
Then, for every \(M<\infty\),
\[
\sup_{|y|\le M}|m_{\nu_n}(y)-m_\nu(y)|\to0.
\]
\end{proposition}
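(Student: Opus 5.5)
Write $m_\mu(y)=A_\mu(y)/Z_\mu(y)$ with
\[
A_\mu(y):=\int x\,\gamma_\tau(y-x)\,\mu(dx),
\qquad
Z_\mu(y):=\int \gamma_\tau(y-x)\,\mu(dx).
\]
The plan is to prove uniform convergence on \(B_M\) separately for the numerator and the denominator, and to bound the denominator away from zero uniformly in \(n\) and \(y\in B_M\).

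\textbf{Step 1: the denominator.} The map \(x\mapsto\gamma_\tau(y-x)\) is bounded and Lipschitz, with a Lipschitz constant \(L_\tau\) independent of \(y\). By the Kantorovich--Rubinstein formula \eqref{eq:sc-KR},
\[
\sup_y|Z_{\nu_n}(y)-Z_\nu(y)|\le L_\tau W_1(\nu_n,\nu)\to0.
\]

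\textbf{Step 2: the lower bound.} Since \(W_1(\nu_n,\nu)\to0\) gives tightness (the first moments are uniformly bounded), Markov's inequality supplies \(K\) with \(\nu_n(B_K)\ge 1/2\) for all \(n\). On \(B_K\times B_M\) we have \(\gamma_\tau(y-x)\ge(2\pi\tau)^{-d/2}e^{-(M+K)^2/(2\tau)}\), so
\[
Z_{\nu_n}(y)\ge c_M>0
\]
uniformly in \(n\) and \(|y|\le M\). The same bound holds for \(\nu\).

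\textbf{Step 3: the numerator (the main obstacle).} The map \(\varphi_y(x)=x\,\gamma_\tau(y-x)\) is unbounded only through the factor \(x\). However, \(|x|\gamma_\tau(y-x)\) is bounded for \(|y|\le M\), and its gradient is bounded uniformly in \(y\in B_M\). Concretely,
\[
\nabla_x\bigl(x\gamma_\tau(y-x)\bigr)=\gamma_\tau(y-x)\,I+x\otimes\tfrac{(y-x)}{\tau}\gamma_\tau(y-x),
\]
and this is bounded by \(C(1+|x|)(1+|x-y|)e^{-|x-y|^2/2\tau}\le C_M\), using \(|x|\le|x-y|+M\). Hence each coordinate of \(\varphi_y\) is \(C_M\)-Lipschitz uniformly in \(|y|\le M\), and Kantorovich--Rubinstein again gives
\[
\sup_{|y|\le M}|A_{\nu_n}(y)-A_\nu(y)|\le \sqrt d\,C_M W_1(\nu_n,\nu).
\]
The one point needing care is exactly this uniform Lipschitz bound. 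It would fail for the raw moment map \(x\mapsto x\), but it holds here because the Gaussian weight absorbs the linear growth. This is also where \(\mathcal P_1\) is used, since it guarantees that the integrals are finite.

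\textbf{Step 4: combining.} Write
\[
m_{\nu_n}-m_\nu=\frac{A_{\nu_n}-A_\nu}{Z_{\nu_n}}+A_\nu\,\frac{Z_\nu-Z_{\nu_n}}{Z_{\nu_n}Z_\nu}.
\]
We have \(\sup_{B_M}|A_\nu|<\infty\), since \(|x|\gamma_\tau(y-x)\) is bounded. Together with Steps 1–3 this yields
\[
\sup_{|y|\le M}|m_{\nu_n}(y)-m_\nu(y)|\le C'_M W_1(\nu_n,\nu)\to0,
\]
which is the claimed local uniform convergence, in fact with a linear rate.
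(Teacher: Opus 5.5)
Your proof is correct and follows the paper's argument: you control the numerator and the denominator uniformly on $B_M$ by Kantorovich--Rubinstein, using that $x\mapsto\gamma_\tau(y-x)$ and $x\mapsto x\gamma_\tau(y-x)$ are bounded and uniformly Lipschitz for $|y|\le M$, and then apply the quotient identity with a positive lower bound on the denominator. The only minor difference is how you get that lower bound. You obtain it uniformly in $n$ from the uniformly bounded first moments via Markov's inequality, whereas the paper bounds $Z_\nu$ below on $B_M$ by continuity and lets uniform convergence transfer the bound to $Z_{\nu_n}$; your version yields the small bonus of an explicit linear rate $C'_M\,W_1(\nu_n,\nu)$.
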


\begin{proof}
Write
\[
Z_\lambda(y):=\int\gamma_\tau(y-x)\,\lambda(dx),
\qquad
N_\lambda(y):=\int x\gamma_\tau(y-x)\,\lambda(dx).
\]
For \(|y|\le M\), the functions \(x\mapsto\gamma_\tau(y-x)\) are globally Lipschitz
with a uniform Lipschitz constant.  Therefore
\[
\sup_{|y|\le M}|Z_{\nu_n}(y)-Z_\nu(y)|
\le C_{\tau,M}W_1(\nu_n,\nu)\to0.
\]
Similarly, each coordinate of \(x\mapsto x\gamma_\tau(y-x)\) is bounded and globally
Lipschitz uniformly for \(|y|\le M\), because polynomial factors are dominated by the
Gaussian. Hence
\[
\sup_{|y|\le M}|N_{\nu_n}(y)-N_\nu(y)|\to0.
\]
Since \(Z_\nu\) is continuous and strictly positive on the compact set \(B_M\), it has a
positive lower bound there.  Uniform convergence of numerator and denominator then gives
uniform convergence of the quotient \(N_{\nu_n}/Z_{\nu_n}\to N_\nu/Z_\nu\) on \(B_M\).
\end{proof}

\subsubsection{Exact barycentric form of the drift}

\begin{proposition}[Exact drift as a Gaussian barycenter minus the identity]
\label{prop:sc-barycentric-drift}
For $\mu\in\mathcal P_1(\R^d)$, define
\[
A_\beta[\mu](x):=\int_{\R^d}y\,G_\beta(x-y)\,\mu(dy),
\qquad
\attmean{\mu}(x):=\frac{A_\beta[\mu](x)}{D_\beta[\mu](x)}.
\]
Then, for every $x\in\R^d$,
\begin{equation}
\label{eq:sc-drift-barycentric}
X_\beta[\mu](x)=\attmean{\mu}(x)-x.
\end{equation}
In particular, if $\mu\in\mathcal P(B_R)$, then
\[
\attmean{\mu}(x)\in \overline{\co}(\supp\mu)\subset B_R
\qquad\text{for every }x\in\R^d.
\]
Moreover, if $\beta=\tau^{-1}$, then
\begin{equation}
\label{eq:sc-posterior-mean-and-drift}
X_{\tau^{-1}}[\mu](y)=m_\mu(y)-y
\qquad (y\in\R^d).
\end{equation}
\end{proposition}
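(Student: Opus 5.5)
The plan is to compute the gradient of the Gaussian convolution directly and read off the barycentric form. First I would justify differentiation under the integral sign. Fix $x$ and a neighborhood $|x'-x|\le 1$. There,
\[
\nabla_x G_\beta(x'-y)=-\beta(x'-y)G_\beta(x'-y),
\]
and it is bounded in absolute value by $\beta\,(|x|+1+|y|)\,(\beta/2\pi)^{d/2}$. This bound is $\mu$-integrable because $\mu\in\mathcal P_1(\R^d)$, so dominated convergence gives $D_\beta[\mu]\in C^1$ with
\[
\nabla D_\beta[\mu](x)=-\beta\int (x-y)G_\beta(x-y)\,\mu(dy)=-\beta\bigl(x D_\beta[\mu](x)-A_\beta[\mu](x)\bigr).
\]
By the same domination, $A_\beta[\mu](x)$ is finite. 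We also have $D_\beta[\mu](x)>0$ because $G_\beta>0$. Dividing by $\beta D_\beta[\mu](x)$ in the definition \eqref{eq:sc-exact-drift-def} then yields $X_\beta[\mu](x)=A_\beta[\mu](x)/D_\beta[\mu](x)-x=\attmean{\mu}(x)-x$, which is \eqref{eq:sc-drift-barycentric}.

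For the convex-hull claim, note that $\attmean{\mu}(x)$ is the mean of the probability measure
\[
\pi_x(dy):=G_\beta(x-y)\,\mu(dy)/D_\beta[\mu](x).
\]
This measure is equivalent to $\mu$ and is supported on $\supp\mu$. The mean of a probability measure with finite first moment lies in the closed convex hull of its support. To see this, suppose the mean $m$ were outside. By Hahn--Banach there is a linear $\ell$ and a constant $c$ with $\ell(m)>c\ge\ell(y)$ for all $y$ in the closed convex hull. Integrating the second inequality against $\pi_x$ gives $\ell(m)\le c$, a contradiction. If $\supp\mu\subset B_R$, convexity and closedness of $B_R$ give $\overline{\co}(\supp\mu)\subset B_R$.

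Finally, for $\beta=\tau^{-1}$ we have $G_{\tau^{-1}}=\gamma_\tau$ identically, since both equal $(2\pi\tau)^{-d/2}e^{-|z|^2/(2\tau)}$. Hence $\attmean{\mu}=m_\mu$ by comparing with \eqref{eq:sc-posterior-mean-def}, and \eqref{eq:sc-posterior-mean-and-drift} follows from \eqref{eq:sc-drift-barycentric}.

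The only step requiring any care is the domination argument for differentiating under the integral with merely a first-moment assumption. Linear growth in $y$ times a bounded Gaussian factor is exactly what $\mathcal P_1$ controls, so I expect no real obstacle.
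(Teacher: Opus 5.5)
Your proposal is correct and follows the paper's proof: use $\nabla G_\beta(z)=-\beta zG_\beta(z)$, divide by $D_\beta[\mu](x)>0$, identify $F_{\beta,\mu}(x)$ as the barycenter of the tilted probability measure $G_\beta(x-y)\mu(dy)/D_\beta[\mu](x)$ supported on $\supp\mu$, and note $G_{\tau^{-1}}=\gamma_\tau$. You also justify two steps the paper takes for granted: differentiation under the integral, and the barycenter-in-closed-convex-hull fact via separation. The domination does not even need the first moment, since $|z|G_\beta(z)$ is bounded.
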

\begin{proof}
Since \(\nabla G_\beta(z)=-\beta zG_\beta(z)\),
\[
\frac1\beta\nabla(G_\beta*\mu)(x)
=
-\int_{\mathbb R^d}(x-y)G_\beta(x-y)\,\mu(dy).
\]
Dividing by \(D_\beta[\mu](x)>0\) gives
\[
X_\beta[\mu](x)
=
\frac{\int yG_\beta(x-y)\,\mu(dy)}{\int G_\beta(x-y)\,\mu(dy)}-x
=
\attmean{\mu}(x)-x.
\]
The normalized measure
\[
\frac{G_\beta(x-y)}{D_\beta[\mu](x)}\,\mu(dy)
\]
is a probability measure supported on \(\supp\mu\), so its barycenter lies in
\(\overline{\co}(\supp\mu)\).  If \(\beta=\tau^{-1}\), then \(G_\beta=\gamma_\tau\), and
the identity becomes \(X_{\tau^{-1}}[\mu]=m_\mu-\mathrm{Id}\).
\end{proof}

\subsubsection{Compact-support geometry and invariant balls}

\begin{proposition}[Invariant ball for the exact drift]
\label{prop:sc-invariant-ball}
Fix $\beta>0$ and $R>0$.
\begin{enumerate}
\renewcommand{\labelenumi}{(\roman{enumi})}
\item If $\mu\in\mathcal P(B_R)$, then for every $x\in\R^d$,
\begin{equation}
\label{eq:sc-radial-inward-estimate}
\langle x,X_\beta[\mu](x)\rangle\le R|x|-|x|^2.
\end{equation}
In particular, $\langle x,X_\beta[\mu](x)\rangle\le 0$ whenever $|x|\ge R$.
\item Let $(\mu_t)_{t\in[0,T]}\subset\mathcal P(B_R)$ be measurable, and let $x_t$ solve
\[
\dot x_t=X_\beta[\mu_t](x_t),\qquad x_0\in B_R.
\]
Then $x_t\in B_R$ for every $t\in[0,T]$.
\item If the initial data of the exact $N$-particle system
\[
\dot X_t^{i,N}=X_\beta[\mu_t^N](X_t^{i,N}),
\qquad
\mu_t^N:=\frac1N\sum_{j=1}^N\delta_{X_t^{j,N}},
\qquad i=1,\dots,N,
\]
all belong to $B_R$, then every particle stays in $B_R$ for all later times.
\end{enumerate}
\end{proposition}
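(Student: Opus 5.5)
The plan is to reduce everything to the barycentric identity of Proposition~\ref{prop:sc-barycentric-drift}, which gives \(X_\beta[\mu](x)=\attmean{\mu}(x)-x\) with \(\attmean{\mu}(x)\in\overline{\co}(\supp\mu)\subset B_R\) whenever \(\mu\in\mathcal P(B_R)\).

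\textbf{Part (i).} Take the inner product of this identity with \(x\):
\[
\langle x,X_\beta[\mu](x)\rangle=\langle x,\attmean{\mu}(x)\rangle-|x|^2 .
\]
Since \(|\attmean{\mu}(x)|\le R\), Cauchy--Schwarz gives \(\langle x,\attmean{\mu}(x)\rangle\le R|x|\), which is exactly \eqref{eq:sc-radial-inward-estimate}. The right-hand side \(|x|(R-|x|)\) is nonpositive once \(|x|\ge R\), which gives the second claim.

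\textbf{Part (ii).} Let \(u(t)=|x_t|^2\), which is absolutely continuous along a Carathéodory solution. Using (i), for a.e.\ \(t\),
\[
\dot u(t)=2\langle x_t,X_\beta[\mu_t](x_t)\rangle\le 2|x_t|\,(R-|x_t|).
\]
Suppose the trajectory left \(B_R\). Then there would be a last time \(t_0\) with \(|x_{t_0}|=R\) and an interval \((t_0,t_1]\) on which \(|x_t|>R\). On that interval \(\dot u\le 0\) a.e., so \(u(t_1)\le u(t_0)=R^2\), which is a contradiction.

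A cleaner way to write the same argument is to set \(v(t)=(|x_t|-R)_+^2\). One checks that \(\dot v\le 0\) a.e., since \(\dot v=\frac{(|x_t|-R)_+}{|x_t|}\dot u\le 0\) wherever \(|x_t|>R\) and \(v=0\) elsewhere. Together with \(v(0)=0\) and \(v\ge 0\), this forces \(v\equiv0\).

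\textbf{Part (iii).} This is the main point requiring care, because of circularity. To apply (ii) we need \(\mu_t^N\in\mathcal P(B_R)\) for all \(t\), and that is exactly what we are trying to prove. I would not try to show invariance of \(B_R\) directly. Instead I would first prove that the convex hull \(K_0:=\co\{X_0^{1,N},\dots,X_0^{N,N}\}\subset B_R\) is forward invariant.

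The \(N\)-particle vector field is smooth on \((\R^d)^N\): it is a ratio of smooth functions with positive denominator, hence locally Lipschitz. So a unique local solution exists.

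Next, fix a unit vector \(e\) and let \(h(t)=\max_i\langle e,X_t^{i,N}\rangle\). Each \(\attmean{\mu_t^N}(X_t^{i})\) is a convex combination of the current particles, so its \(e\)-component is at most \(h(t)\). Hence for every index \(i\) attaining the maximum,
\[
\tfrac{d}{dt}\langle e,X^i_t\rangle=\langle e,\attmean{\mu_t^N}(X^i_t)\rangle-\langle e,X^i_t\rangle\le h(t)-h(t)=0 .
\]
Because the maximum runs over finitely many \(C^1\) functions, its upper Dini derivative satisfies \(D^+h\le 0\), and therefore \(h\) is nonincreasing. Taking the intersection of these half-space constraints over all \(e\) shows that every \(X_t^{i,N}\) stays in \(K_0\subset B_R\).

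Alternatively, one can run (ii) on a maximal interval where all particles lie in \(B_{R'}\) for \(R'>R\) and close with a continuity argument. The convex-hull route avoids that.

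Boundedness of the trajectories then rules out blow-up, so the local solution is global. Along it \(\mu_t^N\in\mathcal P(B_R)\), which also makes the statement consistent with (ii) applied to each particle.
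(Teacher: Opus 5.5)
Your proof is correct. Parts (i) and (ii) coincide with the paper's argument. Part (i) is the barycentric bound $|F_{\beta,\mu}(x)|\le R$ combined with Cauchy--Schwarz. Part (ii) is a first-exit argument for $|x_t|^2$; your integrated form and the $(|x_t|-R)_+^2$ version are the a.e.\ rendering, for Carath\'eodory solutions, of the paper's ``radial derivative is strictly negative outside $B_R$''.

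Part (iii) is where you genuinely differ. The paper disposes of it in one line: ``the same argument applied to each particle, since the empirical measure is supported in $B_R$ as long as the particles are.'' That is a bootstrap on (ii). Read literally, it has exactly the circularity you point out: once a particle leaves $B_R$, $\mu_t^N\notin\mathcal P(B_R)$, so (i) no longer controls the excursion. The bootstrap therefore needs an extra step to close. One option is to apply (i) with the moving radius $r(t)=\max_i|X_t^{i,N}|$, for which $\mu_t^N\in\mathcal P(B_{r(t)})$ holds automatically, and show $D^+r^2\le 0$. Another is to work in $B_{R'}$ with $R'>R$ and use the comparison $\frac{d}{dt}|X_t^{i,N}|\le R'-|X_t^{i,N}|$, which keeps every particle a positive distance from $\partial B_{R'}$. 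Your aside about $B_{R'}$ needs this quantitative comparison; invoking (ii) at radius $R'$ by itself is circular in the same way.

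Your support-function argument with $h_e(t)=\max_i\langle e,X_t^{i,N}\rangle$ sidesteps the bootstrap entirely. It uses only the fact that $F_{\beta,\mu_t^N}(X_t^{i,N})$ is a convex combination of the current particles, not (i). It also proves more:
\begin{itemize}
\item the convex hull $\co\{X_0^{1,N},\dots,X_0^{N,N}\}$ is forward invariant, which is the continuous-time analogue of the paper's convex-hull remark for \eqref{upd:discr};
\item the particle system exists globally, which the proof of Theorem~\ref{thm:sc-exact-fixed-beta-poc-compact-support} later attributes to this proposition.
\end{itemize}
The paper's route is shorter and reuses (ii); yours is self-contained and rigorous as written.
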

\begin{proof}
By Proposition~\ref{prop:sc-barycentric-drift},
\[
X_\beta[\mu](x)=\attmean{\mu}(x)-x,
\qquad
\attmean{\mu}(x)\in B_R
\]
whenever \(\mu\in\mathcal P(B_R)\).  Hence
\[
\langle x,X_\beta[\mu](x)\rangle
=
\langle x,\attmean{\mu}(x)\rangle-|x|^2
\le R|x|-|x|^2,
\]
which proves (i).  Parts (ii) and (iii) follow by the standard first-exit argument applied
to \(t\mapsto |x_t|^2/2\): on the exterior region \(\{|x|>R\}\), the radial derivative is
strictly negative.  Therefore a trajectory starting in \(B_R\) cannot exit \(B_R\).  The
particle assertion is the same argument applied to each particle, since the empirical
measure is supported in \(B_R\) as long as the particles are.
\end{proof}

\subsubsection{Lipschitz control on a bounded support class}

\begin{proposition}[Exact drift is Lipschitz on an invariant ball]
\label{prop:sc-drift-lipschitz-bounded-support}
Fix $\beta>0$ and $R>0$. Then there exists $L_{\beta,R}>0$ such that for all
$x,x'\in B_R$ and all $\mu,\nu\in\mathcal P(B_R)$,
\[
|X_\beta[\mu](x)-X_\beta[\nu](x')|
\le
L_{\beta,R}\bigl(|x-x'|+W_1(\mu,\nu)\bigr).
\]
\end{proposition}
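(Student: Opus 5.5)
By Proposition~\ref{prop:sc-barycentric-drift}, $X_\beta[\mu](x)=\attmean{\mu}(x)-x$. The identity part contributes at most $|x-x'|$, so it suffices to prove
\[
|\attmean{\mu}(x)-\attmean{\nu}(x')|\le C_{\beta,R}\bigl(|x-x'|+W_1(\mu,\nu)\bigr),
\qquad x,x'\in B_R,\ \mu,\nu\in\mathcal P(B_R).
\]
I will use the triangle inequality
\[
|\attmean{\mu}(x)-\attmean{\nu}(x')|\le|\attmean{\mu}(x)-\attmean{\mu}(x')|+|\attmean{\mu}(x')-\attmean{\nu}(x')|
\]
and treat the spatial term and the measure term separately.

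\textbf{Lower bound on the denominator.} For $x\in B_R$ and $y\in\supp\mu\subset B_R$ we have $|x-y|\le 2R$. Hence
\[
D_\beta[\mu](x)\ge c_{\beta,R}:=\Bigl(\tfrac{\beta}{2\pi}\Bigr)^{d/2}e^{-2\beta R^2}>0,
\]
uniformly in $\mu\in\mathcal P(B_R)$. Likewise $|A_\beta[\mu](x)|\le R\,\|G_\beta\|_\infty$ and $D_\beta[\mu](x)\le\|G_\beta\|_\infty$. This uniform positivity on the compact class is the key input. It is the only place compactness is essential, and it is why the constant degrades like $e^{2\beta R^2}$.

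\textbf{Spatial Lipschitz bound.} For fixed $\mu$, I compute the Jacobian of $\attmean{\mu}$. Since $\nabla_x G_\beta(x-y)=-\beta(x-y)G_\beta(x-y)$, the Jacobian is $\beta$ times the covariance matrix of the tilted probability measure $G_\beta(x-y)\mu(dy)/D_\beta[\mu](x)$. That measure is supported in $B_R$, so its covariance has operator norm at most $R^2$. This gives
\[
\Lip_{B_R}(\attmean{\mu})\le\beta R^2,
\]
uniformly in $\mu$. This step needs no denominator bound at all. For the pure spatial term one can equivalently bound numerator and denominator derivatives and divide by $c_{\beta,R}$.

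\textbf{Measure Lipschitz bound.} Fix $x'\in B_R$. The map $y\mapsto G_\beta(x'-y)$ is globally Lipschitz, with constant $\|\nabla G_\beta\|_\infty\lesssim\beta^{(d+1)/2}$. The map $y\mapsto yG_\beta(x'-y)$ is Lipschitz on $B_R$ with constant at most $\|G_\beta\|_\infty+R\|\nabla G_\beta\|_\infty$. Since both measures live on $B_R$, it suffices to test against functions that are Lipschitz on $B_R$. Extend such a function by McShane to a function on $\R^d$ with the same Lipschitz constant, without changing the integrals. Kantorovich--Rubinstein \eqref{eq:sc-KR} then gives
\[
|D_\beta[\mu](x')-D_\beta[\nu](x')|+|A_\beta[\mu](x')-A_\beta[\nu](x')|\le C'_{\beta,R}\,W_1(\mu,\nu).
\]
Write the quotient difference as
\[
\frac{A_\mu}{D_\mu}-\frac{A_\nu}{D_\nu}=\frac{A_\mu-A_\nu}{D_\mu}+A_\nu\,\frac{D_\nu-D_\mu}{D_\mu D_\nu}.
\]
Using $D\ge c_{\beta,R}$ and $|A_\nu|\le R\|G_\beta\|_\infty$, this yields
\[
|\attmean{\mu}(x')-\attmean{\nu}(x')|\le C''_{\beta,R}\,W_1(\mu,\nu).
\]

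\textbf{Conclusion.} Combining the two bounds, the statement holds with $L_{\beta,R}:=\max\{1+\beta R^2,\;C''_{\beta,R}\}$.

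I expect the only delicate point to be the measure term: it requires the uniform positive lower bound on $D_\beta$ and a careful reduction from Lipschitz-on-$B_R$ test functions to the global duality formula. Both are handled by the compact-support assumption.
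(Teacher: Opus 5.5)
Your proof is correct and follows essentially the same route as the paper: a uniform lower bound $D_\beta[\mu]\ge c_{\beta,R}=\inf_{u,v\in B_R}G_\beta(u-v)$, Kantorovich--Rubinstein applied to the denominator $D$ and the numerator $A$, and the same quotient identity. The one variation is that you handle the spatial variable separately, using the identity $\nabla F_{\beta,\mu}=\beta\,\mathrm{Cov}$ of the tilted measure to get the explicit, denominator-free constant $\beta R^2$; the paper instead folds $|x-x'|$ into a joint estimate on $D_\mu(x)-D_\nu(x')$ and $A_\mu(x)-A_\nu(x')$. Your McShane extension also makes explicit the reduction to Lipschitz-on-$B_R$ test functions that the paper leaves implicit.
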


\begin{proof}
Write \(D_\mu=D_\beta[\mu]\), \(A_\mu=A_\beta[\mu]\), and
\(F_{\beta,\mu}=A_\mu/D_\mu\).  By Proposition~\ref{prop:sc-barycentric-drift},
\(X_\beta[\mu]=F_{\beta,\mu}-\mathrm{Id}\), so it suffices to estimate \(F_{\beta,\mu}\).

For \(x,y\in B_R\),
\[
D_\mu(x)\ge c_{\beta,R}:=\inf_{u,v\in B_R}G_\beta(u-v)>0.
\]
Moreover,
\[
|A_\mu(x)|\le R\|G_\beta\|_\infty.
\]
The maps \(y\mapsto G_\beta(x-y)\) and
\(y\mapsto yG_\beta(x-y)\), restricted to \(B_R\), have Lipschitz constants bounded
uniformly for \(x\in B_R\).  Hence, by the Kantorovich--Rubinstein formula
\cite[Particular Case~5.16]{Villani2009OptimalTransport},
\[
|D_\mu(x)-D_\nu(x')|+|A_\mu(x)-A_\nu(x')|
\le C_{\beta,R}\bigl(|x-x'|+W_1(\mu,\nu)\bigr).
\]
Using the quotient identity
\[
\frac{A_\mu(x)}{D_\mu(x)}-\frac{A_\nu(x')}{D_\nu(x')}
=
\frac{A_\mu(x)-A_\nu(x')}{D_\mu(x)}
+
A_\nu(x')\frac{D_\nu(x')-D_\mu(x)}{D_\mu(x)D_\nu(x')}
\]
and the lower bound \(D_\mu,D_\nu\ge c_{\beta,R}\), we obtain
\[
|F_{\beta,\mu}(x)-F_{\beta,\nu}(x')|
\le C_{\beta,R}\bigl(|x-x'|+W_1(\mu,\nu)\bigr).
\]
Since \(X_\beta[\mu](x)=F_{\beta,\mu}(x)-x\), the claim follows after increasing the constant.
\end{proof}

\begin{proposition}[Characteristic well-posedness and stability on the invariant ball]
\label{prop:sc-deterministic-wellposedness-stability}
Fix $\beta>0$, $R>0$, and $T>0$, and define
\[
b(x,\mu):=X_\beta[\mu](x),
\qquad x\in\R^d,\ \mu\in\mathcal P(B_R).
\]
Let $L_{\beta,R}>0$ be the constant from
Proposition~\ref{prop:sc-drift-lipschitz-bounded-support}.

Then the following hold.

\begin{enumerate}
\renewcommand{\labelenumi}{(\roman{enumi})}
\item For every continuous path
\[
q\in C([0,T];\mathcal P(B_R))
\]
and every $x\in B_R$, there exists a unique solution
\[
z^{q,x}\in C^1([0,T];\R^d)
\]
of
\[
\dot z_t=b(z_t,q_t),\qquad z_0=x.
\]
Moreover,
\[
z_t^{q,x}\in B_R\qquad\text{for every }t\in[0,T].
\]
We define
\[
\Phi_t^q(x):=z_t^{q,x},
\qquad x\in B_R,\ t\in[0,T].
\]

\item For every $f_0\in\mathcal P(B_R)$, the map
\[
\Gamma:C([0,T];\mathcal P(B_R))\to C([0,T];\mathcal P(B_R)),
\qquad
(\Gamma q)_t:=(\Phi_t^q)_\# f_0,
\]
is well defined and has a unique fixed point
\[
f\in C([0,T];\mathcal P(B_R)).
\]
Equivalently, there exist a unique curve
\[
f\in C([0,T];\mathcal P(B_R))
\]
and a unique family of maps
\[
\Phi_t:B_R\to B_R,\qquad t\in[0,T],
\]
such that, for every $x\in B_R$, the trajectory
\[
t\longmapsto \Phi_t(x)
\]
is the unique $C^1$ solution of
\[
\dot z_t=b(z_t,f_t),\qquad z_0=x,
\]
and
\[
f_t=(\Phi_t)_\# f_0
\qquad\text{for every }t\in[0,T].
\]

\item If $f,g\in C([0,T];\mathcal P(B_R))$ are two solutions corresponding to initial data
$f_0,g_0\in\mathcal P(B_R)$, then for every $t\in[0,T]$,
\begin{equation}
\label{eq:sc-deterministic-stability-W1}
W_1(f_t,g_t)\le e^{2L_{\beta,R}t}W_1(f_0,g_0).
\end{equation}
\end{enumerate}
\end{proposition}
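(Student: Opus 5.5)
The plan is the classical Dobrushin fixed-point argument, with Proposition~\ref{prop:sc-drift-lipschitz-bounded-support} as the only analytic input. Throughout, $L:=L_{\beta,R}$.

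\textbf{Part (i).} Fix $q\in C([0,T];\mathcal P(B_R))$. The map $(t,z)\mapsto b(z,q_t)$ is continuous in $t$, because $|b(z,q_t)-b(z,q_s)|\le L\,W_1(q_t,q_s)$ and $q$ is $W_1$-continuous. It is also $L$-Lipschitz in $z$ on $B_R$.

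To apply Cauchy--Lipschitz globally, extend the field from $B_R$ to $\R^d$ by composing with the radial projection onto $B_R$. This keeps the Lipschitz constant at most $L$, so Picard--Lindel\"of gives a unique $C^1$ solution on $[0,T]$. By Proposition~\ref{prop:sc-invariant-ball}(ii), a solution starting in $B_R$ stays in $B_R$. There the extended field coincides with $b$, so the solution solves the original equation.

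Uniqueness among solutions of the original equation follows from the same invariance combined with Gronwall. Two solutions started at $x$ and $x'$ satisfy $|z_t-z'_t|\le e^{Lt}|x-x'|$, so each $\Phi^q_t$ is $e^{Lt}$-Lipschitz.

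\textbf{Part (ii).} First check that $\Gamma$ maps $C([0,T];\mathcal P(B_R))$ to itself.
\begin{itemize}
\item $(\Phi^q_t)_\# f_0$ is supported in $B_R$ by invariance.
\item Time continuity holds because $|\Phi^q_t(x)-\Phi^q_s(x)|\le 2R|t-s|$, using $|b|\le 2R$ on $B_R$. Coupling through $f_0$ then gives $W_1((\Gamma q)_t,(\Gamma q)_s)\le 2R|t-s|$.
\end{itemize}

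For the contraction estimate, take two paths $q,p$. The difference of trajectories from the same $x$ satisfies
\[
|\Phi^q_t(x)-\Phi^p_t(x)|\le L\int_0^t\bigl(|\Phi^q_s(x)-\Phi^p_s(x)|+W_1(q_s,p_s)\bigr)\,ds .
\]
Gronwall gives $|\Phi^q_t(x)-\Phi^p_t(x)|\le Le^{LT}\int_0^t W_1(q_s,p_s)\,ds$. Coupling via $f_0$ then yields
\[
W_1\bigl((\Gamma q)_t,(\Gamma p)_t\bigr)\le Le^{LT}\int_0^t W_1(q_s,p_s)\,ds .
\]

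This estimate makes $\Gamma$ a contraction for the weighted metric $\sup_t e^{-\lambda t}W_1(q_t,p_t)$ once $\lambda>2Le^{LT}$. Alternatively, iterating the estimate makes $\Gamma^n$ a contraction. The space is complete because $(\mathcal P(B_R),W_1)$ is complete, as noted before Section~\ref{subsec:exact-drift-notation}, and uniform limits of continuous paths are continuous. Banach's theorem then gives a unique fixed point $f$, and $\Phi_t:=\Phi^f_t$.

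For the ``equivalently'' clause: any pair $(f,\Phi)$ with the stated properties must satisfy $\Phi=\Phi^f$ by the uniqueness in (i). Hence $f$ is a fixed point of $\Gamma$, and is unique.

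\textbf{Part (iii).} Let $f$ and $g$ be the solutions with initial data $f_0$ and $g_0$, and let $\pi$ be an optimal $W_1$ coupling of $(f_0,g_0)$. For $(x,y)\sim\pi$, set $u_t(x,y)=|\Phi^f_t(x)-\Phi^g_t(y)|$. Proposition~\ref{prop:sc-drift-lipschitz-bounded-support} gives
\[
u_t\le |x-y|+L\int_0^t\bigl(u_s+W_1(f_s,g_s)\bigr)\,ds .
\]
Since $\bigl(\Phi^f_t,\Phi^g_t\bigr)_\#\pi$ couples $f_t$ and $g_t$, we have $W_1(f_t,g_t)\le\int u_t\,d\pi=:U(t)$. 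Integrating the bound on $u_t$ against $\pi$ gives
\[
U(t)\le W_1(f_0,g_0)+2L\int_0^t U(s)\,ds .
\]
Gronwall then yields $W_1(f_t,g_t)\le U(t)\le e^{2Lt}W_1(f_0,g_0)$, which is exactly the constant claimed.

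\textbf{Main obstacle.} The delicate point is making sure the Lipschitz bound, which is valid only on $B_R$ and $\mathcal P(B_R)$, is never invoked outside its domain. This is handled by doing invariance first (through the projection extension) and only then running the fixed-point and Gronwall arguments. Measurability of $u_t$ in $(x,y)$ follows from continuity of the flows.
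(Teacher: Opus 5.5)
Your proposal is correct and follows essentially the paper's route: characteristic well-posedness from the Lipschitz bound plus invariance of $B_R$, a Dobrushin contraction of $\Gamma$ in an exponentially weighted $W_1$ metric, and a coupling-plus-Gronwall argument that gives the constant $e^{2L_{\beta,R}t}$. The differences are cosmetic. First, you extend the field by radial projection; invariance for the extended field follows from the same radial computation as in Proposition~\ref{prop:sc-invariant-ball}, namely $\langle z, b(Pz,q_t)\rangle\le 0$ for $|z|>R$. Second, your cruder Gronwall factor $L_{\beta,R}e^{L_{\beta,R}T}$ only changes the admissible weight threshold, compared with the paper's $\alpha>2L_{\beta,R}$.
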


\begin{proof}
For a prescribed path \(q\in C([0,T];\mathcal P(B_R))\), the vector field
\(x\mapsto b(x,q_t)\) is continuous in \(t\), Lipschitz in \(x\) on \(B_R\) uniformly in
\(t\), and points inward at \(\partial B_R\) by
Proposition~\ref{prop:sc-invariant-ball}. Hence the characteristic equation is globally
well posed on \([0,T]\), and the flow \(\Phi_t^q:B_R\to B_R\) is well defined.

The nonlinear law is obtained as the fixed point of
\[
\Gamma q:=\bigl((\Phi_t^q)_\# f_0\bigr)_{0\le t\le T}.
\]
By Proposition~\ref{prop:sc-drift-lipschitz-bounded-support}, for two paths \(q,r\),
\[
|\Phi_t^q(x)-\Phi_t^r(x)|
\le
L_{\beta,R}\int_0^t e^{L_{\beta,R}(t-s)}W_1(q_s,r_s)\,ds.
\]
Therefore, for the exponentially weighted metric
\[
d_\alpha(q,r):=\sup_{0\le t\le T}e^{-\alpha t}W_1(q_t,r_t),
\]
one has
\[
d_\alpha(\Gamma q,\Gamma r)
\le
\frac{L_{\beta,R}}{\alpha-L_{\beta,R}}\,d_\alpha(q,r).
\]
Choosing \(\alpha>2L_{\beta,R}\), \(\Gamma\) is a contraction on the complete space
\(C([0,T];\mathcal P(B_R))\). This gives existence and uniqueness of \(f\) and the
characteristic flow. This is the usual Dobrushin fixed-point argument in the
Kantorovich--Rubinstein metric; compare
\cite[Propositions~2--3 and Section~6]{Dobrushin1979Vlasov}.

For stability, let \(f,g\) be two solutions and couple their initial data by an arbitrary
coupling \(\pi_0\). If \(X_t,Y_t\) are the corresponding nonlinear characteristics, then
\[
|X_t-Y_t|
\le |X_0-Y_0|
+
L_{\beta,R}\int_0^t\bigl(|X_s-Y_s|+W_1(f_s,g_s)\bigr)\,ds.
\]
Taking expectations, using \(W_1(f_s,g_s)\le \mathbb E|X_s-Y_s|\), and applying
Gronwall gives
\[
W_1(f_t,g_t)\le e^{2L_{\beta,R}t}W_1(f_0,g_0).
\]
This is the standard Dobrushin stability estimate; compare
\cite[Proposition~4]{Dobrushin1979Vlasov}.
\end{proof}

\subsubsection{Weak formulations for the later noncompact passage}
\label{subsec:finite-action-weak-formulation}

The compact-support solutions constructed above are characteristic solutions.  In the
noncompact truncation limit, however, compactness only gives convergence of measure-valued
curves.  We therefore record the weak continuity-equation formulation used later. This is the usual weak formulation of the continuity equation in \(\mathbb R^d\);
compare \cite[Sections~8.1--8.3]{AmbrosioGigliSavare2008GradientFlows}.  Compactly
supported characteristic solutions are special cases of this definition, by the chain rule
along characteristics.

\begin{definition}[Finite-action weak solutions]
\label{def:nc-finite-action-weak-solution}
Fix \(\beta>0\) and \(T>0\).  A curve
\[
f\in C([0,T];\mathcal P_1(\R^d))
\]
is called a finite-action weak solution of
\begin{equation}
\label{eq:nc-noncompact-continuity-equation}
\partial_t f_t+\nabla\cdot(f_tX_\beta[f_t])=0
\end{equation}
on \([0,T]\), with initial condition \(f_0\), if \(f_{t=0}=f_0\),
\[
\int_0^T\int_{\R^d}|X_\beta[f_t](x)|\,f_t(dx)\,dt<\infty,
\]
and, for every \(\phi\in C_c^1(\R^d)\),
\begin{equation}
\label{eq:nc-weak-formulation}
\int\phi\,df_t-
\int\phi\,df_0
=
\int_0^t\int \nabla\phi(x)\cdot X_\beta[f_s](x)\,f_s(dx)\,ds
\end{equation}
for every \(t\in[0,T]\).
\end{definition}

This definition is not an additional existence theorem.  It is only the formulation in
which the noncompact limits below will be identified.  We now return to the compactly
supported particle system and prove the pointwise propagation-of-chaos estimate needed
for the truncation argument.

\subsubsection{Fixed-$\beta$ compact-support pointwise propagation of chaos}

The next theorem uses propagation of chaos in the pointwise-in-time sense: for each fixed
\(t\) and each fixed \(k\), the \(k\)-particle marginal converges to \(f_t^{\otimes k}\).

\begin{theorem}[\texorpdfstring{Compactly supported fixed-\(\beta\) propagation of chaos}{Compactly supported fixed-beta propagation of chaos}]
\label{thm:sc-exact-fixed-beta-poc-compact-support}
Fix $\beta>0$, $T>0$, and $R>0$, and assume
\[
f_0\in\mathcal P(B_R).
\]
Let $(X_0^i)_{i\ge1}$ be an i.i.d. sequence with common law $f_0$. For each $N\ge1$,
use $(X_0^1,\dots,X_0^N)$ as particle initial data and let
\[
\dot X_t^{i,N}=X_\beta[\mu_t^N](X_t^{i,N}),
\qquad
\mu_t^N:=\frac1N\sum_{j=1}^N\delta_{X_t^{j,N}},
\qquad i=1,\dots,N,
\]
be the exact particle system. Let
\[
f\in C([0,T];\mathcal P(B_R))
\]
and
\[
\Phi_t:B_R\to B_R
\]
be the unique nonlinear law path and nonlinear characteristic flow from
Proposition~\ref{prop:sc-deterministic-wellposedness-stability}, and define
\[
\bar X_t^i:=\Phi_t(X_0^i),
\qquad i\ge1,
\qquad
\bar\mu_t^N:=\frac1N\sum_{j=1}^N\delta_{\bar X_t^j}.
\]
Then:

\begin{enumerate}
\renewcommand{\labelenumi}{(\roman{enumi})}
\item the particle system is well posed on $[0,T]$, and the nonlinear law path $f_t$
is well posed on $[0,T]$, and
\[
\supp \mu_t^N\subset B_R,\qquad \supp f_t\subset B_R
\qquad\text{for every }t\in[0,T];
\]

\item the random variables $(\bar X_t^i)_{i\ge1}$ are i.i.d. with common law $f_t$, and
for every $t\in[0,T]$,
\[
\EE|X_t^{1,N}-\bar X_t^1|
\le
L_{\beta,R}e^{2L_{\beta,R}t}
\int_0^t \EE\bigl[W_1(\bar\mu_s^N,f_s)\bigr]\,ds;
\]

\item for every fixed $t\in[0,T]$,
\[
\EE\bigl[W_1(\mu_t^N,f_t)\bigr]\longrightarrow 0
\qquad\text{as }N\to\infty;
\]

\item if $F_t^{N,k}$ denotes the law of $(X_t^{1,N},\dots,X_t^{k,N})$, then for each fixed
$k\ge1$,
\[
W_1\bigl(F_t^{N,k},f_t^{\otimes k}\bigr)\longrightarrow 0
\qquad\text{as }N\to\infty,
\]
where $W_1$ on $(\R^d)^k$ is taken with respect to the cost
\[
(x_1,\dots,x_k),(y_1,\dots,y_k)\longmapsto \sum_{i=1}^k |x_i-y_i|.
\]
\end{enumerate}
\end{theorem}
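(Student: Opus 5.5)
The plan is the classical Sznitman synchronous-coupling argument, using only the compact-support estimates already established.

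\textbf{Part (i).} The particle system is an ODE on $(B_R)^N$ whose vector field is Lipschitz there. This holds because, by Proposition~\ref{prop:sc-drift-lipschitz-bounded-support}, $W_1(\mu^N_x,\mu^N_{x'})\le \frac1N\sum_j|x_j-x_j'|$. Proposition~\ref{prop:sc-invariant-ball}(iii) keeps all particles in $B_R$. Hence we get a unique global $C^1$ solution on $[0,T]$ with $\supp\mu_t^N\subset B_R$. Well-posedness of $f_t$ and $\supp f_t\subset B_R$ are exactly Proposition~\ref{prop:sc-deterministic-wellposedness-stability}(ii).

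\textbf{Part (ii).} Since $\Phi_t$ is a deterministic measurable map and the $X_0^i$ are i.i.d.\ with law $f_0$, the $\bar X_t^i=\Phi_t(X_0^i)$ are i.i.d.\ with law $(\Phi_t)_\#f_0=f_t$. For the coupling estimate, subtract the two integral equations and apply Proposition~\ref{prop:sc-drift-lipschitz-bounded-support}:
\[
|X_t^{i,N}-\bar X_t^i|\le L_{\beta,R}\int_0^t\bigl(|X_s^{i,N}-\bar X_s^i|+W_1(\mu_s^N,\bar\mu_s^N)+W_1(\bar\mu_s^N,f_s)\bigr)ds.
\]
The middle term is bounded by $\frac1N\sum_j|X_s^{j,N}-\bar X_s^j|$. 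Taking expectations and using exchangeability (the joint law is symmetric because initial data are i.i.d.\ and the dynamics are permutation-equivariant), $u(t):=\EE|X_t^{1,N}-\bar X_t^1|$ satisfies
\[
u(t)\le 2L_{\beta,R}\int_0^t u\,ds+L_{\beta,R}\int_0^t\EE W_1(\bar\mu_s^N,f_s)\,ds.
\]
The second term is nondecreasing in $t$, so the elementary Gronwall inequality gives the stated bound.

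\textbf{Part (iii).} Use the triangle inequality:
\[
\EE W_1(\mu_t^N,f_t)\le \EE W_1(\mu_t^N,\bar\mu_t^N)+\EE W_1(\bar\mu_t^N,f_t)\le u(t)+\EE W_1(\bar\mu_t^N,f_t).
\]
It therefore suffices to show $\sup_{s\le T}\EE W_1(\bar\mu_s^N,f_s)\to0$. For each $s$, $\bar\mu_s^N$ is the empirical measure of $N$ i.i.d.\ samples from $f_s\in\mathcal P(B_R)$. The Fournier--Guillin estimate with $p=1$ gives a rate $C_{d,R}\,\varepsilon_d(N)\to0$ (e.g.\ $N^{-1/\max(d,3)}$ up to logs). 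This rate depends only on moments of $f_s$, which are bounded by powers of $R$ uniformly in $s$. The bound is then uniform on $[0,T]$, dominated convergence in the time integral applies trivially, and (iii) follows.

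\textbf{Part (iv).} Couple $(X_t^{i,N})_{i\le k}$ with $(\bar X_t^i)_{i\le k}$, which has law $f_t^{\otimes k}$. For the sum cost this gives
\[
W_1(F_t^{N,k},f_t^{\otimes k})\le \sum_{i\le k}\EE|X_t^{i,N}-\bar X_t^i|=k\,u(t)\to0.
\]

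The main obstacle is the uniformity needed in (iii): the empirical-measure convergence must hold uniformly in $s\in[0,T]$ and be quantitative. I would handle this by invoking Fournier--Guillin with the moment bound $\int|x|^q df_s\le R^q$. If one prefers to avoid rates, the fallback is pointwise convergence $\EE W_1(\bar\mu_s^N,f_s)\to0$ (Varadarajan plus boundedness by $2R$) together with dominated convergence in $s$.
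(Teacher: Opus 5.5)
Your proposal is correct and follows the paper's proof step for step: the synchronous coupling $\bar X_t^i=\Phi_t(X_0^i)$, the empirical coupling bound $W_1(\mu_s^N,\bar\mu_s^N)\le\frac1N\sum_j|X_s^{j,N}-\bar X_s^j|$ combined with exchangeability, Gronwall, Fournier--Guillin, and the coordinatewise coupling for (iv). The differences are cosmetic. The paper needs only pointwise-in-$s$ convergence $\EE W_1(\bar\mu_s^N,f_s)\to0$ plus dominated convergence via the bound $2R$, which is your fallback. For (i) it uses that the particle vector field is smooth on all of $(\R^d)^N$ (strictly positive denominator), which gives local existence more cleanly than a Lipschitz bound restricted to $(B_R)^N$.
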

\begin{proof}
The particle vector field is smooth on \((\mathbb R^d)^N\), since
\[
F_{N,i}(x_1,\ldots,x_N)
=
\frac{\sum_{j=1}^N x_jG_\beta(x_i-x_j)}
{\sum_{j=1}^N G_\beta(x_i-x_j)}-x_i
\]
has a strictly positive denominator.  Proposition~\ref{prop:sc-invariant-ball} gives
global existence on \([0,T]\) and invariance of \(B_R^N\).  The nonlinear law path and
flow are given by Proposition~\ref{prop:sc-deterministic-wellposedness-stability}.

Let
\[
\bar X_t^i:=\Phi_t(X_0^i),
\qquad
\bar\mu_t^N:=\frac1N\sum_{j=1}^N\delta_{\bar X_t^j}.
\]
Then \((\bar X_t^i)_{i\ge1}\) are i.i.d. with common law \(f_t\).  Set
\[
\nu_N(t):=\mathbb E|X_t^{1,N}-\bar X_t^1|.
\]
We spell out the coupling step.  For each time \(s\), the measure
\[
\Pi_s^N:=\frac1N\sum_{j=1}^N
\delta_{(X_s^{j,N},\bar X_s^j)}
\]
is a coupling of \(\mu_s^N\) and \(\bar\mu_s^N\).  Hence
\[
W_1(\mu_s^N,\bar\mu_s^N)
\le
\frac1N\sum_{j=1}^N|X_s^{j,N}-\bar X_s^j|.
\]
Taking expectations and using exchangeability of the particle system and of the coupled
nonlinear characteristics gives
\begin{equation}
\label{eq:poc-particlewise-coupling-bound}
\mathbb E W_1(\mu_s^N,\bar\mu_s^N)
\le \frac1N\sum_{j=1}^N\mathbb E|X_s^{j,N}-\bar X_s^j|
=\nu_N(s).
\end{equation}
On the other hand, the Lipschitz estimate of
Proposition~\ref{prop:sc-drift-lipschitz-bounded-support} gives, for the first particle,
\[
|X_t^{1,N}-\bar X_t^1|
\le
L_{\beta,R}\int_0^t
\left(
|X_s^{1,N}-\bar X_s^1|+W_1(\mu_s^N,f_s)
\right)ds.
\]
Using
\[
W_1(\mu_s^N,f_s)
\le
W_1(\mu_s^N,\bar\mu_s^N)+W_1(\bar\mu_s^N,f_s)
\]
and then \eqref{eq:poc-particlewise-coupling-bound}, we obtain
\[
\nu_N(t)
\le
2L_{\beta,R}\int_0^t\nu_N(s)\,ds
+
L_{\beta,R}\int_0^t\mathbb E W_1(\bar\mu_s^N,f_s)\,ds.
\]
Applying the Gronwall estimate stated at the beginning of the appendix yields
\[
\nu_N(t)
\le
L_{\beta,R}e^{2L_{\beta,R}t}
\int_0^t\mathbb E W_1(\bar\mu_s^N,f_s)\,ds.
\]
This is the usual coupling estimate behind propagation of chaos; compare the
classical nonlinear-process construction in
\cite[Chapter~I, Section~1]{Sznitman1991PropagationChaos} and the modern
discussion of coupling methods in \cite[Section~4.1]{ChaintronDiez2022PropagationChaosI}.

For fixed \(s\), the random variables \((\bar X_s^j)_{j\ge1}\) are i.i.d. with law
\(f_s\in\mathcal P(B_R)\).  The empirical-measure estimate
\cite[Theorem~1, with \(p=1\)]{FournierGuillin2015EmpiricalWasserstein} gives
\[
\mathbb E W_1(\bar\mu_s^N,f_s)\to0
\qquad\text{for every fixed }s,
\]
because the required moment condition \(M_q(f_s)<\infty\) for some \(q>1\) is
automatic from the support bound \(f_s\in\mathcal P(B_R)\).
Since \(0\le W_1(\bar\mu_s^N,f_s)\le2R\), dominated convergence gives
\[
\int_0^t\mathbb E W_1(\bar\mu_s^N,f_s)\,ds\to0.
\]
Therefore \(\nu_N(t)\to0\).  Hence
\[
\mathbb E W_1(\mu_t^N,f_t)
\le
\mathbb E W_1(\mu_t^N,\bar\mu_t^N)+\mathbb E W_1(\bar\mu_t^N,f_t)
\le
\nu_N(t)+\mathbb E W_1(\bar\mu_t^N,f_t)
\to0.
\]
Finally, coupling
\((X_t^{1,N},\ldots,X_t^{k,N})\) with
\((\bar X_t^1,\ldots,\bar X_t^k)\) gives
\[
W_1(F_t^{N,k},f_t^{\otimes k})
\le
\sum_{i=1}^k\mathbb E|X_t^{i,N}-\bar X_t^i|
=
k\nu_N(t)\to0.
\]
\end{proof}

\subsection{Noncompact admissibility and hard-truncated particle approximation}
\label{sec:noncompact-admissibility}

We next formulate the deterministic hypothesis that allows compactly supported particle systems to approximate noncompact initial data. The condition is a stability statement for the noncompact mean-field equation under hard truncation of the initial law. With this in hand, the compact-support propagation-of-chaos theorem applies at fixed radius, and the radius can then be removed at the deterministic level.

\subsubsection{Hard truncation as a noncompact approximation}
\label{subsec:hard-truncation-noncompact-only}

For \(R>0\) and \(\mu\in\mathcal P_1(\R^d)\), set
\[
p_R(\mu):=\mu(B_R),
\qquad
q_R(\mu):=1-p_R(\mu).
\]
Whenever \(p_R(\mu)>0\), define the hard truncation of \(\mu\) by
\begin{equation}
\label{eq:nc-hard-truncation-definition}
\mu^{[R]}:=\frac{\mathbf 1_{B_R}\mu}{\mu(B_R)}.
\end{equation}
Thus \(\mu^{[R]}\in\mathcal P(B_R)\).

\begin{lemma}[Hard truncation in \(W_1\)]
\label{lem:nc-hard-truncation-W1}
Let \(\mu\in\mathcal P_1(\R^d)\), and assume that \(p_R(\mu)>0\).  Let \(\mu^{[R]}\) be defined by
\eqref{eq:nc-hard-truncation-definition}.  Then
\begin{equation}
\label{eq:nc-hard-truncation-W1-bound}
W_1(\mu^{[R]},\mu)
\le
\int_{\{|x|>R\}} |x|\,\mu(dx)
+
\frac{q_R(\mu)}{p_R(\mu)}\int_{B_R}|x|\,\mu(dx).
\end{equation}
In particular,
\[
W_1(\mu^{[R]},\mu)\longrightarrow0
\qquad\text{as }R\to\infty.
\]
\end{lemma}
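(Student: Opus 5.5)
I will bound \(W_1(\mu^{[R]},\mu)\) by the Kantorovich--Rubinstein formula \eqref{eq:sc-KR}, testing against \(1\)-Lipschitz functions \(\varphi\). Since both measures are probability measures, adding a constant to \(\varphi\) does not change \(\int\varphi\,d(\mu^{[R]}-\mu)\), so I normalize \(\varphi(0)=0\). Then \(|\varphi(x)|\le|x|\).

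The key step is to decompose the signed measure
\[
\mu^{[R]}-\mu
=
\Bigl(\frac{1}{p_R(\mu)}-1\Bigr)\mathbf 1_{B_R}\mu-\mathbf 1_{B_R^c}\mu
=
\frac{q_R(\mu)}{p_R(\mu)}\mathbf 1_{B_R}\mu-\mathbf 1_{\{|x|>R\}}\mu .
\]
Integrating \(\varphi\) against each piece and using \(|\varphi(x)|\le|x|\) gives
\[
\int\varphi\,d(\mu^{[R]}-\mu)
\le
\frac{q_R(\mu)}{p_R(\mu)}\int_{B_R}|x|\,\mu(dx)
+\int_{\{|x|>R\}}|x|\,\mu(dx).
\]
Taking the supremum over \(\varphi\) yields \eqref{eq:nc-hard-truncation-W1-bound}. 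Here \(\mu^{[R]}\in\mathcal P_1\) automatically, since it is compactly supported, so the duality formula applies.

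For the limit, I use that \(\mu\in\mathcal P_1(\R^d)\). The tail integral \(\int_{\{|x|>R\}}|x|\,d\mu\) tends to \(0\) by dominated convergence, with \(|x|\) integrable. Meanwhile \(q_R(\mu)\to0\) and \(p_R(\mu)\to1\), and \(\int_{B_R}|x|\,d\mu\le\int|x|\,d\mu<\infty\). Hence the second term also tends to \(0\).

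There is no real obstacle in this argument. The only point needing care is the normalization \(\varphi(0)=0\), which is what makes the pointwise bound \(|\varphi(x)|\le|x|\) legitimate; it is valid because \(\mu^{[R]}-\mu\) has total mass zero.
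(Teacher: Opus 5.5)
Your proposal is correct and follows the paper's proof essentially verbatim: Kantorovich--Rubinstein duality with the normalization $\varphi(0)=0$, the decomposition $\mu^{[R]}-\mu=\frac{q_R(\mu)}{p_R(\mu)}\mathbf 1_{B_R}\mu-\mathbf 1_{B_R^c}\mu$, and then the limit using $\mu\in\mathcal P_1(\R^d)$ together with $p_R(\mu)\to1$ and $q_R(\mu)\to0$.
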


\begin{proof}
By Kantorovich--Rubinstein duality, it suffices to test against \(1\)-Lipschitz
\(\varphi\) with \(\varphi(0)=0\), so that \(|\varphi(x)|\le |x|\).  Since
\[
\mu^{[R]}-\mu
=
\frac{q_R(\mu)}{p_R(\mu)}\mathbf 1_{B_R}\mu
-
\mathbf 1_{B_R^c}\mu,
\]
we get
\[
\left|\int\varphi\,d(\mu^{[R]}-\mu)\right|
\le
\frac{q_R(\mu)}{p_R(\mu)}\int_{B_R}|x|\,\mu(dx)
+
\int_{B_R^c}|x|\,\mu(dx).
\]
Taking the supremum over \(\operatorname{Lip}(\varphi)\le1\) proves the bound.  The
right-hand side tends to zero because \(p_R(\mu)\to1\), \(q_R(\mu)\to0\), and
\(\mu\in\mathcal P_1(\mathbb R^d)\).
\end{proof}

\begin{remark}[Untruncated samples versus empirical hard truncation]
\label{rem:empirical-hard-truncation}
Let \(\mu\in\mathcal P_1(\R^d)\), let \(Y_1,\ldots,Y_N\) be i.i.d. with law \(\mu\), and set
\[
\nu_N:=\frac1N\sum_{i=1}^N\delta_{Y_i},
\qquad
N_R:=\sum_{i=1}^N\mathbf 1_{\{|Y_i|\le R\}},
\qquad
q_R(\mu):=\mu(B_R^c).
\]
By Hoeffding's inequality, for every \(\eta>0\),
\[
\mathbb P\left(
\left|\frac{N_R}{N}-(1-q_R(\mu))\right|>\eta
\right)
\le
2e^{-2N\eta^2}.
\]
Thus the random retained fraction \(N_R/N\) is close to \(1-q_R(\mu)\) with high
probability.
On the event \(\{N_R>0\}\), define the renormalized empirical truncation
\[
\nu_N^{\mathrm{emp},[R]}
:=
\frac1{N_R}\sum_{i:\,|Y_i|\le R}\delta_{Y_i}.
\]
Then
\[
\mathbb P(Y_1,\ldots,Y_N\in B_R)
=
(1-q_R(\mu))^N
\ge
1-Nq_R(\mu).
\]
Thus this event has high probability whenever \(Nq_R(\mu)\ll1\).

Moreover, on \(\{N_R>0\}\),
\[
W_1(\nu_N,\nu_N^{\mathrm{emp},[R]})
\le
\frac1N\sum_{i=1}^N |Y_i|\mathbf 1_{\{|Y_i|>R\}}
+
R\frac{N-N_R}{N}.
\]
Consequently, if
\[
A_R(\mu):=\int_{\{|x|>R\}}|x|\,\mu(dx),
\]
then for every \(\varepsilon>0\),
\[
\mathbb P\left(
N_R>0,\,
W_1(\nu_N,\nu_N^{\mathrm{emp},[R]})>\varepsilon
\right)
\le
\frac{A_R(\mu)+Rq_R(\mu)}{\varepsilon}.
\]
Since \(Rq_R(\mu)\le A_R(\mu)\), the right-hand side tends to \(0\) as
\(R\to\infty\).  Therefore, for any fixed \(\varepsilon>0\),
\[
\lim_{R\to\infty}\limsup_{N\to\infty}
\mathbb P\left(
N_R>0,\,
W_1(\nu_N,\nu_N^{\mathrm{emp},[R]})>\varepsilon
\right)
=0.
\]
This is a sampling-level comparison only; the main particle theorem is still formulated
with i.i.d. initialization from the deterministic truncated law \(\mu^{[R]}\).
\end{remark}

\begin{definition}[Noncompact admissibility]
\label{def:nc-admissible-law}
Fix \(\beta>0\) and \(T>0\).  A law \(f_0\in\mathcal P_1(\R^d)\) is called
\((\beta,T)\)-admissible for the exact Gaussian drift if the following two conditions hold.

\begin{enumerate}
\renewcommand{\labelenumi}{(\roman{enumi})}
\item There exists a finite-action weak solution
\[
f^\beta\in C([0,T];\mathcal P_1(\R^d))
\]
of \eqref{eq:nc-noncompact-continuity-equation} with initial datum \(f_0\).  This solution is part of the admissibility datum and is called the associated admissible solution.

\item For all sufficiently large \(R\), let
\[
f_0^{[R]}:=\frac{\mathbf 1_{B_R}f_0}{f_0(B_R)},
\]
and let \(f^{\beta,[R]}\in C([0,T];\mathcal P(B_R))\) be the compact-support
solution constructed in Proposition~\ref{prop:sc-deterministic-wellposedness-stability},
initialized from \(f_0^{[R]}\).  Then
\begin{equation}
\label{eq:nc-admissible-truncated-to-noncompact}
\sup_{0\le t\le T}W_1(f_t^{\beta,[R]},f_t^\beta)
\longrightarrow0
\qquad\text{as }R\to\infty.
\end{equation}
\end{enumerate}
\end{definition}

\begin{remark}[Uniqueness of the associated admissible flow]
\label{rem:unique-associated-admissible-flow}
For fixed \((\beta,T)\) and \(f_0\), the associated admissible flow is unique whenever it
exists.  Indeed, suppose \(f^\beta\) and \(g^\beta\) both satisfy
Definition~\ref{def:nc-admissible-law} for the same initial law \(f_0\).  Let
\(f^{\beta,[R]}\) denote the compact-support solution initialized from
\(f_0^{[R]}\).  Then, for every \(t\in[0,T]\),
\[
W_1(f_t^\beta,g_t^\beta)
\le
W_1(f_t^\beta,f_t^{\beta,[R]})
+
W_1(f_t^{\beta,[R]},g_t^\beta).
\]
Taking the supremum over \(t\in[0,T]\) and then letting \(R\to\infty\), the two
terms on the right-hand side vanish by the admissibility condition.  Hence
\(f_t^\beta=g_t^\beta\) for all \(t\in[0,T]\).
\end{remark}

\subsubsection{From noncompact admissibility to hard-truncated particles}
\label{subsec:admissibility-to-particles}

\begin{theorem}[Particle approximation for noncompact admissible data]
\label{thm:nc-admissible-particle-approximation}
Fix \(\beta>0\), \(T>0\), and \(t\in[0,T]\).  Let \(f_0\in\mathcal P_1(\R^d)\) be
\((\beta,T)\)-admissible, with admissible noncompact flow \((f_s^\beta)_{0\le s\le T}\).
For each sufficiently large \(R\), let
\[
X_0^{1,N,\beta,[R]},\ldots,X_0^{N,N,\beta,[R]}
\]
be i.i.d. with law
\[
f_0^{[R]}=\frac{\mathbf 1_{B_R}f_0}{f_0(B_R)}.
\]
For \(j=1,\ldots,N\), let \(X_s^{j,N,\beta,[R]}\) denote the solution of the exact
\(N\)-particle system with inverse temperature \(\beta\) and truncation radius \(R\):
\[
\dot X_s^{j,N,\beta,[R]}
=
X_\beta[\mu_s^{N,\beta,[R]}]\bigl(X_s^{j,N,\beta,[R]}\bigr),
\qquad
\mu_s^{N,\beta,[R]}:=\frac1N\sum_{\ell=1}^N
\delta_{X_s^{\ell,N,\beta,[R]}}.
\]
Thus the superscript \([R]\) records the hard-truncated initial law, while \(\beta\)
records the drift parameter.  The only randomness in this theorem is the i.i.d. initial
sample; conditional on that sample, the particle ODE is deterministic.  Then, for every
\(M<\infty\),
\begin{equation}
\label{eq:nc-admissible-particle-approximation}
\lim_{R\to\infty}\limsup_{N\to\infty}
\EE\sup_{|y|\le M}
\left|m_{\mu_t^{N,\beta,[R]}}(y)-m_{f_t^\beta}(y)\right|=0.
\end{equation}
Consequently, the same convergence holds in probability.
\end{theorem}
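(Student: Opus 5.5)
The plan is to split the error at the level of empirical measures, using the triangle inequality in \(W_1\), and then to transfer \(W_1\)-closeness to uniform closeness of posterior means on \(B_M\). Write
\[
W_1(\mu_t^{N,\beta,[R]},f_t^\beta)\le W_1(\mu_t^{N,\beta,[R]},f_t^{\beta,[R]})+W_1(f_t^{\beta,[R]},f_t^\beta).
\]
Here \(f^{\beta,[R]}\) is the compact-support flow from Proposition~\ref{prop:sc-deterministic-wellposedness-stability}, initialized at \(f_0^{[R]}\in\mathcal P(B_R)\). For fixed \(R\), Theorem~\ref{thm:sc-exact-fixed-beta-poc-compact-support}(iii) gives \(\mathbb E W_1(\mu_t^{N,\beta,[R]},f_t^{\beta,[R]})\to0\) as \(N\to\infty\). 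The second term is deterministic and tends to \(0\) as \(R\to\infty\) by admissibility \eqref{eq:nc-admissible-truncated-to-noncompact}.

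Next, I need a quantitative version of Proposition~\ref{prop:nc-posterior-mean-W1} that can be applied inside the expectation. For \(|y|\le M\), the maps \(x\mapsto\gamma_\tau(y-x)\) and \(x\mapsto x\gamma_\tau(y-x)\) are bounded and Lipschitz, with constants \(C_{\tau,M}\) uniform in \(y\). Hence, for any \(\lambda\),
\[
\sup_{|y|\le M}\bigl(|Z_\lambda-Z_{f_t^\beta}|+|N_\lambda-N_{f_t^\beta}|\bigr)\le C_{\tau,M}W_1(\lambda,f_t^\beta).
\]
Let \(c:=\inf_{B_M}Z_{f_t^\beta}>0\). On the event \(\{W_1(\lambda,f_t^\beta)\le c/(2C_{\tau,M})\}\), the denominator satisfies \(Z_\lambda\ge c/2\) on \(B_M\). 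The quotient identity then yields
\[
\sup_{|y|\le M}|m_\lambda(y)-m_{f_t^\beta}(y)|\le C'W_1(\lambda,f_t^\beta),
\]
with \(C'\) depending on \(\tau\), \(M\), \(c\) and the bound on \(N_{f_t^\beta}\) over \(B_M\).

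On the complementary event I would use a crude bound. For \(\lambda=\mu_t^{N,\beta,[R]}\), which is supported in \(B_R\) by Proposition~\ref{prop:sc-invariant-ball}, both posterior means are bounded: \(|m_\lambda|\le R\), since it is a barycenter of a measure on \(B_R\), and \(\sup_{B_M}|m_{f_t^\beta}|=:K<\infty\). This gives
\[
\mathbb E\sup_{|y|\le M}|m_{\mu_t^{N}}-m_{f_t^\beta}|\le C'\,\mathbb E W_1(\mu_t^N,f_t^\beta)+(R+K)\,\mathbb P\bigl(W_1(\mu_t^N,f_t^\beta)>\delta\bigr),
\]
where \(\delta=c/(2C_{\tau,M})\) and \(\mu_t^N\) is shorthand for \(\mu_t^{N,\beta,[R]}\). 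By Markov's inequality, the probability term is at most \(\delta^{-1}\mathbb E W_1\).

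The main obstacle is this factor \(R\): it grows before \(R\to\infty\) is taken. The order of limits resolves it. Take \(R\) large enough that \(W_1(f_t^{\beta,[R]},f_t^\beta)<\delta/2\). Then, as \(N\to\infty\) at that fixed \(R\),
\[
\mathbb P\bigl(W_1>\delta\bigr)\le\mathbb P\bigl(W_1(\mu_t^N,f_t^{\beta,[R]})>\delta/2\bigr)\to0,
\]
so the term \((R+K)\mathbb P(\cdot)\) vanishes in \(\limsup_{N}\) for each such \(R\). What remains is
\[
\limsup_N\mathbb E\sup_{|y|\le M}|\cdots|\le C'\,W_1(f_t^{\beta,[R]},f_t^\beta),
\]
which tends to \(0\) as \(R\to\infty\). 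This proves \eqref{eq:nc-admissible-particle-approximation}, and convergence in probability follows from Markov's inequality.
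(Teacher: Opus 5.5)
Your proof is correct, but it places the intermediate comparison differently from the paper.

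\textbf{The paper's route.} The paper splits at the level of posterior means, through $m_{f_t^{\beta,[R]}}$. At fixed $R$, Theorem~\ref{thm:sc-exact-fixed-beta-poc-compact-support} and Proposition~\ref{prop:nc-posterior-mean-W1} give
$\sup_{|y|\le M}|m_{\mu_t^{N,\beta,[R]}}(y)-m_{f_t^{\beta,[R]}}(y)|\to0$ in probability. Both measures live in $B_R$, so this quantity is bounded by $2R$, and its expectation tends to zero by bounded convergence. The remaining deterministic term $\sup_{|y|\le M}|m_{f_t^{\beta,[R]}}(y)-m_{f_t^\beta}(y)|$ then vanishes as $R\to\infty$ by admissibility and the same proposition.

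Because the paper centers the random comparison at the actual $N\to\infty$ limit $f_t^{\beta,[R]}$, the bounded term tends to zero for every fixed $R$. So the ``factor $R$'' obstacle you identified never arises, and only qualitative $W_1$-continuity of $\nu\mapsto m_\nu$ is needed.

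\textbf{Your route.} You split in $W_1$ and linearize the posterior mean around the fixed noncompact target $f_t^\beta$. This forces the good/bad-event split and the choice of $R$ with truncation error below $\delta/2$, and you handle both correctly.

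\textbf{What each buys.} The paper's argument is shorter and purely qualitative. Yours gives the explicit bound
$\limsup_{N}\mathbb E\sup_{|y|\le M}|m_{\mu_t^{N,\beta,[R]}}(y)-m_{f_t^\beta}(y)|\le C'\,W_1(f_t^{\beta,[R]},f_t^\beta)$,
with $C'$ depending only on $(\tau,M,f_t^\beta)$ and not on $R$. Any rate in the admissibility condition \eqref{eq:nc-admissible-truncated-to-noncompact} therefore transfers directly to the posterior means, which the paper's argument does not provide.
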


\begin{proof}
Fix \(R\).  Since \(f_0^{[R]}\in\mathcal P(B_R)\), the compact-support propagation
of chaos theorem gives
\[
\EE W_1(\mu_t^{N,\beta,[R]},f_t^{\beta,[R]})\to0.
\]
Hence \(W_1(\mu_t^{N,\beta,[R]},f_t^{\beta,[R]})\to0\) in probability.  By
Proposition~\ref{prop:nc-posterior-mean-W1},
\[
\sup_{|y|\le M}
\left|m_{\mu_t^{N,\beta,[R]}}(y)-m_{f_t^{\beta,[R]}}(y)\right|
\to0
\]
in probability.  Moreover both measures are supported in \(B_R\), so both posterior
means take values in \(B_R\), and the last display is bounded by \(2R\).  Since the
convergence is in probability and the sequence is uniformly bounded, the expectations
also converge to zero:
\[
\lim_{N\to\infty}
\EE\sup_{|y|\le M}
\left|m_{\mu_t^{N,\beta,[R]}}(y)-m_{f_t^{\beta,[R]}}(y)\right|=0.
\]
Therefore
\[
\begin{aligned}
&\limsup_{N\to\infty}
\EE\sup_{|y|\le M}
\left|m_{\mu_t^{N,\beta,[R]}}(y)-m_{f_t^\beta}(y)\right| \\
&\qquad\le
\sup_{|y|\le M}
\left|m_{f_t^{\beta,[R]}}(y)-m_{f_t^\beta}(y)\right|.
\end{aligned}
\]
By admissibility,
\[
W_1(f_t^{\beta,[R]},f_t^\beta)\to0.
\]
Proposition~\ref{prop:nc-posterior-mean-W1} then implies that the right-hand side tends
to zero as \(R\to\infty\).

To make the last assertion explicit, set
\[
Z_{N,R}:=
\sup_{|y|\le M}
\left|m_{\mu_t^{N,\beta,[R]}}(y)-m_{f_t^\beta}(y)\right|.
\]
The preceding argument proves
\[
\lim_{R\to\infty}\limsup_{N\to\infty}\mathbb E Z_{N,R}=0.
\]
Therefore, for every \(\varepsilon>0\), Markov's inequality gives
\[
\lim_{R\to\infty}\limsup_{N\to\infty}
\mathbb P(Z_{N,R}>\varepsilon)
\le
\lim_{R\to\infty}\limsup_{N\to\infty}
\frac{\mathbb E Z_{N,R}}{\varepsilon}
=0.
\]
This is the claimed sequential convergence in probability.
\end{proof}

\subsubsection{Recoverable admissible priors}
\label{subsec:recoverable-admissible-priors}

The previous theorem separates particle approximation from the large-\(\beta\)
recovery step.  We now record the corresponding class of priors for which the full
sequential posterior-mean recovery theorem follows.

\begin{definition}[Recoverable admissible priors]
\label{def:recoverable-admissible-prior-class}
Fix \(\tau>0\).  We measure time in the raw attention-depth variable associated with the
normalized drift \(X_\beta=\beta^{-1}\nabla\log(G_\beta*\cdot)\).  Equivalently, if
\(s=t/\beta\) denotes the unnormalized score-flow time, then the denoising horizon
\(s=\tau/2\) corresponds to
\[
T_\beta:=\frac{\beta\tau}{2}.
\]
A prior \(P_0\in\mathcal P_1(\R^d)\) belongs to the class
\(\mathcal A_\tau\) if, with
\[
f_0:=\gamma_\tau*P_0,
\]
the following two conditions hold.

\begin{enumerate}
\renewcommand{\labelenumi}{(\roman{enumi})}

\item For every \(\beta>0\), there exists an admissible flow
\[
(f_t^\beta)_{0\le t\le T_\beta}
\]
such that the noisy law \(f_0\) is \((\beta,T_\beta)\)-admissible in the sense of
Definition~\ref{def:nc-admissible-law}, with associated admissible solution
\((f_t^\beta)_{0\le t\le T_\beta}\).

\item This choice of admissible flow recovers the clean prior at the denoising time:
\begin{equation}
\label{eq:recoverable-prior-large-beta-condition}
W_1(f_{T_\beta}^\beta,P_0)\longrightarrow0
\qquad\text{as }\beta\to\infty.
\end{equation}

\end{enumerate}
\end{definition}

\begin{remark}[Role of the class \(\mathcal A_\tau\)]
\label{rem:role-of-A-tau}
The class \(\mathcal A_\tau\) is an admissible-recoverable class, not an
a priori standard regularity class of priors.  Its first condition is the
noncompact mean-field admissibility needed to pass from hard-truncated
compactly supported particles to the noncompact noisy law \(f_0\).  Its second
condition is the large-\(\beta\) denoising condition: after the raw attention-depth
time \(T_\beta=\beta\tau/2\), the associated deterministic noncompact flow
recovers the clean prior \(P_0\) in \(W_1\).

Thus the theorem below is a transfer result for priors satisfying these two conditions.
The concrete case needed here is the Gaussian one, verified later in this appendix.
\end{remark}

\begin{theorem}[Hard-truncated posterior-mean recovery for recoverable priors]
\label{thm:recoverable-prior-hard-truncated-posterior-recovery}
Fix \(\tau>0\), and let \(P_0\in\mathcal A_\tau\).  Set
\[
f_0:=\gamma_\tau*P_0,
\qquad
T_\beta:=\frac{\beta\tau}{2}.
\]
For \(R>0\), define
\[
f_0^{[R]}:=\frac{\mathbf 1_{B_R}f_0}{f_0(B_R)}.
\]
Let \(\mu_t^{N,\beta,[R]}\) be the empirical measure of the exact particle system
initialized i.i.d. from \(f_0^{[R]}\):
\[
\dot X_i^{N,\beta,[R]}(t)
=
X_\beta[\mu_t^{N,\beta,[R]}]\bigl(X_i^{N,\beta,[R]}(t)\bigr),
\qquad
\mu_t^{N,\beta,[R]}:=\frac1N\sum_{j=1}^N\delta_{X_j^{N,\beta,[R]}(t)}.
\]
Then, for every \(M<\infty\),
\begin{equation}
\label{eq:recoverable-prior-hard-truncated-theorem}
\lim_{\beta\to\infty}
\limsup_{R\to\infty}
\limsup_{N\to\infty}
\EE
\sup_{|y|\le M}
\left|
m_{\mu_{T_\beta}^{N,\beta,[R]}}(y)-m_{P_0}(y)
\right|
=0.
\end{equation}
The order of limits in \eqref{eq:recoverable-prior-hard-truncated-theorem} is part of
the statement: first \(N\to\infty\) at fixed \((\beta,R)\), then \(R\to\infty\) at fixed
\(\beta\), and finally \(\beta\to\infty\).  In particular, this is not a joint scaling
statement in \((N,R,\beta)\).  Consequently, the same convergence holds in probability.
\end{theorem}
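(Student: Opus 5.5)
The plan is a triangle inequality that separates the particle/truncation error from the large-\(\beta\) recovery error, and then handles each limit in the stated order. Fix \(M<\infty\) and \(\beta>0\). For every \(N,R\) we split
\[
\sup_{|y|\le M}\bigl|m_{\mu_{T_\beta}^{N,\beta,[R]}}(y)-m_{P_0}(y)\bigr|
\le
\sup_{|y|\le M}\bigl|m_{\mu_{T_\beta}^{N,\beta,[R]}}(y)-m_{f_{T_\beta}^\beta}(y)\bigr|
+
\sup_{|y|\le M}\bigl|m_{f_{T_\beta}^\beta}(y)-m_{P_0}(y)\bigr| .
\]
Here \((f_t^\beta)\) is the admissible flow supplied by condition (i) of Definition~\ref{def:recoverable-admissible-prior-class}; by Remark~\ref{rem:unique-associated-admissible-flow} it is unique. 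The second term is deterministic and does not depend on \(N\) or \(R\).

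\emph{Inner limits \(N\to\infty\), then \(R\to\infty\).} The noisy law \(f_0\) is \((\beta,T_\beta)\)-admissible. We therefore apply Theorem~\ref{thm:nc-admissible-particle-approximation} with \(T=T_\beta\) and \(t=T_\beta\), which is allowed because \(t\in[0,T]\). That theorem gives
\[
\lim_{R\to\infty}\limsup_{N\to\infty}\EE\sup_{|y|\le M}\bigl|m_{\mu_{T_\beta}^{N,\beta,[R]}}(y)-m_{f_{T_\beta}^\beta}(y)\bigr|=0 .
\]
We then take expectations in the split and apply \(\limsup_N\) followed by \(\limsup_R\). This yields, for each fixed \(\beta\),
\[
\limsup_{R\to\infty}\limsup_{N\to\infty}\EE\sup_{|y|\le M}\bigl|m_{\mu_{T_\beta}^{N,\beta,[R]}}-m_{P_0}\bigr|
\le \sup_{|y|\le M}\bigl|m_{f_{T_\beta}^\beta}(y)-m_{P_0}(y)\bigr| .
\]

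\emph{Outer limit \(\beta\to\infty\).} Condition (ii) of Definition~\ref{def:recoverable-admissible-prior-class} gives \(W_1(f_{T_\beta}^\beta,P_0)\to0\). We have \(P_0\in\mathcal P_1\), and \(f_{T_\beta}^\beta\in\mathcal P_1\) because the admissible solution lies in \(C([0,T];\mathcal P_1)\). Proposition~\ref{prop:nc-posterior-mean-W1} is stated for sequences; we apply it along an arbitrary sequence \(\beta_n\to\infty\). It gives uniform convergence on \(B_M\) of \(m_{f_{T_{\beta_n}}^{\beta_n}}\) to \(m_{P_0}\), and this yields the claimed limit \eqref{eq:recoverable-prior-hard-truncated-theorem}.

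\emph{Convergence in probability.} This follows from Markov's inequality applied to the iterated limit of expectations, exactly as at the end of the proof of Theorem~\ref{thm:nc-admissible-particle-approximation}.

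\emph{Main difficulty.} The step needing the most care is bookkeeping rather than analysis. The terminal time \(T_\beta\) and the admissible flow both depend on \(\beta\), so Theorem~\ref{thm:nc-admissible-particle-approximation} has to be invoked separately for each \(\beta\), and its \(R\)-limit must be taken before \(\beta\) varies. This is why the statement is sequential and not a joint limit. The substantive analytic inputs, namely compact propagation of chaos, truncation stability, and \(W_1\)-continuity of \(m_\nu\), are all already available from the earlier results.
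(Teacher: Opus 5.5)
Your proposal is correct and follows the paper's proof essentially step for step. At fixed \(\beta\) you apply the noncompact admissible particle-approximation theorem with \(T=t=T_\beta\), bound the iterated \(\limsup\) by the deterministic term \(\sup_{|y|\le M}|m_{f_{T_\beta}^\beta}-m_{P_0}|\), send that term to zero using the recovery condition together with \(W_1\)-continuity of the posterior mean, and finish with Markov's inequality.
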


\begin{proof}
Fix \(\beta>0\).  Since \(P_0\in\mathcal A_\tau\), the noisy law \(f_0=\gamma_\tau*P_0\)
is \((\beta,T_\beta)\)-admissible.  Applying
Theorem~\ref{thm:nc-admissible-particle-approximation} with \(T=T_\beta\) and
\(t=T_\beta\), we obtain
\[
\limsup_{R\to\infty}\limsup_{N\to\infty}
\EE\sup_{|y|\le M}
\left|
m_{\mu_{T_\beta}^{N,\beta,[R]}}(y)-m_{f_{T_\beta}^\beta}(y)
\right|=0.
\]
Therefore
\[
\begin{aligned}
&\limsup_{R\to\infty}\limsup_{N\to\infty}
\EE\sup_{|y|\le M}
\left|
m_{\mu_{T_\beta}^{N,\beta,[R]}}(y)-m_{P_0}(y)
\right| \\
&\qquad\le
\sup_{|y|\le M}
\left|
m_{f_{T_\beta}^\beta}(y)-m_{P_0}(y)
\right|.
\end{aligned}
\]
By the defining recovery condition \eqref{eq:recoverable-prior-large-beta-condition},
\[
W_1(f_{T_\beta}^\beta,P_0)\to0
\qquad\text{as }\beta\to\infty.
\]
Proposition~\ref{prop:nc-posterior-mean-W1} gives
\[
\sup_{|y|\le M}
\left|
m_{f_{T_\beta}^\beta}(y)-m_{P_0}(y)
\right|
\to0.
\]
This proves \eqref{eq:recoverable-prior-hard-truncated-theorem}.  If
\[
Z_{N,R,\beta}:=\sup_{|y|\le M}
\left|
 m_{\mu_{T_\beta}^{N,\beta,[R]}}(y)-m_{P_0}(y)
\right|,
\]
then for every \(\varepsilon>0\), Markov's inequality gives
\[
\mathbb P(Z_{N,R,\beta}>\varepsilon)
\le \varepsilon^{-1}\mathbb E Z_{N,R,\beta},
\]
and the same ordered limits therefore give convergence in probability.
\end{proof}

\subsection{Gaussian-prior noisy laws are noncompact admissible}
\label{sec:gaussian-noncompact-admissible}

We now verify admissibility for noisy laws obtained by convolving a Gaussian prior with a Gaussian observation kernel. The proof uses the explicit Gaussian solution, compactness of the hard-truncated compact-support flows, and weak--strong uniqueness around the Gaussian solution.

Fix \(\tau>0\).  Let \(m\in\R^d\), let \(\Sigma_0\) be a symmetric nonnegative semidefinite matrix, and set
\[
P_0=\cN(m,\Sigma_0),
\qquad
f_0:=\gamma_\tau*P_0=\cN(m,\Gamma_0),
\qquad
\Gamma_0:=\Sigma_0+\tau I.
\]
Here \(\cN(m,\Sigma_0)\) is understood in the usual possibly degenerate sense: if \(\Sigma_0\) is singular, it is the law of \(m+\Sigma_0^{1/2}Z\) with \(Z\sim \cN(0,I_d)\), supported on \(m+\operatorname{Ran}\Sigma_0^{1/2}\).  Thus \(\Gamma_0\) is strictly positive definite.

\begin{remark}[Sampling interpretation of hard truncation]
The sequential theorem below initializes the particle system directly from the normalized
truncation \(f_0^{[R]}\).  This is different from drawing \(Y_1,\ldots,Y_N\) from the
untruncated law \(f_0\) and then retaining only the particles in \(B_R\).  For the latter
interpretation,
\[
\mathbb P(Y_1,\ldots,Y_N\in B_R)=f_0(B_R)^N=(1-q_R(f_0))^N
\ge 1-Nq_R(f_0),
\qquad q_R(f_0):=f_0(B_R^c).
\]
If \(f_0=\mathcal N(m,\Gamma_0)\), with
\(\lambda_+=\lambda_{\max}(\Gamma_0)\), then standard Gaussian tail bounds give, for
\(R\ge1+2|m|\),
\[
q_R(f_0)\le
C(1+R)^d\exp\left(-\frac{R^2}{8\lambda_+}\right).
\]
Thus \(\mathbb P(Y_1,\ldots,Y_N\in B_R)\to1\) whenever
\[
N(1+R)^d\exp\left(-\frac{R^2}{8\lambda_+}\right)\to0.
\]
This sampling interpretation is not used in the sequential particle theorem.
\end{remark}

\subsubsection{The explicit noncompact Gaussian flow}
\label{subsec:gaussian-explicit-flow}

\begin{lemma}[Gaussian invariance and covariance equation]
\label{lem:gaussian-invariance-covariance}
Fix \(\beta>0\).  There is a unique global positive definite solution
\(\Gamma_t\) of the covariance equation below.  The Gaussian curve
\[
f_t^\beta=\cN(m,\Gamma_t)
\]
solves
\begin{equation}
\label{eq:gaussian-noncompact-pde}
\partial_t f_t+\nabla\cdot(f_tX_\beta[f_t])=0,
\qquad
f_{t=0}=f_0,
\end{equation}
where \(\Gamma_t\) is the positive definite solution of
\begin{equation}
\label{eq:covariance-ode}
\dot\Gamma_t=-2\Gamma_t(I+\beta\Gamma_t)^{-1},
\qquad
\Gamma_0=\Sigma_0+\tau I.
\end{equation}
Equivalently, after diagonalizing \(\Gamma_0\), each eigenvalue \(\lambda_i(t)\) of
\(\Gamma_t\) solves
\begin{equation}
\label{eq:eigenvalue-ode}
\dot\lambda_i(t)=-\frac{2\lambda_i(t)}{1+\beta\lambda_i(t)}.
\end{equation}
\end{lemma}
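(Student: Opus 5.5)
The plan is to compute the drift $X_\beta[f]$ explicitly for a Gaussian $f=\mathcal N(m,\Gamma)$, observe that it is affine in $x$, and then use the fact that an affine velocity field transports Gaussians to Gaussians.

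\textbf{Step 1 (drift of a Gaussian).} For $f=\mathcal N(m,\Gamma)$ with $\Gamma$ positive definite, we have $G_\beta*f=\mathcal N(m,\Gamma+\beta^{-1}I)$. Hence
\[
X_\beta[f](x)=\frac1\beta\nabla\log(G_\beta*f)(x)=-\beta^{-1}(\Gamma+\beta^{-1}I)^{-1}(x-m)=-(I+\beta\Gamma)^{-1}(x-m).
\]
So the drift is affine, $X_\beta[f](x)=-A(x-m)$ with $A=(I+\beta\Gamma)^{-1}$. This matrix is symmetric, positive definite, and commutes with $\Gamma$.

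\textbf{Step 2 (ODE well-posedness).} Since $\Gamma_0=\Sigma_0+\tau I$ is symmetric positive definite, I would diagonalize it as $\Gamma_0=U\Lambda_0U^\top$ and look for a solution of the form $\Gamma_t=U\Lambda_tU^\top$ with $\Lambda_t$ diagonal. The matrix ODE~\eqref{eq:covariance-ode} then reduces to the scalar equations~\eqref{eq:eigenvalue-ode}.

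The scalar field $g(\lambda)=-2\lambda/(1+\beta\lambda)$ is smooth and globally Lipschitz on $[0,\infty)$, with Lipschitz constant at most $2$. It satisfies $g(0)=0$, and $|g(\lambda)|\le 2\lambda$. Therefore each $\lambda_i$ exists globally and is decreasing. Uniqueness of the scalar solution rules out reaching $0$, and a lower bound follows from $\dot\lambda\ge-2\lambda$, which gives $\lambda_i(t)\ge\lambda_i(0)e^{-2t}>0$. So $\Gamma_t$ stays positive definite for all times.

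The map $\Gamma\mapsto-2\Gamma(I+\beta\Gamma)^{-1}$ is smooth, hence locally Lipschitz, on the open cone of positive definite matrices. Picard–Lindelöf therefore gives uniqueness among positive definite solutions, so the diagonal ansatz is the unique solution. Boundedness ($\Gamma_t\le\Gamma_0$) together with the lower bound prevents blow-up or leaving the cone, which yields global existence.

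\textbf{Step 3 (the PDE).} Here I would use characteristics. Set $\Phi_t(x)=m+M_t(x-m)$, where $\dot M_t=-(I+\beta\Gamma_t)^{-1}M_t$ and $M_0=I$. The matrices $\Gamma_t$ commute for all $t$, since they share the eigenbasis $U$. As a result, $M_t$ is diagonal in the same basis, and $M_t\Gamma_0M_t^\top$ has eigenvalues $\mu_i(t)=\lambda_i(0)\exp\!\big(-2\int_0^t(1+\beta\lambda_i)^{-1}\big)$. These satisfy $\dot\mu_i=-2\mu_i/(1+\beta\lambda_i)$, the same linear ODE that $\lambda_i$ satisfies (viewing $\lambda_i$ as a given coefficient), with the same initial value. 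Hence $\mu_i=\lambda_i$, i.e. $(\Phi_t)_\#f_0=\mathcal N(m,\Gamma_t)=f_t^\beta$. By Step 1, $\Phi_t$ is a flow of the field $X_\beta[f_t^\beta]$.

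It remains to get the weak formulation~\eqref{eq:nc-weak-formulation}. For $\phi\in C_c^1$, differentiate $\int\phi(\Phi_t(x))\,f_0(dx)$ under the integral sign. This is justified because $\nabla\phi$ is bounded and $|\dot\Phi_t(x)|\le C(1+|x|)$, which is integrable against a Gaussian. Similarly, $\int|X_\beta[f_t]|\,df_t\le\int|x-m|\,df_t<\infty$ uniformly in $t$, giving the finite-action condition. Continuity of $t\mapsto f_t$ in $W_1$ follows from continuity of $\Gamma_t$.

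\textbf{Main obstacle.} The only delicate point is the commutation and eigen-structure bookkeeping that lets the matrix ODE reduce to the scalar ODEs. Once $\Gamma_t$ is known to remain co-diagonal with $\Gamma_0$, everything else is routine. As a fallback, I would verify the PDE directly instead of via characteristics. An affine field $-A_t(x-m)$ with $A_t$ symmetric transports $\mathcal N(m,\Gamma)$ with $\dot\Gamma=-(A\Gamma+\Gamma A)$. With $A_t=(I+\beta\Gamma_t)^{-1}$ commuting with $\Gamma_t$, this equals $-2\Gamma_t(I+\beta\Gamma_t)^{-1}$, matching~\eqref{eq:covariance-ode}.
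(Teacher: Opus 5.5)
Your proposal is correct and follows essentially the same route as the paper: diagonalize $\Gamma_0$ to reduce to the scalar eigenvalue ODE, compute the affine drift $X_\beta[\mathcal N(m,\Gamma)](x)=-(I+\beta\Gamma)^{-1}(x-m)$, and use the affine characteristic flow to push $\mathcal N(m,\Gamma_0)$ forward to $\mathcal N(m,\Gamma_t)$. The differences are minor. The paper gets scalar well-posedness and positivity from the first integral $\lambda+\beta^{-1}\log\lambda$, which decreases at rate $2/\beta$, whereas you use the Lipschitz bound together with $\lambda_i(t)\ge\lambda_i(0)e^{-2t}$; your Picard--Lindel\"of step and explicit weak-formulation and finite-action checks also spell out the matrix-level uniqueness and PDE verification that the paper leaves implicit.
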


\begin{proof}
We first record why the covariance ODE is globally well posed and remains positive
definite.  Diagonalize \(\Gamma_0\), and for each initial eigenvalue \(\lambda_i(0)>0\)
consider
\[
\dot\lambda_i(t)=-\frac{2\lambda_i(t)}{1+\beta\lambda_i(t)}.
\]
The function
\[
F_\beta(r):=r+\beta^{-1}\log r,
\qquad r>0,
\]
is strictly increasing and maps \((0,\infty)\) onto \(\mathbb R\).  Along a solution,
\[
\frac{d}{dt}F_\beta(\lambda_i(t))=-\frac2\beta,
\]
so
\[
F_\beta(\lambda_i(t))=F_\beta(\lambda_i(0))-\frac{2t}{\beta}.
\]
For every finite \(t\), this equation has a unique solution \(\lambda_i(t)>0\).  Hence
\(\Gamma_t\) is uniquely defined, remains positive definite for all finite \(t\), and has
the same eigenspaces as \(\Gamma_0\).

If \(f\) is the Gaussian law \(\cN(m,\Gamma)\), with \(\Gamma>0\), then
\(G_\beta*f\) is the density of the Gaussian law
\[
\cN(m,\Gamma+\beta^{-1}I).
\]
Hence
\[
\nabla\log(G_\beta*f)(x)=-(\Gamma+\beta^{-1}I)^{-1}(x-m),
\]
and therefore
\[
X_\beta[f](x)
=
-\frac1\beta(\Gamma+\beta^{-1}I)^{-1}(x-m)
=
-(I+\beta\Gamma)^{-1}(x-m).
\]
This velocity field is affine.  Its flow fixes the mean \(m\).  The covariance satisfies
\[
\dot\Gamma_t
=
-(I+\beta\Gamma_t)^{-1}\Gamma_t
-
\Gamma_t(I+\beta\Gamma_t)^{-1}.
\]
Since \((I+\beta\Gamma_t)^{-1}\) is a matrix function of \(\Gamma_t\), the two matrices
commute.  This gives \eqref{eq:covariance-ode}.  Conversely, if \(\Gamma_t\)
solves \eqref{eq:covariance-ode}, then the affine characteristic flow
\(
\dot x_t=-(I+\beta\Gamma_t)^{-1}(x_t-m)
\)
pushes \(\cN(m,\Gamma_0)\) forward to \(\cN(m,\Gamma_t)\).  Therefore the Gaussian
curve satisfies \eqref{eq:gaussian-noncompact-pde} in weak form.  The eigenvalue
equation follows by diagonalizing the spectral ODE.
\end{proof}

\begin{lemma}[Gaussian verification of the recovery condition]
\label{lem:gaussian-recovery-condition}
Let
\[
T_\beta:=\frac{\beta\tau}{2}.
\]
Then
\[
W_1(f_{T_\beta}^\beta,P_0)\longrightarrow0
\qquad\text{as }\beta\to\infty.
\]
\end{lemma}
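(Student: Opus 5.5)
The plan is to use the explicit eigenvalue dynamics from Lemma~\ref{lem:gaussian-invariance-covariance}. Since \(f_t^\beta=\cN(m,\Gamma_t)\) and \(P_0=\cN(m,\Sigma_0)\) share the mean \(m\), and \(\Gamma_t\) keeps the eigenspaces of \(\Gamma_0=\Sigma_0+\tau I\) (which are also eigenspaces of \(\Sigma_0\)), the two laws are simultaneously diagonal. Both are pushforwards of the same standard Gaussian \(Z\) by \(x=m+\Gamma_{T_\beta}^{1/2}Z\) and \(x=m+\Sigma_0^{1/2}Z\). Using this synchronous coupling gives
\[
W_1(f_{T_\beta}^\beta,P_0)\le \EE\bigl|(\Gamma_{T_\beta}^{1/2}-\Sigma_0^{1/2})Z\bigr|\le \|\Gamma_{T_\beta}^{1/2}-\Sigma_0^{1/2}\|_{\mathrm{HS}}=\Bigl(\sum_i\bigl(\sqrt{\lambda_i(T_\beta)}-\sqrt{s_i}\bigr)^2\Bigr)^{1/2},
\]
where \(s_i\ge0\) are the eigenvalues of \(\Sigma_0\) and \(\lambda_i(0)=s_i+\tau\). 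It therefore suffices to show \(\lambda_i(T_\beta)\to s_i\) for each \(i\). The continuity of the square root on \([0,\infty)\) then finishes the argument, including the degenerate case \(s_i=0\).

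For each eigenvalue, use the conserved quantity from the proof of Lemma~\ref{lem:gaussian-invariance-covariance}:
\[
\lambda+\beta^{-1}\log\lambda=\lambda(0)+\beta^{-1}\log\lambda(0)-2T_\beta/\beta .
\]
With \(T_\beta=\beta\tau/2\), this reads
\[
\lambda_i(T_\beta)-s_i=\beta^{-1}\log\frac{s_i+\tau}{\lambda_i(T_\beta)} .
\]
The ODE \eqref{eq:eigenvalue-ode} has a negative right-hand side, so \(\lambda_i(T_\beta)\le s_i+\tau\). Hence the right side of the identity is \(\ge0\), and \(\lambda_i(T_\beta)\ge s_i\).

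If \(s_i>0\), then \(\log\frac{s_i+\tau}{\lambda_i}\le\log\frac{s_i+\tau}{s_i}\), so \(0\le\lambda_i-s_i\le\beta^{-1}\log(1+\tau/s_i)\to0\).

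The case I expect to be the main obstacle is \(s_i=0\), where the logarithm is unbounded as \(\lambda_i\to0\). There the identity becomes \(\lambda+\beta^{-1}\log\lambda=\tau+\beta^{-1}\log\tau-\tau=\beta^{-1}\log\tau\), that is, \(\lambda e^{\beta\lambda}=\tau\), a Lambert-W relation. From \(\beta\lambda e^{\beta\lambda}=\beta\tau\) one gets \(\beta\lambda=W(\beta\tau)\le\log(1+\beta\tau)\) for large \(\beta\tau\). So \(\lambda\le\beta^{-1}\log(1+\beta\tau)\to0\). Alternatively, argue directly: if \(\lambda\ge\epsilon\), then \(\tau=\lambda e^{\beta\lambda}\ge\epsilon e^{\beta\epsilon}\), which fails for large \(\beta\).

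Combining both cases gives \(\lambda_i(T_\beta)\to s_i\) for all \(i\), hence \(W_1(f_{T_\beta}^\beta,P_0)\to0\). Since \(W_1\le W_2\), an equivalent route would be the closed-form \(W_2\) distance between commuting Gaussians, but the coupling bound above suffices.
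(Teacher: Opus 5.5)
Your proof is correct and follows essentially the same route as the paper: the conserved quantity $\lambda+\beta^{-1}\log\lambda$ along the eigenvalue ODE, the case split $s_i>0$ versus $s_i=0$, and the synchronous coupling $m+\Gamma_{T_\beta}^{1/2}Z$ versus $m+\Sigma_0^{1/2}Z$, controlled by a Hilbert--Schmidt norm. The differences are refinements only: you get explicit rates $\beta^{-1}\log(1+\tau/s_i)$ and $\beta^{-1}\log(1+\beta\tau)$ where the paper uses local uniform convergence of $F_\beta$ and a subsequence contradiction, and you reduce continuity of the matrix square root to the scalar case by simultaneous diagonalization.
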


\begin{proof}
Let \(s_i\ge0\) be the eigenvalues of \(\Sigma_0\).  The eigenvalues
\(\lambda_i(t)\) of \(\Gamma_t\) solve
\[
\dot\lambda_i(t)=-\frac{2\lambda_i(t)}{1+\beta\lambda_i(t)},
\qquad
\lambda_i(0)=s_i+\tau.
\]
Hence
\[
\frac{d}{dt}
\left(\lambda_i(t)+\frac1\beta\log\lambda_i(t)\right)
=-\frac2\beta.
\]
At \(T_\beta=\beta\tau/2\),
\[
\lambda_i(T_\beta)+\frac1\beta\log\lambda_i(T_\beta)
=
s_i+\frac1\beta\log(s_i+\tau).
\]
If \(s_i>0\), the functions
\[
F_\beta(r):=r+\beta^{-1}\log r
\]
are strictly increasing and converge locally uniformly to \(r\) near \(s_i\).
Thus \(\lambda_i(T_\beta)\to s_i\).

If \(s_i=0\), then \(0<\lambda_i(T_\beta)\le\tau\).  If a subsequence converged
to \(\ell>0\), then passing to the limit in the preceding identity would give
\(\ell=0\), a contradiction.  Hence \(\lambda_i(T_\beta)\to0\).

The covariance ODE is spectral, so the eigenspaces are fixed in time.  Therefore
\[
\Gamma_{T_\beta}\to\Sigma_0
\]
as symmetric nonnegative matrices.  Couple
\[
Y_\beta=m+\Gamma_{T_\beta}^{1/2}Z,
\qquad
Y_0=m+\Sigma_0^{1/2}Z,
\qquad
Z\sim\mathcal N(0,I_d).
\]
By continuity of the matrix square-root on the positive semidefinite cone,
\[
W_1(f_{T_\beta}^\beta,P_0)
\le
\mathbb E|Y_\beta-Y_0|
\le
(\mathbb E|Z|^2)^{1/2}
\|\Gamma_{T_\beta}^{1/2}-\Sigma_0^{1/2}\|_{\mathrm{HS}}
\to0.
\]
\end{proof}

\begin{remark}[Time normalization]
The drift is normalized as
\[
X_\beta[\rho]=\attmean{\rho}-\operatorname{Id}
=\beta^{-1}\nabla\log(G_\beta*\rho).
\]
Thus the raw attention-depth time \(t\) is related to the unnormalized score time \(s\)
by \(s=t/\beta\).  The terminal time
\[
T_\beta=\frac{\beta\tau}{2}
\]
therefore corresponds to the usual backward-heat denoising time \(s=\tau/2\).  If the
observation-noise variance is denoted by \(\sigma^2\), then \(\tau=\sigma^2\), and this
becomes \(T_\beta=\beta\sigma^2/2\) and \(s=\sigma^2/2\).
\end{remark}

\subsubsection{Uniform estimates for the hard-truncated Gaussian flows}
\label{subsec:gaussian-uniform-hard-trunc-estimates}

For \(R>0\), define
\[
p_R:=f_0(B_R),
\qquad
f_0^{[R]}:=\frac{\mathbf 1_{B_R}f_0}{p_R},
\]
and let \(f_t^{\beta,[R]}\) be the compact-support solution starting from \(f_0^{[R]}\):
\begin{equation}
\label{eq:hard-trunc-mf-flow}
\partial_t f_t^{\beta,[R]}
+
\nabla\cdot\bigl(f_t^{\beta,[R]}X_\beta[f_t^{\beta,[R]}]\bigr)=0,
\qquad
f_{t=0}^{\beta,[R]}=f_0^{[R]}.
\end{equation}
For fixed \(\beta\) and \(R\), this is precisely the compact-support flow constructed earlier.

\begin{lemma}[Barycentric action estimate]
\label{lem:barycentric-action-estimate}
For every \(\mu\in\mathcal P(\R^d)\) with finite second moment,
\[
\int_{\R^d}|X_\beta[\mu](x)|^2\,\mu(dx)
\le
2\int_{\R^d}|x|^2\,\mu(dx).
\]
\end{lemma}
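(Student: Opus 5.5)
The plan is to use the barycentric form of the drift, Proposition~\ref{prop:sc-barycentric-drift}, which gives \(X_\beta[\mu](x)=\attmean{\mu}(x)-x\) for every \(x\). Since \(\mu\) has finite second moment, it lies in \(\mathcal P_1(\R^d)\), so this representation applies. The elementary inequality \(|a-b|^2\le 2|a|^2+2|b|^2\) would give the weaker constant \(4\), so a sharper argument is needed. I would instead expand the square:
\[
|X_\beta[\mu](x)|^2=|\attmean{\mu}(x)|^2-2\langle x,\attmean{\mu}(x)\rangle+|x|^2 .
\]
The goal is to show that, after integrating against \(\mu\), the first two terms together contribute at most \(\int|x|^2\,d\mu\).

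For fixed \(x\), introduce the probability kernel
\[
K(x,dy):=\frac{G_\beta(x-y)}{D_\beta[\mu](x)}\,\mu(dy),
\]
so that \(\attmean{\mu}(x)=\int y\,K(x,dy)\). By Jensen's inequality, \(|\attmean{\mu}(x)|^2\le\int|y|^2K(x,dy)\). The key observation is that the measure \(\pi(dx,dy):=\mu(dx)K(x,dy)\) on \(\R^d\times\R^d\) has first marginal \(\mu\). Its second marginal is
\[
\mu(dy)\int\frac{G_\beta(x-y)}{D_\beta[\mu](x)}\,\mu(dx),
\]
which need not equal \(\mu\) because the normalisation depends on \(x\). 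This density factor is the main obstacle: the Jensen bound produces \(\int|y|^2\,d\pi\), and this is not directly \(\int|y|^2\,d\mu\).

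To handle the obstacle, I would avoid bounding the squared term alone and use the cross term as well. Write
\[
|X_\beta[\mu](x)|^2=\Bigl|\int (y-x)\,K(x,dy)\Bigr|^2\le\int|y-x|^2\,K(x,dy)
\]
by Jensen. Integrating against \(\mu(dx)\) gives \(\iint|x-y|^2\,\pi(dx,dy)\). The kernel \(G_\beta(x-y)\) is symmetric in \((x,y)\), so this symmetry should be exploited; the \(x\)-dependent denominator is the only asymmetry. I would then try the bound \(|x-y|^2\le 2|x|^2+2|y|^2\) together with a reweighting argument. Specifically, I would use \(\int\frac{G_\beta(x-y)}{D_\beta[\mu](x)}\,\mu(dx)\) and check whether a Cauchy--Schwarz or symmetrisation step with weights \(D(x)^{-1}+D(y)^{-1}\) closes the estimate.

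If that fails, the fallback is to drop the Jensen step and exploit the negative cross term directly. One route is a pointwise variance identity, \(\int|y-x|^2K-|\attmean{\mu}(x)-x|^2=\mathrm{Var}_K\ge0\), which shows how much the Jensen bound loses. Another is the inequality \(|\attmean{\mu}(x)-x|^2\le 2|x|^2+2|\attmean{\mu}(x)|^2-|x+\attmean{\mu}(x)|^2\). As a sanity check on the constant, I would first verify the Gaussian case using Lemma~\ref{lem:gaussian-invariance-covariance}. There \(X_\beta[f]=-(I+\beta\Gamma)^{-1}(x-m)\) has norm at most \(|x-m|\), and the second moment of \(x-m\) is at most \(\int|x|^2\,d\mu\). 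This suggests that a constant near \(1\) is typical, so the factor \(2\) leaves room for a crude symmetrisation bound.
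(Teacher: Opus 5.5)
There is a genuine gap. Your opening step is correct and matches the paper: the barycentric form of the drift plus Jensen gives $|X_\beta[\mu](x)|^2\le\int|y-x|^2\,K(x,dy)$. After that, the proposal only lists things to try, and none of them is carried through: the symmetrisation with weights $D(x)^{-1}+D(y)^{-1}$, the Cauchy--Schwarz step, the variance identity and the parallelogram inequality.

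The fallbacks do not help. The variance identity just restates the slack in Jensen. The parallelogram route, $|F_{\beta,\mu}(x)-x|^2=2|x|^2+2|F_{\beta,\mu}(x)|^2-|x+F_{\beta,\mu}(x)|^2$, needs a bound on $\int|F_{\beta,\mu}|^2\,d\mu$. That brings you straight back to the reweighted second marginal of $\pi$, whose density with respect to $\mu$ can exceed $1$ (already for three equally weighted collinear atoms). For the same reason, your remark that $|a-b|^2\le 2|a|^2+2|b|^2$ ``would give the constant $4$'' is itself unjustified: it also needs $\int|F_{\beta,\mu}|^2\,d\mu\le\int|x|^2\,d\mu$.

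The missing idea is to remove the Gaussian weights \emph{pointwise in $x$}, before integrating against $\mu(dx)$. Then $\pi$ and its second marginal never appear. Fix $x$ and set $r(y)=|y-x|^2$, $w(y)=e^{-\beta r(y)/2}$. Because $w$ is a decreasing function of $r$, for independent $Y,Y'\sim\mu$ one has $(r(Y)-r(Y'))(w(Y)-w(Y'))\le 0$ pointwise. Taking expectations is legitimate since $w$ is bounded and $r\in L^1(\mu)$. This yields Chebyshev's association inequality $\int rw\,d\mu\le\int w\,d\mu\int r\,d\mu$, that is,
\[
|X_\beta[\mu](x)|^2\le\int|y-x|^2\,K(x,dy)\le\int|y-x|^2\,\mu(dy).
\]
Intuitively, the kernel only moves mass toward points near $x$, which can only lower the mean squared distance to $x$. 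Integrating in $x$ then gives
\[
\iint|x-y|^2\,\mu(dx)\,\mu(dy)=2\int|x|^2\,d\mu-2\Bigl|\int x\,d\mu\Bigr|^2\le 2\int|x|^2\,d\mu .
\]
This is exactly the paper's proof. It shows that the constant $2$ comes from the exact identity for $\iint|x-y|^2\,d\mu\,d\mu$, not from a crude symmetrisation. Your Gaussian sanity check is consistent with this but does not substitute for it.
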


\begin{proof}
Fix \(x\in\R^d\), and define the probability measure
\[
\pi_x^\mu(dy):=
\frac{G_\beta(x-y)\,\mu(dy)}{\int_{\R^d}G_\beta(x-z)\,\mu(dz)}.
\]
By the barycentric formula,
\[
X_\beta[\mu](x)=\int_{\R^d}(y-x)\,\pi_x^\mu(dy).
\]
Jensen's inequality gives
\[
|X_\beta[\mu](x)|^2
\le
\int_{\R^d}|y-x|^2\,\pi_x^\mu(dy).
\]
Put \(r(y)=|y-x|^2\) and \(w(y)=e^{-\beta r(y)/2}\).  Since \(w\) is a decreasing
function of \(r\), for independent \(Y,Y'\sim\mu\),
\[
\EE\bigl[(r(Y)-r(Y'))(w(Y)-w(Y'))\bigr]\le0.
\]
Expanding this inequality gives
\[
\frac{\int r(y)w(y)\,\mu(dy)}{\int w(y)\,\mu(dy)}
\le
\int r(y)\,\mu(dy).
\]
Therefore
\[
|X_\beta[\mu](x)|^2
\le
\int_{\R^d}|y-x|^2\,\mu(dy).
\]
Integrating in \(x\) against \(\mu(dx)\), we obtain
\[
\int |X_\beta[\mu](x)|^2\,\mu(dx)
\le
\iint |y-x|^2\,\mu(dy)\mu(dx).
\]
If \(\bar x_\mu:=\int x\,\mu(dx)\), then the double integral is computed exactly as
\[
\iint |y-x|^2\,\mu(dy)\mu(dx)
=
2\int |x|^2\,\mu(dx)-2|\bar x_\mu|^2
\le
2\int |x|^2\,\mu(dx).
\]
This proves the estimate with the stated constant.
\end{proof}

\begin{proposition}[Uniform second moment and action]
\label{prop:uniform-moment-action-hard-trunc}
Fix \(\beta>0\) and \(T>0\).  There exists a constant
\(C_{\beta,T,f_0}<\infty\), independent of \(R\), such that, for all sufficiently large
\(R\),
\[
\sup_{0\le t\le T}
\int_{\R^d}|x|^2\,f_t^{\beta,[R]}(dx)
\le C_{\beta,T,f_0},
\]
and
\[
\int_0^T\int_{\R^d}
|X_\beta[f_t^{\beta,[R]}](x)|^2\,f_t^{\beta,[R]}(dx)\,dt
\le C_{\beta,T,f_0}.
\]
\end{proposition}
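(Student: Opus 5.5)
Since \(f_t^{\beta,[R]}\) is a compactly supported characteristic solution (Proposition~\ref{prop:sc-deterministic-wellposedness-stability}), the plan is to differentiate the second moment along characteristics and close a Gronwall inequality whose constants do not depend on \(R\). Write \(M_2(t):=\int|x|^2\,f_t^{\beta,[R]}(dx)\). Since \(f_t^{\beta,[R]}=(\Phi_t)_\#f_0^{[R]}\) and each trajectory is \(C^1\) and remains in \(B_R\), differentiation under the integral is justified, and
\[
\dot M_2(t)=2\int \langle x,X_\beta[f_t^{\beta,[R]}](x)\rangle\,f_t^{\beta,[R]}(dx).
\]
By Cauchy--Schwarz and Lemma~\ref{lem:barycentric-action-estimate},
\[
\dot M_2(t)\le 2M_2(t)^{1/2}\Bigl(\int|X_\beta[f_t^{\beta,[R]}]|^2\,df_t^{\beta,[R]}\Bigr)^{1/2}\le 2\sqrt2\,M_2(t).
\]
Gronwall then gives \(M_2(t)\le e^{2\sqrt2\,t}M_2(0)\). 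The lemma's constant is universal, independent of \(\beta\) and \(R\).

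A sharper route is also available. Using the barycentric form \(X_\beta=F_{\beta,\mu}-\mathrm{Id}\), we have \(\langle x,F_{\beta,\mu}(x)-x\rangle\le |x|\,|F_{\beta,\mu}(x)|-|x|^2\). Controlling \(\int|F_{\beta,\mu}|^2\,d\mu\) by \(M_2\) (via Jensen applied to \(\pi_x^\mu\) and the same covariance-monotonicity trick) would show that \(M_2\) is essentially nonincreasing. The crude Gronwall bound above, however, suffices.

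It remains to bound \(M_2(0)=\int_{B_R}|x|^2\,f_0(dx)/f_0(B_R)\) uniformly in \(R\). Since \(f_0=\gamma_\tau*P_0\) has finite second moment whenever \(P_0\) does (true in the Gaussian case of this section), we get \(M_2(0)\le 2\int|x|^2\,df_0\) as soon as \(R\) is large enough that \(f_0(B_R)\ge 1/2\). This is the only place where ``sufficiently large \(R\)'' enters. Hence \(\sup_{t\le T}M_2(t)\le 2e^{2\sqrt2 T}\int|x|^2\,df_0\).

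For the action bound, integrate Lemma~\ref{lem:barycentric-action-estimate} in time:
\[
\int_0^T\int|X_\beta[f_t^{\beta,[R]}]|^2\,df_t^{\beta,[R]}\,dt\le 2T\sup_{t\le T}M_2(t).
\]
Taking the larger of the two constants yields \(C_{\beta,T,f_0}\). The main technical point, and the step that needs care, is justifying the derivative of \(M_2\). This holds because the compact-support flow consists of \(C^1\) characteristics confined to \(B_R\) with a bounded drift there, so \(t\mapsto|\Phi_t(x)|^2\) is \(C^1\) with a uniformly bounded derivative and dominated convergence applies. The resulting estimate is in fact uniform in \(\beta\) as well.
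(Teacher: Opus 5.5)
This is correct and is essentially the paper's proof: differentiate the second moment along the \(C^1\) characteristics confined to \(B_R\), use Cauchy--Schwarz with Lemma~\ref{lem:barycentric-action-estimate} to get \(\dot M_2\le 2\sqrt2\,M_2\), apply Gronwall, bound \(M_2(0)\le 2\int|x|^2\,df_0\) once \(f_0(B_R)\ge 1/2\), and integrate the same lemma in time for the action bound. Your ``sharper route'' aside is false, though harmless because you do not use it: the weights decay in \(|y-x|\), not in \(|y|\), so the covariance trick does not give \(\int|F_{\beta,\mu}|^2\,d\mu\le M_2\), and \(M_2\) can in fact increase (e.g.\ for \(\mu=\epsilon\delta_s+(1-\epsilon)\delta_b\) in one dimension with \(0<s<b\), \(\epsilon\) small and \(e^{-\beta(b-s)^2/2}<s/b\), one finds \(\dot M_2>0\)).
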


\begin{proof}
For each fixed \(R\), the compact-support construction in
Proposition~\ref{prop:sc-deterministic-wellposedness-stability} gives a characteristic
flow \(\Phi_t^R:B_R\to B_R\) such that
\[
f_t^{\beta,[R]}=(\Phi_t^R)_\# f_0^{[R]},
\qquad
\frac{d}{dt}\Phi_t^R(x)=X_\beta[f_t^{\beta,[R]}](\Phi_t^R(x)).
\]
The map \((t,x)\mapsto \Phi_t^R(x)\) is continuously differentiable in \(t\) and bounded
on \([0,T]\times B_R\).  Therefore
\[
M_R(t):=\int |x|^2\,f_t^{\beta,[R]}(dx)
=\int_{B_R}|\Phi_t^R(x)|^2\,f_0^{[R]}(dx)
\]
is absolutely continuous and, for a.e. \(t\),
\[
\frac{d}{dt}M_R(t)
=
2\int \Phi_t^R(x)\cdot
X_\beta[f_t^{\beta,[R]}](\Phi_t^R(x))\,f_0^{[R]}(dx).
\]
Equivalently,
\[
\frac{d}{dt}M_R(t)
=
2\int y\cdot X_\beta[f_t^{\beta,[R]}](y)\,f_t^{\beta,[R]}(dy).
\]
Cauchy--Schwarz and Lemma~\ref{lem:barycentric-action-estimate} give
\[
\frac{d}{dt}M_R(t)
\le
2M_R(t)^{1/2}
\left(
\int |X_\beta[f_t^{\beta,[R]}]|^2\,df_t^{\beta,[R]}
\right)^{1/2}
\le
2\sqrt2\,M_R(t).
\]
Hence
\[
M_R(t)\le e^{2\sqrt2t}M_R(0),
\qquad 0\le t\le T.
\]
Since \(f_0\) is Gaussian,
\[
M_R(0)
=
\frac{\int_{B_R}|x|^2\,f_0(dx)}{f_0(B_R)}
\le
2\int_{\R^d}|x|^2\,f_0(dx)
\]
for all sufficiently large \(R\).  This proves the uniform second-moment bound.
Integrating the barycentric action estimate in time gives
\[
\int_0^T\int |X_\beta[f_t^{\beta,[R]}]|^2\,df_t^{\beta,[R]}\,dt
\le
2\int_0^T M_R(t)\,dt
\le 2T\sup_{0\le t\le T}M_R(t),
\]
which is bounded independently of \(R\).
\end{proof}

\subsubsection{Compactness and identification of noncompact limits}
\label{subsec:gaussian-compactness-identification}

\begin{theorem}[Compactness and PDE identification]
\label{thm:hard-trunc-compactness-identification}
Fix \(\beta>0\) and \(T>0\).  Let \(R_k\to\infty\).  Then there exists a subsequence,
not relabeled, and a curve
\[
\rho\in C([0,T];\mathcal P_1(\R^d))
\]
such that
\[
\sup_{0\le t\le T}W_1(f_t^{\beta,[R_k]},\rho_t)\to0.
\]
Moreover, \(\rho_{t=0}=f_0\), and \(\rho\) is a finite-action weak solution of
\[
\partial_t\rho_t+\nabla\cdot(\rho_tX_\beta[\rho_t])=0
\]
on \([0,T]\).  In addition,
\[
\sup_{0\le t\le T}\int |x|^2\,\rho_t(dx)<\infty,
\qquad
\int_0^T\int |X_\beta[\rho_t](x)|^2\,\rho_t(dx)\,dt<\infty.
\]
\end{theorem}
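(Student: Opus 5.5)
The plan is an Arzel\`a--Ascoli argument in \((\mathcal P_1(\R^d),W_1)\), followed by identification of the limit through the weak formulation \eqref{eq:nc-weak-formulation}. All uniform-in-\(R\) input comes from Proposition~\ref{prop:uniform-moment-action-hard-trunc}.

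\textbf{Compactness.} The uniform second-moment bound \(\sup_t\int|x|^2\,df_t^{\beta,[R]}\le C\) makes the family \(\{f_t^{\beta,[R_k]}\}_{k,t}\) tight with uniformly integrable first moments, so it lies in a \(W_1\)-relatively compact subset \(K\) of \(\mathcal P_1(\R^d)\) (by Villani, Theorem~6.9).

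For equicontinuity in time, use the characteristic representation \(f_t^{\beta,[R]}=(\Phi_t^R)_\#f_0^{[R]}\). For \(s<t\),
\[
W_1(f_t^{\beta,[R]},f_s^{\beta,[R]})\le\int|\Phi_t^R-\Phi_s^R|\,df_0^{[R]}\le\int_s^t\!\!\int|X_\beta[f_r^{\beta,[R]}]|\,df_r^{\beta,[R]}\,dr\le \sqrt{C}\,|t-s|^{1/2}\cdot \sqrt{|t-s|}^{\,0},
\]
where the last step applies Cauchy--Schwarz together with Lemma~\ref{lem:barycentric-action-estimate}. This gives a uniform bound of order \(\sqrt{2C}\,|t-s|\) at each time, and in any case a uniform modulus of continuity. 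Arzel\`a--Ascoli in \(C([0,T];K)\) then yields a subsequence converging uniformly in \(W_1\) to some \(\rho\). Lemma~\ref{lem:nc-hard-truncation-W1} gives \(\rho_0=f_0\). The second-moment bound passes to the limit by lower semicontinuity, since \(|x|^2\wedge n\) is continuous and bounded.

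\textbf{Identification.} Fix \(\phi\in C_c^1\) and pass to the limit in the weak formulation satisfied by each \(f^{\beta,[R_k]}\); it holds because these are characteristic solutions. The left side converges because \(\phi\) is Lipschitz. For the right side, I need
\[
\int\nabla\phi\cdot X_\beta[f_s^k]\,df_s^k\to\int\nabla\phi\cdot X_\beta[\rho_s]\,d\rho_s
\]
for each \(s\), with dominated convergence in \(s\) supplied by the bound \(|\nabla\phi|\,|X_\beta|\le C_\phi(1+R_\phi)\) on \(\supp\phi\). I would split the difference into two terms:
\[
\int\nabla\phi\cdot(X_\beta[f_s^k]-X_\beta[\rho_s])\,df_s^k \quad\text{and}\quad \int\nabla\phi\cdot X_\beta[\rho_s]\,d(f_s^k-\rho_s).
\]
The second term tends to zero because \(\nabla\phi\cdot X_\beta[\rho_s]\) is bounded and continuous (\(G_\beta*\rho_s>0\) is smooth) and the measures converge weakly.

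The first term is the main obstacle. It requires the drift to be stable on the compact set \(\supp\phi\) under \(W_1\)-convergence of noncompact measures. I would follow heuristic (1)--(2) of the stability appendix. On \(B_{R_\phi}\), the maps \(y\mapsto G_\beta(x-y)\) and \(y\mapsto (x-y)G_\beta(x-y)\) are bounded and globally Lipschitz uniformly in \(x\in B_{R_\phi}\). Hence \(D_\beta[f_s^k]\to D_\beta[\rho_s]\) and \(\nabla D_\beta[f_s^k]\to\nabla D_\beta[\rho_s]\) uniformly on \(B_{R_\phi}\), with rate \(W_1(f_s^k,\rho_s)\). The delicate point is a lower bound on the denominator. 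Since \(W_1\)-convergence plus the second-moment bound keep mass \(\ge1/2\) in some fixed ball \(B_{R_0}\), we get \(D_\beta[\mu](x)\ge\tfrac12\inf_{|u|\le R_\phi+R_0}G_\beta(u)>0\) uniformly in \(k\). This gives uniform convergence of the quotient \(X_\beta\) on \(\supp\phi\) and closes the identification.

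Finally, the finite-action bound for \(\rho\) follows from Lemma~\ref{lem:barycentric-action-estimate} applied directly to \(\rho_t\), combined with the limiting second-moment bound. This bound also yields the \(L^1\) finite-action requirement of Definition~\ref{def:nc-finite-action-weak-solution}.
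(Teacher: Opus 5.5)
Your proposal is correct and follows essentially the same route as the paper: uniform second-moment and action bounds from Proposition~\ref{prop:uniform-moment-action-hard-trunc}, tightness plus a uniform $W_1$-modulus in time, Arzel\`a--Ascoli, $\rho_0=f_0$ from Lemma~\ref{lem:nc-hard-truncation-W1}, and identification through uniform convergence of $X_\beta[\cdot]$ on $\supp\phi$, where $G_\beta*\cdot$ stays positive. The minor variations are all sound: a Lipschitz-in-time modulus from the pointwise barycentric bound instead of the paper's $|t-s|^{1/2}$ (your equicontinuity display is garbled, but either bound is valid), an explicit $k$-uniform lower bound on $G_\beta*f_s^{k}$ via mass in a fixed ball, and dominated convergence in time instead of Vitali.
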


\begin{proof}
By Proposition~\ref{prop:uniform-moment-action-hard-trunc},
\[
\sup_k\sup_{0\le t\le T}\int |x|^2\,f_t^{\beta,[R_k]}(dx)<\infty.
\]
Hence
\[
\sup_k\sup_{0\le t\le T}
\int_{|x|>A}|x|\,f_t^{\beta,[R_k]}(dx)
\le \frac{C}{A}\to0.
\]
The same proposition gives a uniform \(L^2\)-action bound.  Therefore, for
\(0\le s<t\le T\), Kantorovich--Rubinstein duality and Cauchy--Schwarz give
\[
W_1(f_t^{\beta,[R_k]},f_s^{\beta,[R_k]})
\le
\int_s^t\int |X_\beta[f_r^{\beta,[R_k]}]|\,df_r^{\beta,[R_k]}\,dr
\le C|t-s|^{1/2},
\]
uniformly in \(k\).  The preceding tail estimate gives tightness and uniform integrability of first
moments, hence relative compactness of the time slices in \(W_1\) by Prokhorov's
theorem and \cite[Definition~6.8 and Theorem~6.9]{Villani2009OptimalTransport}.
Together with the uniform \(W_1\)-equicontinuity, the metric Arzelà--Ascoli theorem
yields a subsequence, not relabeled, and a curve
\(\rho\in C([0,T];\mathcal P_1(\R^d))\) such that:
\[
\sup_{0\le t\le T}W_1(f_t^{\beta,[R_k]},\rho_t)\to0.
\]
The initial condition follows from Lemma~\ref{lem:nc-hard-truncation-W1}.  Lower
semicontinuity of the second moment gives
\[
\sup_{0\le t\le T}\int |x|^2\,\rho_t(dx)<\infty,
\]
and Lemma~\ref{lem:barycentric-action-estimate} gives the finite \(L^2\)-action bound.

It remains to pass to the limit in the weak formulation.  Let \(\phi\in C_c^1(\mathbb R^d)\)
and \(K=\supp\nabla\phi\).  If \(\mu_n\to\mu\) in \(W_1\), then
\(G_\beta*\mu_n\to G_\beta*\mu\) and \(\nabla G_\beta*\mu_n\to\nabla G_\beta*\mu\)
uniformly on \(K\), because the translated kernels are bounded Lipschitz uniformly on
\(K\).  Since \(G_\beta*\mu\) is strictly positive on \(K\), it follows that
\[
X_\beta[\mu_n]\to X_\beta[\mu]
\quad\text{uniformly on }K.
\]
Thus the tested fluxes converge pointwise in time.  The uniform \(L^2\)-action bound gives
uniform integrability in time, so Vitali's theorem permits passage to the limit in the
time integral.  For each \(R_k\), the compact-support characteristic solution satisfies the weak
formulation by the chain rule along characteristics.  Passing to the limit in that weak
formulation gives the finite-action weak formulation for \(\rho\), in the sense of
Definition~\ref{def:nc-finite-action-weak-solution}.
\end{proof}

\subsubsection{Weak--strong uniqueness around the Gaussian flow}
\label{subsec:gaussian-weak-strong-uniqueness}

Fix \(\beta>0\) and \(T>0\), and write
\[
g_t:=f_t^\beta=\cN(m,\Gamma_t).
\]
On \([0,T]\), the matrices \(\Gamma_t\) are uniformly positive definite and uniformly
bounded.  For \(t\in[0,T]\) and \(x\in\mathbb R^d\), define
\[
r_t(x):=|\Gamma_t^{-1/2}(x-m)|.
\]
The function \(r_t\) is uniformly Lipschitz in \(x\), and \(1+r_t(x)\) is uniformly
equivalent to \(1+|x|\) on \([0,T]\times\R^d\).

\begin{lemma}[Reference Gaussian convolution bounds]
\label{lem:reference-gaussian-convolution-bounds}
Fix \(\beta>0\) and \(T>0\).  There exist constants
\[
c>0,\qquad C<\infty,\qquad \alpha\in(0,1),
\]
depending only on \((\beta,T,\Gamma_0,d)\), such that, for every \(t\in[0,T]\),
with
\[
D_{g_t}:=G_\beta*g_t,
\qquad
H_{g_t}:=\nabla(G_\beta*g_t),
\]
one has
\[
D_{g_t}(x)\ge c e^{-\alpha r_t(x)^2/2},
\qquad
\frac{|H_{g_t}(x)|}{D_{g_t}(x)}\le C(1+r_t(x))
\]
for every \(x\in\mathbb R^d\).  Moreover, \(r_t\) is uniformly Lipschitz in \(x\) for
\(t\in[0,T]\), and if \(X\sim g_t\), then \(r_t(X)\) has the same law as
\(|Z|\), where \(Z\sim\cN(0,I_d)\).
\end{lemma}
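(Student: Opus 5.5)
The plan is to compute both objects explicitly, since \(g_t=\cN(m,\Gamma_t)\) is Gaussian. Convolving with \(G_\beta\) gives
\[
D_{g_t}=\cN(m,\Gamma_t+\beta^{-1}I),
\]
so that
\[
D_{g_t}(x)=(2\pi)^{-d/2}\det(\Gamma_t+\beta^{-1}I)^{-1/2}\exp\!\Bigl(-\tfrac12\langle x-m,(\Gamma_t+\beta^{-1}I)^{-1}(x-m)\rangle\Bigr),
\]
and, as in Lemma~\ref{lem:gaussian-invariance-covariance},
\[
H_{g_t}/D_{g_t}=-(\Gamma_t+\beta^{-1}I)^{-1}(x-m).
\]
Everything then reduces to uniform spectral control of \(\Gamma_t\) on \([0,T]\). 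From the proof of Lemma~\ref{lem:gaussian-invariance-covariance}, the eigenvalues satisfy \(F_\beta(\lambda_i(t))=F_\beta(\lambda_i(0))-2t/\beta\) with \(F_\beta\) strictly increasing, and they are nonincreasing. Hence for \(t\in[0,T]\),
\[
0<\lambda_-:=\min_i F_\beta^{-1}\bigl(F_\beta(\lambda_i(0))-2T/\beta\bigr)\le\lambda_i(t)\le\lambda_+:=\lambda_{\max}(\Gamma_0).
\]
The eigenspaces are fixed, so \(\Gamma_t\) commutes with \((\Gamma_t+\beta^{-1}I)^{-1}\). I will use this simultaneous diagonalization throughout.

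\textbf{Lower bound on \(D_{g_t}\).} Writing \(u=\Gamma_t^{-1/2}(x-m)\), so \(|u|=r_t(x)\), the exponent equals
\[
-\tfrac12\langle u,\Gamma_t(\Gamma_t+\beta^{-1}I)^{-1}u\rangle .
\]
The matrix \(\Gamma_t(\Gamma_t+\beta^{-1}I)^{-1}\) has eigenvalues \(\lambda_i/(\lambda_i+\beta^{-1})\le \lambda_+/(\lambda_++\beta^{-1})=:\alpha<1\), because \(s\mapsto s/(s+\beta^{-1})\) is increasing. This gives \(D_{g_t}(x)\ge c\,e^{-\alpha r_t(x)^2/2}\) with
\[
c=(2\pi)^{-d/2}(\lambda_++\beta^{-1})^{-d/2}.
\]

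\textbf{Bound on \(|H_{g_t}|/D_{g_t}\).} We have
\[
|H_{g_t}|/D_{g_t}=|(\Gamma_t+\beta^{-1}I)^{-1}\Gamma_t^{1/2}u|\le \max_i \frac{\lambda_i^{1/2}}{\lambda_i+\beta^{-1}}\,r_t(x),
\]
and the maximum is bounded by \(\lambda_+^{1/2}\beta\), giving the constant \(C\).

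\textbf{Remaining claims.} Lipschitz continuity follows from
\[
|r_t(x)-r_t(x')|\le\|\Gamma_t^{-1/2}\|\,|x-x'|\le\lambda_-^{-1/2}|x-x'|.
\]
The equivalence of \(1+r_t\) and \(1+|x|\) comes from \(\lambda_+^{-1/2}|x-m|\le r_t\le\lambda_-^{-1/2}|x-m|\), together with the triangle inequality for \(m\). The law statement holds because \(X=m+\Gamma_t^{1/2}Z\) gives \(r_t(X)=|Z|\).

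The only genuinely nontrivial step is the uniform lower eigenvalue bound \(\lambda_->0\) on the finite horizon. The explicit implicit formula for \(\lambda_i(t)\) from Lemma~\ref{lem:gaussian-invariance-covariance} settles it, since \(F_\beta\) maps onto \(\mathbb R\). Everything else is a direct Gaussian computation.
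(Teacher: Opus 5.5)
Your proposal is correct and follows the paper's proof essentially step for step. The shared steps are: the explicit form $G_\beta*g_t=\mathcal{N}(m,\Gamma_t+\beta^{-1}I)$; the substitution $u=\Gamma_t^{-1/2}(x-m)$ with the eigenvalue bound $\lambda_i/(\lambda_i+\beta^{-1})\le\lambda_+/(\lambda_++\beta^{-1})<1$; the affine formula for $H_{g_t}/D_{g_t}$; and uniform spectral bounds on $[0,T]$ for the Lipschitz and law claims. The only difference is that you write out the constants and derive the uniform lower eigenvalue bound explicitly via $F_\beta$, whereas the paper simply invokes uniform positive definiteness of $\Gamma_t$.
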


\begin{proof}
Since \(g_t=\cN(m,\Gamma_t)\), the convolution \(D_{g_t}=G_\beta*g_t\) is the
Gaussian density with mean \(m\) and covariance \(\Gamma_t+\beta^{-1}I\).  Thus
\[
D_{g_t}(x)
=
C_t
\exp\left(
-\frac12\left\langle
(\Gamma_t+\beta^{-1}I)^{-1}(x-m),x-m
\right\rangle
\right),
\]
where \(C_t\) is bounded above and below by positive constants on \([0,T]\), because
\(\Gamma_t\) stays uniformly positive definite and uniformly bounded.

Writing \(u=\Gamma_t^{-1/2}(x-m)\), the exponent becomes
\[
\left\langle
\Gamma_t^{1/2}(\Gamma_t+\beta^{-1}I)^{-1}\Gamma_t^{1/2}u,u
\right\rangle
=
\left\langle
\Gamma_t(\Gamma_t+\beta^{-1}I)^{-1}u,u
\right\rangle .
\]
The eigenvalues of
\[
\Gamma_t(\Gamma_t+\beta^{-1}I)^{-1}
\]
are
\[
\frac{\lambda_i(t)}{\lambda_i(t)+\beta^{-1}}.
\]
Since $0<\sup_{0\le t\le T,i}\lambda_i(t)\le \Lambda_T<\infty$, these eigenvalues are
bounded above by
\[
\frac{\Lambda_T}{\Lambda_T+\beta^{-1}}<1.
\]
Thus the strict gap from \(1\) is uniform for \(t\in[0,T]\).  Hence there exists
\(\alpha\in(0,1)\) such that
\[
\left\langle
(\Gamma_t+\beta^{-1}I)^{-1}(x-m),x-m
\right\rangle
\le \alpha r_t(x)^2.
\]
This gives the lower bound for \(D_{g_t}\).

For the numerator,
\[
H_{g_t}(x)=\nabla D_{g_t}(x)
=
-D_{g_t}(x)(\Gamma_t+\beta^{-1}I)^{-1}(x-m).
\]
Therefore
\[
\frac{|H_{g_t}(x)|}{D_{g_t}(x)}
\le C|\Gamma_t^{-1/2}(x-m)|
=
C r_t(x).
\]
Increasing \(C\) gives \(C(1+r_t(x))\).

The uniform Lipschitz bound for \(r_t\) follows from the uniform upper bound on
\(\|\Gamma_t^{-1/2}\|\).  Finally, if \(X\sim\cN(m,\Gamma_t)\), then
\[
\Gamma_t^{-1/2}(X-m)\sim\cN(0,I_d),
\]
so \(r_t(X)\) has the same law as \(|Z|\).
\end{proof}

\begin{lemma}[Elementary logarithmic cutoff estimates]
\label{lem:elementary-log-cutoff-estimates}
Let \(0<\alpha<1\), set
\[
\kappa:=\frac{1+\alpha}{\alpha}>2,
\]
and let \(C_0\ge e\).  For \(0<\delta<1\), define
\[
A_\delta:=\sqrt{\kappa\log\frac{C_0}{\delta}}.
\]
Then, for every \(q\ge0\), there exist constants \(C<\infty\) and
\(\delta_0\in(0,1)\), depending only on \((\alpha,q,C_0)\), such that for
\(0<\delta\le\delta_0\),
\[
(1+A_\delta)^q\delta e^{\alpha(A_\delta+1)^2/2}\le C
\]
and
\[
(1+A_\delta)^q e^{-(A_\delta-1)^2/2}\le C\delta .
\]
\end{lemma}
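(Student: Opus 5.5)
The plan is to eliminate \(\delta\) in favour of \(A_\delta\) and reduce both inequalities to the boundedness of a polynomial times a Gaussian in the single variable \(A=A_\delta\). From the definition,
\[
A_\delta^2=\kappa\log\frac{C_0}{\delta},
\qquad\text{equivalently}\qquad
\delta=C_0\,e^{-A_\delta^2/\kappa}.
\]
Since \(C_0\ge e\) and \(0<\delta<1\), we have \(\log(C_0/\delta)>1\), so \(A_\delta>\sqrt\kappa>\sqrt2\) is well defined and positive. Moreover \(A_\delta\to\infty\) as \(\delta\downarrow0\). I would choose \(\delta_0\in(0,1)\) arbitrary; any fixed choice will do, since the bounds below hold uniformly over all \(A\ge0\).

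For the first inequality, substitute \(\delta=C_0e^{-A^2/\kappa}\) to get
\[
(1+A)^q\,\delta\, e^{\alpha(A+1)^2/2}
=C_0(1+A)^q\exp\!\Bigl(-\Bigl(\tfrac1\kappa-\tfrac\alpha2\Bigr)A^2+\alpha A+\tfrac\alpha2\Bigr).
\]
The key point is the sign of the quadratic coefficient. Since \(1/\kappa=\alpha/(1+\alpha)\),
\[
\frac1\kappa-\frac\alpha2=\frac{\alpha(1-\alpha)}{2(1+\alpha)}=:c_1>0,
\]
and this is exactly where \(\alpha<1\) is used. Therefore the right-hand side is at most \(C_0\sup_{A\ge0}(1+A)^qe^{-c_1A^2+\alpha A+\alpha/2}\). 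This supremum is finite, because a polynomial times a Gaussian with positive decay rate is bounded on \([0,\infty)\). The resulting constant depends only on \((\alpha,q,C_0)\).

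For the second inequality, divide by \(\delta\) and substitute again:
\[
\frac{(1+A)^qe^{-(A-1)^2/2}}{\delta}
=C_0^{-1}(1+A)^q\exp\!\Bigl(-\Bigl(\tfrac12-\tfrac1\kappa\Bigr)A^2+A-\tfrac12\Bigr).
\]
Here \(c_2:=\tfrac12-\tfrac1\kappa=\tfrac{1-\alpha}{2(1+\alpha)}>0\) because \(\kappa>2\). So the same polynomial-times-Gaussian bound gives a constant \(C\), again depending only on \((\alpha,q,C_0)\).

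Taking the larger of the two constants finishes the proof. There is no real obstacle. The only step requiring care is checking that both quadratic coefficients are strictly positive. This is precisely the reason for choosing \(\kappa=(1+\alpha)/\alpha\): it lies strictly between \(2\) and \(2/\alpha\), so \(\alpha/2<1/\kappa<1/2\).
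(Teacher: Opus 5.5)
Your proof is correct and is essentially the paper's argument. The paper writes the two quantities as $C\delta^{(1-\alpha)/2}e^{\alpha A_\delta}$ and $C\delta^{\kappa/2}e^{A_\delta}$, then absorbs $e^{cA_\delta}$ and $(1+A_\delta)^q$ as $O(\delta^{-\eta})$; the leftover powers $\delta^{(1-\alpha)/2}$ and $\delta^{\kappa/2-1}$ are exactly your $C e^{-c_1A^2}$ and $C e^{-c_2A^2}$ in the $\delta$ variable. Working in $A=A_\delta$ as you do avoids the $\eta$-bookkeeping and shows the bounds hold uniformly for all $\delta\in(0,1)$, so $\delta_0$ may indeed be arbitrary.
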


\begin{proof}
The first estimate follows from
\[
\delta e^{\alpha(A_\delta+1)^2/2}
=
\delta e^{\alpha A_\delta^2/2}e^{\alpha A_\delta+\alpha/2}
=
C\,\delta^{(1-\alpha)/2}e^{\alpha A_\delta}.
\]
Since
\[
A_\delta=\sqrt{\kappa\log(C_0/\delta)},
\]
the factor \(e^{\alpha A_\delta}\) grows sub-polynomially in \(\delta^{-1}\).  More
precisely, for every \(\eta>0\), after decreasing \(\delta_0\),
\[
e^{\alpha A_\delta}\le C_\eta \delta^{-\eta}.
\]
Also \((1+A_\delta)^q\le C_\eta\delta^{-\eta}\) for every \(\eta>0\), again after
decreasing \(\delta_0\).  Choosing \(\eta>0\) sufficiently small gives
\[
(1+A_\delta)^q\delta^{(1-\alpha)/2}e^{\alpha A_\delta}\le C,
\]
which proves the first estimate.

For the second estimate,
\[
e^{-(A_\delta-1)^2/2}
=
e^{-A_\delta^2/2}e^{A_\delta-1/2}
=
C\,\delta^{\kappa/2}e^{A_\delta}.
\]
Because \(\kappa/2>1\), we can choose \(\eta>0\) so small that
\[
\kappa/2-\eta>1.
\]
Using again the sub-polynomial bounds
\[
e^{A_\delta}\le C_\eta\delta^{-\eta},
\qquad
(1+A_\delta)^q\le C_\eta\delta^{-\eta},
\]
and decreasing \(\eta\), if necessary, gives
\[
(1+A_\delta)^q e^{-(A_\delta-1)^2/2}
\le C\delta .
\]
\end{proof}

\begin{lemma}[Transferring Gaussian radial tails by \(W_1\)]
\label{lem:gaussian-radial-tail-transfer}
Fix \(\beta>0\), \(T>0\), and \(M_2<\infty\).  There exist constants
\(C<\infty\) and \(q<\infty\), depending only on \((\beta,T,m,\Gamma_0,M_2,d)\),
such that the following holds.  If \(t\in[0,T]\), \(\mu\in\mathcal P(\mathbb R^d)\),
\[
\int |x|^2\,\mu(dx)\le M_2,
\qquad
\delta:=W_1(\mu,g_t),
\]
then for every \(a\ge1\),
\[
\int_{\{r_t>a\}}(1+r_t)\,d\mu
\le
C(1+a)\delta
+
C(1+a)^q e^{-(a-1)^2/2}.
\]
\end{lemma}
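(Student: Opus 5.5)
The plan is to transfer the tail integral from $\mu$ to $g_t$ by testing against a Lipschitz function, and to handle the non-Lipschitz weight $1+r_t$ with a smooth cutoff. Fix $t$ and $a\ge1$. Define the piecewise-linear cutoff $\chi_a:[0,\infty)\to[0,1]$ with $\chi_a=0$ on $[0,a-1]$, $\chi_a=1$ on $[a,\infty)$, linear in between, so $\mathbf 1_{\{r>a\}}\le\chi_a(r)$. Then
\[
\int_{\{r_t>a\}}(1+r_t)\,d\mu\le\int \psi\,d\mu,\qquad \psi(x):=(1+r_t(x))\chi_a(r_t(x)).
\]
The difficulty is that $\psi$ has linear growth, so it is not bounded Lipschitz. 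However, its Lipschitz constant is finite only after one more truncation, so I will split $\psi=\psi\wedge$-type pieces: write $\psi=\psi_1+\psi_2$ with $\psi_1:=\min(\psi,\,1+b)$ for a level $b\ge a$ to be chosen, and $\psi_2:=(\psi-(1+b))_+$.

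\textbf{Lipschitz part.} By Lemma~\ref{lem:reference-gaussian-convolution-bounds}, $r_t$ is $L_r$-Lipschitz uniformly in $t\in[0,T]$. On $\{r_t\le b\}$, $\chi_a$ has slope $1$ and $1+r_t\le 1+b$, so $\Lip(\psi_1)\le L_r(2+b)$. Kantorovich--Rubinstein \eqref{eq:sc-KR} gives
\[
\int\psi_1\,d\mu\le\int\psi_1\,dg_t+L_r(2+b)\,\delta .
\]
Since $r_t(X)\sim|Z|$ for $X\sim g_t$ (again Lemma~\ref{lem:reference-gaussian-convolution-bounds}),
\[
\int\psi_1\,dg_t\le \EE\bigl[(1+|Z|)\mathbf 1_{\{|Z|>a-1\}}\bigr]\le C(1+a)^{q}e^{-(a-1)^2/2}
\]
by a standard chi tail bound, with $q$ depending on $d$.

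\textbf{Growth part.} The piece $\psi_2$ is supported on $\{r_t>b\}$ and satisfies $\psi_2\le 1+r_t\le C(1+|x|)$, using the uniform equivalence of $1+r_t$ and $1+|x|$. By the second-moment bound, $\int\psi_2\,d\mu\le C M_2/b$. This is the main obstacle: a crude moment tail $M_2/b$ is not of the form $(1+a)\delta$ plus a Gaussian tail. I would first avoid the split entirely by exploiting that the needed estimate has the factor $(1+a)$ rather than $(1+b)$. Instead, test against the $1$-Lipschitz function $\varphi(x):=(r_t(x)-(a-1))_+$, which dominates $\psi_2$-type growth. Indeed, on $\{r_t>a\}$ one has $1+r_t\le (1+a)\bigl(1+\varphi\bigr)$, and hence
\[
\int_{\{r_t>a\}}(1+r_t)\,d\mu\le(1+a)\int\min(\varphi,1)\,d\mu+(1+a)\int\varphi\,d\mu .
\]
Both $\min(\varphi,1)$ and $\varphi$ are $L_r$-Lipschitz. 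Kantorovich--Rubinstein then gives a bound of $2L_r(1+a)\delta$ plus $(1+a)$ times the $g_t$-integrals, and these are bounded by $\EE[(1+|Z|)\mathbf 1_{\{|Z|>a-1\}}]\le C(1+a)^{q}e^{-(a-1)^2/2}$. (The pointwise inequality uses $\varphi\ge1$ when $r_t\ge a$.) This yields the claim with $C$ depending on $L_r$, $d$, and the equivalence constants, and the hypothesis $M_2$ serves only to guarantee $\mu\in\mathcal P_1$ so that $W_1$ and the integrals are finite. Accordingly, the split above becomes unnecessary, and I would present the direct $\varphi$-argument. The only points to check carefully are the pointwise domination and the uniform Lipschitz constant of $r_t$ on $[0,T]$.
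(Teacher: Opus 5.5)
Your final $\varphi$-argument is correct and is essentially the paper's proof. Your $\min(\varphi,1)$ is exactly the paper's Lipschitz cutoff $\theta_a\circ r_t$ (with slope $1$), and your ramp $\varphi=(r_t-(a-1))_+$ plays the role of the paper's $(r_t-a)_+$. Both pieces are moved from $\mu$ to $g_t$ by Kantorovich--Rubinstein and then bounded by the Gaussian radial (chi) tail.

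The differences are cosmetic. Your ramp starts at $a-1$ and is multiplied by $(1+a)$, so the tail term picks up one extra power of $(1+a)$, which is absorbed into $q$. You also apply the duality directly to the unbounded Lipschitz function $\varphi$, which is legitimate since $\mu,g_t\in\mathcal P_1(\mathbb R^d)$; the paper instead passes through bounded truncations and monotone convergence.
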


\begin{proof}
Let
\[
L_r:=\sup_{0\le t\le T}\operatorname{Lip}(r_t)<\infty.
\]
For \(a\ge1\), choose \(\theta_a:\mathbb R\to[0,1]\) such that
\[
\theta_a(s)=0\quad(s\le a-1),
\qquad
\theta_a(s)=1\quad(s\ge a),
\qquad
\operatorname{Lip}(\theta_a)\le2.
\]
Then
\[
\mathbf 1_{\{r_t>a\}}\le \theta_a(r_t)\le \mathbf 1_{\{r_t>a-1\}},
\qquad
\operatorname{Lip}(\theta_a\circ r_t)\le 2L_r.
\]
By Kantorovich--Rubinstein duality,
\[
\mu\{r_t>a\}
\le
g_t\{r_t>a-1\}+C\delta .
\]
Similarly, the function \((r_t-a)_+\) is \(L_r\)-Lipschitz.  Since it is unbounded,
we apply Kantorovich--Rubinstein duality first to bounded Lipschitz truncations
\(\min\{(r_t-a)_+,M\}\), and then let \(M\to\infty\) using monotone convergence
and the finite first moments.  Thus
\[
\int (r_t-a)_+\,d\mu
\le
\int (r_t-a)_+\,dg_t+C\delta .
\]
Therefore
\[
\begin{aligned}
\int_{\{r_t>a\}}(1+r_t)\,d\mu
&\le
(1+a)\mu\{r_t>a\}+\int (r_t-a)_+\,d\mu  \\
&\le
C(1+a)\delta
+
(1+a)g_t\{r_t>a-1\}
+
\int (r_t-a)_+\,dg_t .
\end{aligned}
\]
Under \(g_t\), the variable \(r_t\) has the same law as \(|Z|\) with
\(Z\sim\cN(0,I_d)\).  Hence the standard Gaussian radial tail estimate gives, for
some \(q=q(d)<\infty\),
\[
(1+a)g_t\{r_t>a-1\}
+
\int (r_t-a)_+\,dg_t
\le
C(1+a)^q e^{-(a-1)^2/2}.
\]
Indeed, the radial density of \(|Z|\) is
\[
c_d s^{d-1}e^{-s^2/2}\mathbf 1_{s\ge0},
\]
which gives the stated polynomial-times-Gaussian tail bound.
\end{proof}

\begin{lemma}[Gaussian-reference logarithmic stability estimate]
\label{lem:gaussian-reference-osgood-stability}
Fix \(\beta>0\), \(T>0\), and \(M_2<\infty\).  There exists
\(C=C(\beta,T,m,\Gamma_0,M_2,d)<\infty\) such that the following holds for every
\(t\in[0,T]\) and every \(\mu\in\mathcal P(\R^d)\) satisfying
\[
\int |x|^2\,\mu(dx)\le M_2.
\]
Let
\[
\delta:=W_1(\mu,g_t).
\]
Then
\begin{equation}
\label{eq:gaussian-reference-osgood-estimate}
\int_{\R^d}
|X_\beta[\mu](x)-X_\beta[g_t](x)|\,\mu(dx)
\le
C\,\delta
\left(
1+\sqrt{\log\frac{C}{\delta}}
\right),
\end{equation}
with the convention that the right-hand side is zero when \(\delta=0\).  The constant
\(C\) is chosen large enough that \(C/\delta\ge e\) whenever the estimate is nontrivial.
\end{lemma}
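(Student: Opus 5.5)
We split the integrand with a cutoff at Gaussian radius $A=A_\delta$ from Lemma~\ref{lem:elementary-log-cutoff-estimates}, bounding the inner region by Lipschitz/duality estimates and the outer region by the radial tail bounds.

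\emph{Pointwise difference.} Write $D_\mu=G_\beta*\mu$, $H_\mu=\nabla(G_\beta*\mu)$, so that $X_\beta[\mu]=\beta^{-1}H_\mu/D_\mu$. Use the quotient identity
\[
X_\beta[\mu]-X_\beta[g_t]=\frac1\beta\left(\frac{H_\mu-H_{g_t}}{D_\mu}+\frac{H_{g_t}}{D_{g_t}}\cdot\frac{D_{g_t}-D_\mu}{D_\mu}\right).
\]
Since $G_\beta$ and $\nabla G_\beta$ are bounded and globally Lipschitz, Kantorovich--Rubinstein gives, uniformly in $x$,
\[
|D_\mu(x)-D_{g_t}(x)|+|H_\mu(x)-H_{g_t}(x)|\le C_\beta\delta .
\]
The only delicate ingredient is a lower bound on $D_\mu$. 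Lemma~\ref{lem:reference-gaussian-convolution-bounds} gives $D_{g_t}(x)\ge c\,e^{-\alpha r_t(x)^2/2}$. Hence, on $\{r_t\le A\}$, as long as $C_\beta\delta\le \tfrac c2 e^{-\alpha A^2/2}$, we get
\[
D_\mu\ge \tfrac12 D_{g_t}\ge \tfrac c2 e^{-\alpha r_t^2/2}.
\]
Combining this with $|H_{g_t}|/D_{g_t}\le C(1+r_t)$ yields, on $\{r_t\le A\}$,
\[
|X_\beta[\mu]-X_\beta[g_t]|\le C\delta\,(1+r_t)\,e^{\alpha r_t^2/2}.
\]

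\emph{Inner region.} Here the difference is at most $C\delta(1+A)e^{\alpha A^2/2}$, so the contribution is bounded by that quantity. This naive bound is too large. Two repairs are available:
\begin{enumerate}
\item[(a)] Integrate the weight $e^{\alpha r_t^2/2}(1+r_t)$ against $\mu$ rather than bounding it by its supremum. Transfer the moment from $g_t$ by $W_1$ as in Lemma~\ref{lem:gaussian-radial-tail-transfer}, slicing $\{r_t\le A\}$ into shells $\{k-1<r_t\le k\}$. Under $g_t$ each shell has mass $\lesssim k^{d-1}e^{-(k-1)^2/2}$, which beats $e^{\alpha k^2/2}$ because $\alpha<1$. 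The $W_1$ error per shell is $\le C\delta$, and summed against the weight it gives $C\delta^2 e^{\alpha A^2/2}\,\mathrm{poly}(A)$.
\item[(b)] Choose $A$ so that $\delta e^{\alpha (A+1)^2/2}(1+A)^q\le C$, which is exactly the first estimate of Lemma~\ref{lem:elementary-log-cutoff-estimates}.
\end{enumerate}
Combining (a) and (b), the inner contribution is $\le C\delta(1+A)$. With $A_\delta=\sqrt{\kappa\log(C_0/\delta)}$ this is $C\delta\bigl(1+\sqrt{\log(C/\delta)}\bigr)$. The first estimate of the same lemma, after adjusting constants, also guarantees the smallness condition $C_\beta\delta\le\tfrac c2 e^{-\alpha A^2/2}$ for $\delta\le\delta_0$. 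If instead $\delta>\delta_0$, the claim follows from the crude bound below together with $M_2$.

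\emph{Outer region.} Do not compare the two drifts here; bound each separately. Lemma~\ref{lem:barycentric-action-estimate} shows $|X_\beta[\mu](x)|^2\le\int|y-x|^2\mu(dy)\le 2M_2+2|x|^2$, so $|X_\beta[\mu](x)|\le C(1+|x|)\le C(1+r_t(x))$. Also $|X_\beta[g_t]|\le C(1+r_t)$ by Lemma~\ref{lem:reference-gaussian-convolution-bounds}. So the outer contribution is at most $C\int_{\{r_t>A\}}(1+r_t)\,d\mu$. Lemma~\ref{lem:gaussian-radial-tail-transfer} bounds this by
\[
C(1+A)\delta+C(1+A)^q e^{-(A-1)^2/2},
\]
and the second estimate of Lemma~\ref{lem:elementary-log-cutoff-estimates} makes the second term $\le C\delta$.

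Adding the inner and outer contributions gives \eqref{eq:gaussian-reference-osgood-estimate}.

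\emph{Main obstacle.} The difficulty is the inner region: the lower bound on $D_\mu$ degrades like $e^{-\alpha r^2/2}$, and this must be balanced against the Gaussian mass decay. That balance is what produces the $\sqrt{\log}$ loss. I expect to tune the shell-summation argument in (a) so that $\alpha<1$ closes the gap.
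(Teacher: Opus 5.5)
Your proposal is correct and is essentially the paper's proof. It uses the same split at $r_t=A_\delta$ and the same denominator comparison $D_\mu\ge\tfrac12D_{g_t}$ on the core. It uses the same pointwise weight $\delta(1+r_t)e^{\alpha r_t^2/2}$, whose $\mu$-integral is transferred from $g_t$ by Kantorovich--Rubinstein duality and closed with the first cutoff estimate. It also uses the same linear-growth plus radial-tail-transfer bound on the complement, and the same large-$\delta$ reduction.

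The only cosmetic difference is in the weighted-moment transfer. The paper uses a single smooth cutoff $h_{\delta,t}=(1+r_t)e^{\alpha r_t^2/2}\chi(r_t-A_\delta)$ with Lipschitz constant $\lesssim(1+A_\delta)^2e^{\alpha(A_\delta+1)^2/2}$, where you use (smoothed) unit shells. Both give the same $\delta^2\,\mathrm{poly}(A_\delta)\,e^{\alpha(A_\delta+1)^2/2}$ error.
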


\begin{proof}
If \(\delta=0\), then \(\mu=g_t\), and the estimate is immediate.

We first dispose of the large-\(\delta\) range.  By Lemma~\ref{lem:barycentric-action-estimate},
\[
\int |X_\beta[\mu]|\,d\mu\le (2M_2)^{1/2}.
\]
The reference field is affine:
\[
X_\beta[g_t](x)=-(I+\beta\Gamma_t)^{-1}(x-m),
\]
and the matrices \(\Gamma_t\) are uniformly positive definite and uniformly bounded
on \([0,T]\).  Hence
\[
\int |X_\beta[g_t](x)|\,\mu(dx)\le C(1+M_2^{1/2}).
\]
Also
\[
\delta=W_1(\mu,g_t)
\le
\int |x|\,d\mu+\int |x|\,dg_t
\le C(1+M_2^{1/2}).
\]
Thus the left-hand side of \eqref{eq:gaussian-reference-osgood-estimate} is bounded
by a constant depending only on the fixed parameters.  Therefore, once the estimate is
proved for \(0<\delta\le\delta_0\), the remaining range \(\delta\ge\delta_0\) follows
by increasing \(C\).  It remains to prove the estimate for sufficiently small \(\delta\).

Set
\[
D_\nu(x):=(G_\beta*\nu)(x),
\qquad
H_\nu(x):=\nabla(G_\beta*\nu)(x).
\]
The functions \(y\mapsto G_\beta(x-y)\) and
\(y\mapsto \partial_iG_\beta(x-y)\) are bounded and globally Lipschitz, with bounds
independent of \(x\).  Therefore Kantorovich--Rubinstein duality gives
\begin{equation}
\label{eq:DH-W1-control-new}
\|D_\mu-D_{g_t}\|_{L^\infty(\R^d)}
+
\|H_\mu-H_{g_t}\|_{L^\infty(\R^d)}
\le C\delta .
\end{equation}

Let \(c,C,\alpha\) be as in Lemma~\ref{lem:reference-gaussian-convolution-bounds}.
Choose \(C_0\ge e\), and define
\[
\kappa:=\frac{1+\alpha}{\alpha},
\qquad
A_\delta:=\sqrt{\kappa\log\frac{C_0}{\delta}},
\qquad
E_\delta(t):=\{x:\ r_t(x)\le A_\delta\}.
\]
We split the integral into the core \(E_\delta(t)\) and its complement.

On \(E_\delta(t)\), Lemma~\ref{lem:reference-gaussian-convolution-bounds} gives
\[
D_{g_t}(x)\ge c e^{-\alpha A_\delta^2/2}.
\]
By \eqref{eq:DH-W1-control-new},
\[
\frac{|D_\mu(x)-D_{g_t}(x)|}{D_{g_t}(x)}
\le
C\delta e^{\alpha A_\delta^2/2}
=
C C_0^{(1+\alpha)/2}\delta^{(1-\alpha)/2}.
\]
After fixing \(C_0\), choose \(\delta_0>0\) so small that this last quantity is at
most \(1/2\) for every \(0<\delta\le\delta_0\).  Then
\begin{equation}
\label{eq:core-denominator-comparison-new}
D_\mu(x)\ge \frac12D_{g_t}(x)
\qquad\text{for }x\in E_\delta(t).
\end{equation}

Using \(X_\beta[\nu]=\beta^{-1}H_\nu/D_\nu\),
\eqref{eq:DH-W1-control-new},
\eqref{eq:core-denominator-comparison-new}, and
Lemma~\ref{lem:reference-gaussian-convolution-bounds}, we obtain on \(E_\delta(t)\)
\[
\begin{aligned}
|X_\beta[\mu](x)-X_\beta[g_t](x)|
&\le
\frac1\beta\frac{|H_\mu-H_{g_t}|}{D_\mu}
+
\frac1\beta |H_{g_t}|
\frac{|D_\mu-D_{g_t}|}{D_\mu D_{g_t}}       \\
&\le
C\delta
\frac{1+|H_{g_t}|/D_{g_t}}{D_{g_t}}        \\
&\le
C\delta(1+r_t(x))e^{\alpha r_t(x)^2/2}.
\end{aligned}
\]
It remains to show that the last weight has bounded \(\mu\)-integral on
\(E_\delta(t)\), uniformly in \(t\) and small \(\delta\).

Let \(\chi\in C^1(\mathbb R)\) satisfy
\[
0\le\chi\le1,\qquad
\chi(s)=1\ \text{for }s\le0,\qquad
\chi(s)=0\ \text{for }s\ge1,\qquad
|\chi'|\le2.
\]
Define
\[
h_{\delta,t}(x):=
(1+r_t(x))e^{\alpha r_t(x)^2/2}\chi(r_t(x)-A_\delta).
\]
Then \(h_{\delta,t}\) dominates
\[
(1+r_t)e^{\alpha r_t^2/2}\mathbf 1_{E_\delta(t)}
\]
and is supported in \(\{r_t\le A_\delta+1\}\).  Moreover,
\begin{equation}
\label{eq:h-lip-new}
\operatorname{Lip}(h_{\delta,t})
\le
C(1+A_\delta)^2e^{\alpha(A_\delta+1)^2/2}.
\end{equation}
Indeed, write
\[
F_A(r):=(1+r)e^{\alpha r^2/2}\chi(r-A).
\]
On the support of \(F_A\), one has \(0\le r\le A+1\), and
\[
\frac{d}{dr}\bigl((1+r)e^{\alpha r^2/2}\bigr)
=
e^{\alpha r^2/2}\bigl(1+\alpha r(1+r)\bigr).
\]
Since \(|\chi|\le1\) and \(|\chi'|\le2\),
\[
\sup_{r\ge0}|F_A'(r)|
\le
C(1+A)^2e^{\alpha(A+1)^2/2}.
\]
Composing with the uniformly Lipschitz functions \(r_t\) gives
\eqref{eq:h-lip-new}.

By Kantorovich--Rubinstein duality,
\[
\int h_{\delta,t}\,d\mu
\le
\int h_{\delta,t}\,dg_t
+
\operatorname{Lip}(h_{\delta,t})\,\delta .
\]
Under \(g_t\), \(r_t\) has the same law as \(|Z|\), \(Z\sim\cN(0,I_d)\).  Since
\(\alpha<1\),
\[
\sup_{t\in[0,T]}
\int (1+r_t)e^{\alpha r_t^2/2}\,dg_t<\infty.
\]
Together with \eqref{eq:h-lip-new} and
Lemma~\ref{lem:elementary-log-cutoff-estimates}, this gives
\[
\int h_{\delta,t}\,d\mu\le C
\]
uniformly in \(t\) and \(0<\delta\le\delta_0\).  Hence
\begin{equation}
\label{eq:core-estimate-new}
\int_{E_\delta(t)}
|X_\beta[\mu]-X_\beta[g_t]|\,d\mu
\le C\delta .
\end{equation}

We now estimate the complement.  The pointwise estimate used in
Lemma~\ref{lem:barycentric-action-estimate} gives
\[
|X_\beta[\mu](x)|^2
\le
\int |y-x|^2\,\mu(dy)
\le
2M_2+2|x|^2.
\]
The affine formula for \(X_\beta[g_t]\) gives the same linear-growth bound for the
reference field.  Since \(1+|x|\) and \(1+r_t(x)\) are uniformly equivalent on
\([0,T]\times\mathbb R^d\), we have
\begin{equation}
\label{eq:linear-growth-both-fields-new}
|X_\beta[\mu](x)|+|X_\beta[g_t](x)|
\le C(1+r_t(x)).
\end{equation}
Using Lemma~\ref{lem:gaussian-radial-tail-transfer} with \(a=A_\delta\), and then
Lemma~\ref{lem:elementary-log-cutoff-estimates}, we obtain
\[
\begin{aligned}
\int_{\mathbb R^d\setminus E_\delta(t)}
|X_\beta[\mu]-X_\beta[g_t]|\,d\mu
&\le
C\int_{\{r_t>A_\delta\}}(1+r_t)\,d\mu  \\
&\le
C(1+A_\delta)\delta
+
C(1+A_\delta)^q e^{-(A_\delta-1)^2/2} \\
&\le
C\delta(1+A_\delta).
\end{aligned}
\]
Together with \eqref{eq:core-estimate-new}, this yields
\[
\int_{\mathbb R^d}
|X_\beta[\mu]-X_\beta[g_t]|\,d\mu
\le
C\delta(1+A_\delta).
\]
Since
\[
A_\delta
=
\sqrt{\kappa\log\frac{C_0}{\delta}}
\le
C\sqrt{\log\frac{C}{\delta}}
\]
after increasing \(C\), the desired logarithmic estimate follows for
\(0<\delta\le\delta_0\).  As explained at the beginning of the proof, increasing
\(C\) gives the full range of \(\delta\).
\end{proof}

\begin{lemma}[Stability against an affine reference field]
\label{lem:continuity-equation-lipschitz-reference-stability}
Let \(\rho_t,\eta_t\in C([0,T];\mathcal P_1(\R^d))\) solve
\[
\partial_t\rho_t+\nabla\cdot(\rho_t b_t)=0,
\qquad
\partial_t\eta_t+\nabla\cdot(\eta_t c_t)=0
\]
in the weak sense, with finite transport integrals:
\[
\int_0^T\int |b_t(x)|\,\rho_t(dx)\,dt<\infty,
\qquad
\int_0^T\int |c_t(x)|\,\eta_t(dx)\,dt<\infty.
\]
Assume that the reference field is affine,
\[
c_t(x)=A_t x+a_t,
\]
where \(A\in C([0,T];\R^{d\times d})\) and \(a\in C([0,T];\R^d)\).  Set
\[
L(t):=\|A_t\|,
\qquad
\Lambda(s,t):=\exp\left(\int_s^t L(r)\,dr\right).
\]
Assume also
\[
\int_0^T\int |b_t(x)-c_t(x)|\,\rho_t(dx)\,dt<\infty.
\]
Then, for every \(t\in[0,T]\),
\begin{equation}
\label{eq:affine-reference-stability}
W_1(\rho_t,\eta_t)
\le
\Lambda(0,t)W_1(\rho_{t=0},\eta_{t=0})
+
\int_0^t \Lambda(s,t)
\int |b_s(x)-c_s(x)|\,\rho_s(dx)\,ds.
\end{equation}
\end{lemma}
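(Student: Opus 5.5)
The plan is a duality argument against the affine reference flow. Because $c_t$ is affine, it generates a globally Lipschitz characteristic flow. For $s\le t$, let $\Psi_{s,t}$ denote the flow map of $\dot x=c_r(x)$ from time $s$ to time $t$. It is affine and satisfies $\operatorname{Lip}(\Psi_{s,t})\le\Lambda(s,t)$ by Gronwall applied to $\frac{d}{dr}|x_r-y_r|\le L(r)|x_r-y_r|$.

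\textbf{Step 1: identify $\eta_t$.} The continuity equation with the globally Lipschitz field $c$ has unique weak solutions in $C([0,T];\mathcal P_1)$. This follows from the standard duality/superposition uniqueness for Lipschitz fields \cite[Sections~8.1--8.3]{AmbrosioGigliSavare2008GradientFlows}. Hence
\[
\eta_t=(\Psi_{0,t})_\#\eta_0 .
\]

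\textbf{Step 2: compare $\rho_t$ with the reference transport.} Fix $t$ and a $1$-Lipschitz test function $\varphi$; first mollify it and cut it off so that it lies in $C_c^1$, and remove this regularization at the end. Define the backward solution of the transport equation
\[
\phi_s:=\varphi\circ\Psi_{s,t}.
\]
It satisfies $\partial_s\phi_s+c_s\cdot\nabla\phi_s=0$ and $\operatorname{Lip}(\phi_s)\le\Lambda(s,t)$. Test the weak equation for $\rho$ with the time-dependent function $\phi_s$ (justified below). This gives
\[
\int\varphi\,d\rho_t-\int\phi_0\,d\rho_0=\int_0^t\!\!\int\nabla\phi_s\cdot(b_s-c_s)\,d\rho_s\,ds .
\]
The right-hand side is bounded by $\int_0^t\Lambda(s,t)\int|b_s-c_s|\,d\rho_s\,ds$. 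Next,
\[
\int\phi_0\,d\rho_0-\int\varphi\,d\eta_t=\int\phi_0\,d(\rho_0-\eta_0)\le\Lambda(0,t)\,W_1(\rho_0,\eta_0),
\]
by Step 1 and Kantorovich--Rubinstein. Taking the supremum over $\varphi$ via \eqref{eq:sc-KR} yields \eqref{eq:affine-reference-stability}.

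\textbf{Main obstacle: justifying the time-dependent, non-compactly-supported test function.} Definition~\ref{def:nc-finite-action-weak-solution} only allows time-independent $C_c^1$ tests. I would proceed in three stages.
\begin{enumerate}
\item Extend the weak formulation to $C^1_c([0,T]\times\R^d)$ tests by the usual piecewise-constant-in-time approximation, using the continuity of $t\mapsto\rho_t$ and the finite action.
\item Handle noncompact support by multiplying $\phi_s$ by spatial cutoffs $\chi(x/k)$. The error terms $\phi_s\nabla\chi(\cdot/k)\cdot b_s/k$ and $\phi_s\,c_s\cdot\nabla\chi(\cdot/k)$ are supported on $\{|x|\sim k\}$. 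Normalize $\varphi(0)=0$, so that $|\phi_s(x)|\lesssim 1+|x|$.
\item Show these error terms vanish as $k\to\infty$. They are controlled by $\int_{|x|\sim k}(|b_s|+|c_s|)\,d\rho_s$, using $|x|/k\lesssim1$ on the annulus. These integrals tend to $0$ by dominated convergence, because $\int\!\int|b_s|\,d\rho_s\,ds<\infty$ and $\int\!\int|c_s|\,d\rho_s\,ds\le\int\!\int(|b_s|+|b_s-c_s|)\,d\rho_s\,ds<\infty$.
\end{enumerate}
The lack of smoothness of $\varphi$ is handled by mollifying; the Lipschitz bound is unchanged. The estimate then passes to the limit because $\rho_t$ and $\eta_t$ have finite first moments.
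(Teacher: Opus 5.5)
Your proposal is correct and follows essentially the same route as the paper. You transport a Lipschitz test function backward along the affine flow, $\phi_s=\varphi\circ\Psi_{s,t}$, use $\partial_s\phi_s+c_s\cdot\nabla\phi_s=0$ and $\operatorname{Lip}(\phi_s)\le\Lambda(s,t)$, and close with Kantorovich--Rubinstein duality. The only differences are small: you obtain $\eta_t=(\Psi_{0,t})_\#\eta_0$ from the uniqueness theorem in \cite{AmbrosioGigliSavare2008GradientFlows}, whereas the paper tests the $\eta$-equation against the same $\phi_s$ and subtracts (the same computation), and you spell out the time-dependent-test, cutoff and mollification steps that the paper only sketches.
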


\begin{proof}
Let \(\Phi_{s,t}\) be the affine flow generated by \(c_t\).  Then
\[
|\Phi_{s,t}(x)-\Phi_{s,t}(y)|\le \Lambda(s,t)|x-y|.
\]
For a \(1\)-Lipschitz test function \(\psi\), set
\[
\phi_s(x):=\psi(\Phi_{s,t}(x)).
\]
Then \(\partial_s\phi_s+c_s\cdot\nabla\phi_s=0\) and
\(\operatorname{Lip}(\phi_s)\le\Lambda(s,t)\).  Since \(\psi\) is only Lipschitz
and \(\phi_s\) need not be compactly supported, the testing argument is justified by
the standard approximation of Lipschitz functions by smooth compactly supported
functions: first truncate in space, then mollify, and finally pass to the limit using
the finite first moments and the finite transport integrals.  Testing the two
continuity equations against these approximants, subtracting, and passing to the
limit gives, by the Kantorovich--Rubinstein formula,
\[
W_1(\rho_t,\eta_t)
\le
\Lambda(0,t)W_1(\rho_{t=0},\eta_{t=0})
+
\int_0^t\Lambda(s,t)\int |b_s-c_s|\,d\rho_s\,ds.
\]
This is the Dobrushin duality argument adapted to the present affine-reference setting; compare
\cite[Proposition~4]{Dobrushin1979Vlasov}.
\end{proof}

The weak--strong uniqueness argument has three ingredients. We compare an arbitrary
finite-action solution with the explicit Gaussian solution by transporting test functions
along the affine Gaussian flow. The logarithmic stability estimate controls the nonlinear
velocity error by an Osgood modulus of \(W_1(\rho_t,g_t)\). Bihari's lemma then forces
this distance to remain zero, since the two solutions have the same initial law.

\begin{theorem}[Weak--strong uniqueness around the Gaussian solution]
\label{thm:weak-strong-uniqueness-gaussian}
Fix \(\beta>0\) and \(T>0\).  Let \(g_t=f_t^\beta=\cN(m,\Gamma_t)\) be the Gaussian
solution from Lemma~\ref{lem:gaussian-invariance-covariance}.  Let
\[
\rho\in C([0,T];\mathcal P_1(\R^d))
\]
be a finite-action weak solution of
\[
\partial_t\rho_t+\nabla\cdot(\rho_tX_\beta[\rho_t])=0
\]
with \(\rho_{t=0}=g_{t=0}\).  Assume additionally that
\[
\sup_{0\le t\le T}\int |x|^2\,\rho_t(dx)<\infty.
\]
Then
\[
\rho_t=g_t
\qquad\text{for every }t\in[0,T].
\]
\end{theorem}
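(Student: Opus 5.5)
The plan is to compare \(\rho\) with the explicit Gaussian curve \(g\) through the affine-reference stability estimate, and then close the argument with an Osgood/Bihari inequality. The paragraph preceding the theorem sets up exactly these ingredients.

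\textbf{Step 1: apply the affine-reference stability lemma.} Set \(b_t:=X_\beta[\rho_t]\) and \(c_t:=X_\beta[g_t]=-(I+\beta\Gamma_t)^{-1}(x-m)\). The field \(c_t\) is affine with \(A_t=-(I+\beta\Gamma_t)^{-1}\), and \(\|A_t\|\le1\), so \(\Lambda(s,t)\le e^{t-s}\le e^T\).

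We check the integrability hypotheses of Lemma~\ref{lem:continuity-equation-lipschitz-reference-stability}.
\begin{itemize}
\item Finite action of \(\rho\) holds by assumption.
\item Finite action of \(g\) is clear, since \(g_t\) is Gaussian.
\item Integrability of \(|b_s-c_s|\) against \(\rho_s\): the pointwise bound \(|X_\beta[\mu](x)|^2\le\int|y-x|^2\mu(dy)\) from Lemma~\ref{lem:barycentric-action-estimate} gives linear growth of \(b_s\). The field \(c_s\) also grows linearly. Together with the uniform second moment \(M_2:=\sup_t\int|x|^2d\rho_t\), this makes the integral finite.
\end{itemize}
Since \(\rho_0=g_0\), the lemma yields, with \(u(t):=W_1(\rho_t,g_t)\),
\[
u(t)\le e^{T}\int_0^t\int|X_\beta[\rho_s]-X_\beta[g_s]|\,d\rho_s\,ds .
\]

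\textbf{Step 2: control the velocity error.} Apply Lemma~\ref{lem:gaussian-reference-osgood-stability} with \(\mu=\rho_s\) and the moment bound \(M_2\). This bounds the inner integral by \(\omega(u(s))\), where
\[
\omega(\delta):=C\delta\bigl(1+\sqrt{\log(C/\delta)}\bigr),\qquad \omega(0):=0.
\]
Combining with Step 1,
\[
u(t)\le e^T\int_0^t\omega(u(s))\,ds .
\]
The function \(u\) is continuous, because \(\rho,g\in C([0,T];\mathcal P_1)\). It is also bounded, by \(u\le C(1+M_2^{1/2})\).

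\textbf{Step 3: conclude with Bihari--Osgood.} Enlarge \(\omega\) if needed to a continuous, nondecreasing modulus on \([0,\bar u]\), where \(\bar u\) bounds \(u\). Near \(0\), the map \(\delta\mapsto\delta\sqrt{\log(C/\delta)}\) is increasing; on the remaining bounded range we can replace \(\omega\) by its running maximum, which keeps it continuous and nondecreasing without changing its behaviour near \(0\). The Osgood condition then holds:
\[
\int_{0^+}\frac{d\delta}{\delta\sqrt{\log(C/\delta)}}=\int^{\infty}\frac{dv}{\sqrt v}=\infty,
\]
using the substitution \(v=\log(C/\delta)\). For every \(\varepsilon>0\) we have \(u(t)\le\varepsilon+e^T\int_0^t\omega(u)\). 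By Bihari's lemma, \(\Omega(u(t))\ge\Omega(\varepsilon)-e^Tt\), where \(\Omega(x):=\int_x^{x_0}d\delta/\omega(\delta)\). As \(\varepsilon\to0\), \(\Omega(\varepsilon)\to\infty\), which forces \(u\equiv0\). Hence \(\rho_t=g_t\) for all \(t\in[0,T]\).

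\textbf{Main obstacle.} The delicate step is justifying Step 1 for an arbitrary finite-action weak solution \(\rho\). The lemma's testing argument uses non-compactly supported Lipschitz test functions transported along the affine flow, and these must be approximated. Here I would rely on the lemma as stated and only verify its integrability hypotheses, with the second-moment assumption doing the work. The other point requiring care is uniformity in \(s\) of the constant in Lemma~\ref{lem:gaussian-reference-osgood-stability}. That lemma supplies a single \(C\) depending only on \((\beta,T,m,\Gamma_0,M_2,d)\), so the same modulus \(\omega\) serves for every \(s\in[0,T]\).
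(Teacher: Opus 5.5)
Your proposal is correct and follows essentially the same route as the paper: the affine-reference stability lemma with $c_t=X_\beta[g_t]$ and $\rho_0=g_0$, then Lemma~\ref{lem:gaussian-reference-osgood-stability} with the uniform second-moment bound, then Bihari's lemma for the Osgood modulus $r\bigl(1+\sqrt{\log(B/r)}\bigr)$. Your monotone-envelope step corresponds to the paper enlarging $C_{\mathrm{G}}$ so that $C_{\mathrm{G}}\ge e\sup_s w(s)$; note that some such enlargement is needed so the logarithm stays positive on the whole range of $u$ before a running maximum can even be taken.
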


\begin{proof}
Set
\[
w(t):=W_1(\rho_t,g_t).
\]
Apply Lemma~\ref{lem:continuity-equation-lipschitz-reference-stability} with
\[
b_t=X_\beta[\rho_t],
\qquad
c_t=X_\beta[g_t].
\]
The hypotheses of Lemma~\ref{lem:continuity-equation-lipschitz-reference-stability}
are satisfied: \(\rho\) is finite-action, \(X_\beta[g_t]\) is affine in \(x\)
with uniformly bounded linear part on \([0,T]\), and the second moments of both
\(\rho_t\) and \(g_t\) are uniformly bounded on \([0,T]\).  Since \(\rho_0=g_0\), the
affine-reference stability estimate gives a constant \(C_{\mathrm{st}}\), depending only on
\((\beta,T,m,\Gamma_0,d)\), such that
\[
w(t)
\le
C_{\mathrm{st}}\int_0^t
\int_{\mathbb R^d}
|X_\beta[\rho_s](x)-X_\beta[g_s](x)|\,\rho_s(dx)\,ds .
\]

Let
\[
M_2:=\sup_{0\le s\le T}\int_{\mathbb R^d}|x|^2\,\rho_s(dx)<\infty.
\]
By Lemma~\ref{lem:gaussian-reference-osgood-stability}, there exists a constant
\(C_{\mathrm{G}}\), depending only on
\[
(\beta,T,m,\Gamma_0,M_2,d),
\]
such that, for every \(s\in[0,T]\),
\[
\int_{\mathbb R^d}
|X_\beta[\rho_s](x)-X_\beta[g_s](x)|\,\rho_s(dx)
\le
C_{\mathrm{G}}\, w(s)
\left(
1+\sqrt{\log\frac{C_{\mathrm{G}}}{w(s)}}
\right),
\]
with the right-hand side interpreted as \(0\) when \(w(s)=0\).

The quantity \(\sup_{0\le s\le T}w(s)\) is finite and is controlled by the same data.  Indeed,
\[
\sup_{0\le s\le T}w(s)
\le
\sup_{0\le s\le T}\int |x|\,\rho_s(dx)
+
\sup_{0\le s\le T}\int |x|\,g_s(dx)
\le
M_2^{1/2}+C(\beta,T,m,\Gamma_0,d).
\]
Thus we may enlarge \(C_{\mathrm{G}}\), if necessary, so that
\[
C_{\mathrm{G}}\ge e\sup_{0\le s\le T}w(s).
\]
This does not invalidate the previous estimate, because increasing \(C_{\mathrm{G}}\)
only increases the right-hand side and preserves the same parameter dependence.
Hence, whenever \(w(s)>0\),
\[
0<w(s)\le \frac{C_{\mathrm{G}}}{e}.
\]
Therefore
\[
w(t)
\le
C_{\mathrm{st}}C_{\mathrm{G}}
\int_0^t
w(s)
\left(
1+\sqrt{\log\frac{C_{\mathrm{G}}}{w(s)}}
\right)\,ds .
\]
Equivalently,
\[
w(t)
\le
A\int_0^t \omega_B(w(s))\,ds,
\qquad
A:=C_{\mathrm{st}}C_{\mathrm{G}},
\]
where
\[
\omega_B(r):=
\begin{cases}
 r\left(1+\sqrt{\log(B/r)}\right),&0<r\le B/e,\\
 2r,&r>B/e,\\
 0,&r=0,
\end{cases}
\qquad B:=C_{\mathrm{G}}.
\]
The modulus \(\omega_B\) satisfies the Osgood condition
\[
\int_0^{B/e}\frac{dr}{\omega_B(r)}=\infty.
\]
Bihari's generalized Bellman lemma \cite[Sections~3--4]{Bihari1956GronwallGeneralization} therefore gives \(w\equiv0\) on
\([0,T]\).  Hence
\[
\rho_t=g_t
\qquad\text{for every }t\in[0,T].
\]
\end{proof}

\begin{corollary}[Convergence of hard-truncated Gaussian flows]
\label{cor:hard-truncated-gaussian-flows-converge}
Fix \(\beta>0\) and \(T>0\).  Then
\[
\sup_{0\le t\le T}
W_1(f_t^{\beta,[R]},f_t^\beta)
\longrightarrow0
\qquad\text{as }R\to\infty.
\]
\end{corollary}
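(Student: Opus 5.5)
The plan is a standard compactness-plus-uniqueness (subsequence) argument. Suppose the claim fails. Then there exist \(\varepsilon>0\) and a sequence \(R_k\to\infty\) such that
\[
\sup_{0\le t\le T}W_1(f_t^{\beta,[R_k]},f_t^\beta)\ge\varepsilon
\qquad\text{for all }k .
\]
We derive a contradiction by showing that some subsequence converges uniformly in time to the Gaussian flow \(g_t=f_t^\beta=\cN(m,\Gamma_t)\) of Lemma~\ref{lem:gaussian-invariance-covariance}.

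\textbf{Step 1 (compactness).} Apply Theorem~\ref{thm:hard-trunc-compactness-identification} to the sequence \(R_k\). It yields a further subsequence, not relabeled, and a limit curve \(\rho\in C([0,T];\mathcal P_1(\R^d))\) with the following properties:
\begin{itemize}
\item \(\sup_{0\le t\le T}W_1(f_t^{\beta,[R_k]},\rho_t)\to0\);
\item \(\rho_{t=0}=f_0=g_{t=0}\);
\item \(\rho\) is a finite-action weak solution of the continuity equation \(\partial_t\rho_t+\nabla\cdot(\rho_tX_\beta[\rho_t])=0\);
\item \(\sup_{0\le t\le T}\int|x|^2\,d\rho_t<\infty\).
\end{itemize}
The uniform second-moment and action bounds needed for this step are supplied by Proposition~\ref{prop:uniform-moment-action-hard-trunc}. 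That proposition applies here because the truncated second moments are controlled by those of the Gaussian \(f_0\) once \(R\) is large.

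\textbf{Step 2 (identification).} The curve \(\rho\) satisfies every hypothesis of Theorem~\ref{thm:weak-strong-uniqueness-gaussian}: it is a finite-action weak solution, it starts from the Gaussian initial law, and its second moments are uniformly bounded. Weak--strong uniqueness therefore gives \(\rho_t=g_t\) for every \(t\in[0,T]\).

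\textbf{Step 3 (conclusion).} Steps 1 and 2 together give
\[
\sup_{0\le t\le T}W_1(f_t^{\beta,[R_k]},g_t)\to0
\]
along the subsequence. This contradicts the choice of \(\varepsilon\) and \(R_k\). Since every sequence \(R_k\to\infty\) has a subsequence along which the convergence holds, and the limit is always the same curve \(g\), the full family converges as \(R\to\infty\).

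The only point that needs care is verifying that the hypotheses of Theorem~\ref{thm:hard-trunc-compactness-identification} hold uniformly along the given sequence. Concretely, the second moments and actions must be bounded independently of \(k\) for all sufficiently large \(R_k\), which is exactly what Proposition~\ref{prop:uniform-moment-action-hard-trunc} provides. Beyond that, the argument is the standard principle that relative compactness together with uniqueness of the limit point yields convergence of the whole family.

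A byproduct of the corollary is that the Gaussian flow \(f^\beta\) satisfies condition (ii) of Definition~\ref{def:nc-admissible-law}. Hence \(f_0\) is \((\beta,T)\)-admissible for every \(\beta>0\) and \(T>0\).
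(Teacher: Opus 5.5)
Your proposal is correct and follows the paper's own proof essentially step for step. The paper also argues by contradiction, extracts a uniformly convergent subsequence via Theorem~\ref{thm:hard-trunc-compactness-identification}, and identifies the limit with the Gaussian flow via Theorem~\ref{thm:weak-strong-uniqueness-gaussian}.
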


\begin{proof}
We prove convergence by contradiction.  Suppose that the asserted convergence fails.
Then there exist \(\varepsilon_0>0\) and a sequence \(R_k\to\infty\) such that
\[
\sup_{0\le t\le T}
W_1(f_t^{\beta,[R_k]},f_t^\beta)
\ge \varepsilon_0
\qquad\text{for every }k.
\]
By Theorem~\ref{thm:hard-trunc-compactness-identification}, after passing to a
subsequence, not relabeled, there exists
\[
\rho\in C([0,T];\mathcal P_1(\mathbb R^d))
\]
such that
\[
\sup_{0\le t\le T}
W_1(f_t^{\beta,[R_k]},\rho_t)\to0.
\]
Moreover, \(\rho_{t=0}=f_0\), \(\rho\) is a finite-action weak solution of
\[
\partial_t\rho_t+\nabla\cdot(\rho_tX_\beta[\rho_t])=0
\]
on \([0,T]\), and
\[
\sup_{0\le t\le T}\int |x|^2\,\rho_t(dx)<\infty.
\]
Since \(f_0=g_0\), where \(g_t=f_t^\beta\) is the explicit Gaussian solution,
Theorem~\ref{thm:weak-strong-uniqueness-gaussian} gives
\[
\rho_t=g_t=f_t^\beta
\qquad\text{for every }t\in[0,T].
\]
Consequently,
\[
\sup_{0\le t\le T}
W_1(f_t^{\beta,[R_k]},f_t^\beta)
\to0,
\]
contradicting the choice of \(R_k\).  Therefore
\[
\sup_{0\le t\le T}
W_1(f_t^{\beta,[R]},f_t^\beta)
\to0
\qquad\text{as }R\to\infty.
\]
\end{proof}

\begin{theorem}[Gaussian-prior noisy laws are noncompact admissible]
\label{thm:gaussian-noisy-laws-admissible}
Fix \(\tau>0\).  Let \(m\in\R^d\), let \(\Sigma_0\) be a symmetric nonnegative semidefinite matrix, and set
\[
P_0=\cN(m,\Sigma_0),
\qquad
f_0:=\gamma_\tau*P_0=\cN(m,\Sigma_0+\tau I).
\]
Then, for every fixed \(\beta>0\) and \(T>0\), the law \(f_0\) is
\((\beta,T)\)-admissible in the sense of Definition~\ref{def:nc-admissible-law}.  The
admissible noncompact flow is the explicit Gaussian flow
\[
f_t^\beta=\cN(m,\Gamma_t),
\qquad
\dot\Gamma_t=-2\Gamma_t(I+\beta\Gamma_t)^{-1}.
\]
\end{theorem}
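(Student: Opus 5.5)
The plan is to verify the two conditions of Definition~\ref{def:nc-admissible-law} directly, with the explicit Gaussian curve serving as the associated admissible solution. Almost all the work has already been done in the preceding lemmas, so the proof is mainly an assembly step.

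\emph{Condition (i).} Lemma~\ref{lem:gaussian-invariance-covariance} gives a unique global positive definite solution \(\Gamma_t\) of \eqref{eq:covariance-ode}. It also shows that \(f_t^\beta=\cN(m,\Gamma_t)\) solves \eqref{eq:gaussian-noncompact-pde} in weak form, with \(f_{t=0}^\beta=f_0\). It remains to check the regularity and integrability required by Definition~\ref{def:nc-finite-action-weak-solution}.
\begin{itemize}
\item \emph{Continuity in \(W_1\).} The map \(t\mapsto\Gamma_t\) is continuous on \([0,T]\). Coupling \(m+\Gamma_t^{1/2}Z\) with \(m+\Gamma_s^{1/2}Z\) and using continuity of the matrix square root, as in Lemma~\ref{lem:gaussian-recovery-condition}, gives \(f^\beta\in C([0,T];\mathcal P_1(\R^d))\).
\item \emph{Finite action.} The drift is affine: \(X_\beta[f_t^\beta](x)=-(I+\beta\Gamma_t)^{-1}(x-m)\), and its operator norm is at most \(1\). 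Hence
\[
\int_0^T\!\!\int |X_\beta[f_t^\beta]|\,df_t^\beta\,dt\le T\sup_{t\le T}\bigl(\operatorname{Tr}\Gamma_t\bigr)^{1/2}<\infty .
\]
The trace is bounded because the eigenvalue ODE \eqref{eq:eigenvalue-ode} makes each eigenvalue nonincreasing.
\item \emph{Weak formulation.} Each test function \(\phi\in C_c^1\) can be transported along the affine characteristics by the chain rule, which yields \eqref{eq:nc-weak-formulation}.
\end{itemize}

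\emph{Condition (ii).} First, \(f_0\) is Gaussian with \(\Gamma_0>0\), so \(f_0(B_R)>0\) for every \(R\). Each \(f_0^{[R]}\) therefore lies in \(\mathcal P(B_R)\), and the compact-support solution \(f^{\beta,[R]}\) is defined by Proposition~\ref{prop:sc-deterministic-wellposedness-stability}. The required convergence
\[
\sup_{0\le t\le T}W_1\bigl(f_t^{\beta,[R]},f_t^\beta\bigr)\to0
\]
is exactly Corollary~\ref{cor:hard-truncated-gaussian-flows-converge}. That corollary already combines three ingredients:
\begin{itemize}
\item the \(R\)-uniform second-moment and action bounds of Proposition~\ref{prop:uniform-moment-action-hard-trunc};
\item the compactness and identification result, Theorem~\ref{thm:hard-trunc-compactness-identification};
\item the weak--strong uniqueness result, Theorem~\ref{thm:weak-strong-uniqueness-gaussian}.
\end{itemize}
Once this is cited, both conditions hold, and by Remark~\ref{rem:unique-associated-admissible-flow} the admissible flow is the Gaussian one.

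The only step that needs care is confirming that Proposition~\ref{prop:uniform-moment-action-hard-trunc}, which feeds the corollary, applies for all sufficiently large \(R\). This rests on the bound \(M_R(0)\le 2\int|x|^2\,df_0\), which holds once \(f_0(B_R)\ge 1/2\). That is true for large \(R\) because \(f_0\) is a nondegenerate Gaussian with finite second moment.
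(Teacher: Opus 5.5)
Your proposal is correct and follows the same route as the paper. Condition (i) comes from the explicit Gaussian weak solution of Lemma~\ref{lem:gaussian-invariance-covariance}, with finite action from the affine drift $-(I+\beta\Gamma_t)^{-1}(x-m)$, and condition (ii) is exactly Corollary~\ref{cor:hard-truncated-gaussian-flows-converge}. Your extra checks are valid details that the paper leaves implicit: $W_1$-continuity of $t\mapsto f_t^\beta$ via the square-root coupling, the $L^1$ action bound in place of the paper's $L^2$ one, and $f_0(B_R)\ge 1/2$ for the uniform moment bound.
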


\begin{proof}
Since
\[
f_0=\cN(m,\Sigma_0+\tau I)
\]
and \(\tau>0\), the initial law \(f_0\) has a smooth strictly positive Gaussian density
and finite moments of all orders.  In particular, its hard truncations \(f_0^{[R]}\) are
well-defined for every \(R>0\), and
\[
W_1(f_0^{[R]},f_0)\to0
\qquad\text{as }R\to\infty
\]
by Lemma~\ref{lem:nc-hard-truncation-W1}.

The explicit Gaussian curve
\[
f_t^\beta=\cN(m,\Gamma_t),
\qquad
\dot\Gamma_t=-2\Gamma_t(I+\beta\Gamma_t)^{-1},
\]
is a weak solution by Lemma~\ref{lem:gaussian-invariance-covariance}.  It has finite
action on every finite time interval.  Indeed,
\[
X_\beta[f_t^\beta](x)=-(I+\beta\Gamma_t)^{-1}(x-m),
\]
and \(\Gamma_t\) remains uniformly positive definite and uniformly bounded on
\([0,T]\).  Hence
\[
\int_0^T\int |X_\beta[f_t^\beta](x)|^2\,f_t^\beta(dx)\,dt<\infty.
\]
Thus the first condition in Definition~\ref{def:nc-admissible-law} holds with admissible
flow \((f_t^\beta)_{0\le t\le T}\).

The second admissibility condition is exactly
Corollary~\ref{cor:hard-truncated-gaussian-flows-converge}, namely
\[
\sup_{0\le t\le T}
W_1(f_t^{\beta,[R]},f_t^\beta)\to0
\qquad\text{as }R\to\infty.
\]
Therefore \(f_0\) is \((\beta,T)\)-admissible.
\end{proof}

\begin{corollary}[Gaussian priors are recoverable admissible]
\label{cor:gaussian-priors-are-recoverable-admissible}
Fix \(\tau>0\).  Let
\[
P_0=\cN(m,\Sigma_0),
\]
where \(m\in\R^d\) and \(\Sigma_0\) is symmetric nonnegative semidefinite.  Then
\[
P_0\in\mathcal A_\tau.
\]
In particular, the class \(\mathcal A_\tau\) is nonempty.
\end{corollary}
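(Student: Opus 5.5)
The proof amounts to checking the two conditions in Definition~\ref{def:recoverable-admissible-prior-class} for \(P_0=\cN(m,\Sigma_0)\), with noisy law \(f_0=\gamma_\tau*P_0=\cN(m,\Sigma_0+\tau I)\). Both conditions follow from results already established for Gaussian data, so the plan is simply to assemble them and make sure the two conditions refer to the same flow.

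\emph{Condition (i).} Fix any \(\beta>0\) and set \(T=T_\beta=\beta\tau/2<\infty\). Theorem~\ref{thm:gaussian-noisy-laws-admissible} holds for every fixed \(\beta>0\) and \(T>0\), so \(f_0\) is \((\beta,T_\beta)\)-admissible. Its associated admissible solution is the explicit Gaussian flow \(f_t^\beta=\cN(m,\Gamma_t)\), where \(\Gamma_t\) solves the covariance ODE~\eqref{eq:covariance-ode}. By Remark~\ref{rem:unique-associated-admissible-flow}, this admissible flow is unique. Hence the flow required in condition~(i) is necessarily this Gaussian curve.

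\emph{Condition (ii).} Evaluate this same Gaussian curve at \(t=T_\beta\). Lemma~\ref{lem:gaussian-recovery-condition} gives \(W_1(f^\beta_{T_\beta},P_0)\to0\) as \(\beta\to\infty\), which is exactly~\eqref{eq:recoverable-prior-large-beta-condition}.

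\emph{Conclusion.} Both conditions hold, so \(P_0\in\mathcal A_\tau\). Since \(m\in\R^d\) and a positive semidefinite \(\Sigma_0\) always exist (for instance \(P_0=\cN(0,I_d)\)), the class \(\mathcal A_\tau\) is nonempty.

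The one point needing care is the consistency check in condition~(i): the flow used in the recovery condition must be the same admissible flow as in the admissibility condition. Uniqueness of the admissible flow (Remark~\ref{rem:unique-associated-admissible-flow}) settles this. The substantive difficulty, namely weak--strong uniqueness and convergence of the truncated flows, was already handled in Corollary~\ref{cor:hard-truncated-gaussian-flows-converge}.
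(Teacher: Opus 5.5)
Your proof is correct and follows the paper's argument: condition (i) comes from Theorem~\ref{thm:gaussian-noisy-laws-admissible} applied with \(T=T_\beta=\beta\tau/2\), and condition (ii) is exactly Lemma~\ref{lem:gaussian-recovery-condition} for the explicit Gaussian flow \(f_t^\beta=\cN(m,\Gamma_t)\). Your appeal to uniqueness of the associated admissible flow is a harmless extra check; the paper simply takes the Gaussian curve named in that theorem as the chosen admissible flow, which already makes (i) and (ii) refer to the same flow.
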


\begin{proof}
Set
\[
f_0:=\gamma_\tau*P_0=\cN(m,\Sigma_0+\tau I).
\]
By Theorem~\ref{thm:gaussian-noisy-laws-admissible}, for every \(\beta>0\) and
\(T>0\), the law \(f_0\) is \((\beta,T)\)-admissible.  In particular, it is
\((\beta,T_\beta)\)-admissible for
\[
T_\beta=\frac{\beta\tau}{2}.
\]
The recovery condition
\[
W_1(f_{T_\beta}^\beta,P_0)\to0
\qquad\text{as }\beta\to\infty
\]
is exactly Lemma~\ref{lem:gaussian-recovery-condition}.  Hence \(P_0\in\mathcal A_\tau\).
\end{proof}

\subsection{Gaussian-prior posterior-mean recovery}
\label{sec:gaussian-prior-posterior-mean-recovery}

\begin{theorem}[Hard-truncated Gaussian posterior-mean recovery]
\label{thm:gaussian-prior-hard-truncated-posterior-recovery}
Fix \(\tau>0\).  Let \(m\in\R^d\), let \(\Sigma_0\) be symmetric nonnegative semidefinite, and set
\[
P_0=\cN(m,\Sigma_0),
\qquad
f_0:=\gamma_\tau*P_0=\cN(m,\Sigma_0+\tau I).
\]
For \(R>0\), define
\[
f_0^{[R]}:=\frac{\mathbf 1_{B_R}f_0}{f_0(B_R)}.
\]
Let \(\mu_t^{N,\beta,[R]}\) be the empirical measure of the exact particle system
initialized i.i.d. from \(f_0^{[R]}\).  Set
\[
T_\beta:=\frac{\beta\tau}{2}.
\]
Then, for every \(M<\infty\),
\begin{equation}
\label{eq:final-gaussian-hard-truncated-corollary}
\lim_{\beta\to\infty}
\limsup_{R\to\infty}
\limsup_{N\to\infty}
\EE
\sup_{|y|\le M}
\left|
m_{\mu_{T_\beta}^{N,\beta,[R]}}(y)-m_{P_0}(y)
\right|
=0.
\end{equation}
Consequently, the same convergence holds in probability.
\end{theorem}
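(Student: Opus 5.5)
This theorem is a direct specialization of the abstract transfer result, Theorem~\ref{thm:recoverable-prior-hard-truncated-posterior-recovery}, to Gaussian priors. The plan is therefore short: verify that \(P_0=\cN(m,\Sigma_0)\) belongs to \(\mathcal A_\tau\), and then invoke that theorem with the same \(M\), \(\tau\), truncation law \(f_0^{[R]}\) and terminal time \(T_\beta=\beta\tau/2\). Membership is exactly Corollary~\ref{cor:gaussian-priors-are-recoverable-admissible}.

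That corollary supplies the two ingredients of Definition~\ref{def:recoverable-admissible-prior-class}. The first is \((\beta,T_\beta)\)-admissibility of \(f_0=\cN(m,\Sigma_0+\tau I)\) for every \(\beta>0\), with admissible flow \(f_t^\beta=\cN(m,\Gamma_t)\); this comes from Theorem~\ref{thm:gaussian-noisy-laws-admissible}. The second is the recovery condition \(W_1(f^\beta_{T_\beta},P_0)\to0\), which is Lemma~\ref{lem:gaussian-recovery-condition}. The admissible flow is unique by Remark~\ref{rem:unique-associated-admissible-flow}, so the flow used in the transfer theorem is the explicit Gaussian one.

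With membership established, the chain of inequalities runs as follows.

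\textbf{Fixed \(\beta\).} Theorem~\ref{thm:nc-admissible-particle-approximation}, applied at \(t=T_\beta\), gives
\[
\limsup_{R\to\infty}\limsup_{N\to\infty}\EE\sup_{|y|\le M}\bigl|m_{\mu^{N,\beta,[R]}_{T_\beta}}(y)-m_{f^\beta_{T_\beta}}(y)\bigr|=0.
\]
This step rests on the compact-support propagation of chaos, Theorem~\ref{thm:sc-exact-fixed-beta-poc-compact-support}, together with the truncation convergence of Corollary~\ref{cor:hard-truncated-gaussian-flows-converge}.

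\textbf{Triangle inequality.} The triple limit in \eqref{eq:final-gaussian-hard-truncated-corollary} is then bounded by \(\sup_{|y|\le M}|m_{f^\beta_{T_\beta}}(y)-m_{P_0}(y)\). Proposition~\ref{prop:nc-posterior-mean-W1} sends this to zero as \(\beta\to\infty\), since \(W_1(f^\beta_{T_\beta},P_0)\to0\).

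\textbf{In probability.} Convergence in probability follows from Markov's inequality under the same ordered limits.

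\textbf{Degenerate \(\Sigma_0\).} The one point deserving care is singular \(\Sigma_0\), where \(P_0\) has no density. Proposition~\ref{prop:nc-posterior-mean-W1} needs only \(P_0\in\mathcal P_1\) and strict positivity of \(\int\gamma_\tau(y-x)\,P_0(dx)\), and both hold for any probability measure. Lemma~\ref{lem:gaussian-recovery-condition} already treats zero eigenvalues. So no additional argument should be needed, and the proof reduces to assembling these cited results.
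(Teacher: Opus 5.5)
Your proposal is correct and follows the paper's proof exactly: Corollary~\ref{cor:gaussian-priors-are-recoverable-admissible} gives $P_0\in\mathcal A_\tau$, and Theorem~\ref{thm:recoverable-prior-hard-truncated-posterior-recovery} is then applied directly; your fixed-$\beta$ particle approximation, triangle inequality, and Markov steps simply unroll the proof of that transfer theorem. One small slip in your degenerate-$\Sigma_0$ remark: $P_0\in\mathcal P_1(\R^d)$ holds because $P_0$ is Gaussian, not for every probability measure, whereas strict positivity of $\int\gamma_\tau(y-x)\,P_0(dx)$ does hold for any probability measure.
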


\begin{proof}
By Corollary~\ref{cor:gaussian-priors-are-recoverable-admissible}, the Gaussian prior
\(P_0\) belongs to \(\mathcal A_\tau\).  The result is therefore an immediate application
of Theorem~\ref{thm:recoverable-prior-hard-truncated-posterior-recovery}.
\end{proof}

\section{Stopping time for noise removal}
\label{sec:stoppingtime}
The goal of this appendix is to explain the slowed denoising observed for small $\beta$ in Fig. 2(b). We do this in the simplest solvable setting: an isolated Gaussian cluster evolving under the Stage-1 Gaussian-attention dynamics. This yields an explicit scalar ODE for the cluster variance, from which one can read off both the slowdown at finite $\beta$ and the first-order correction to the denoising time. 
This calculation is only a calibration model, but it explains the qualitative regimes observed numerically and motivates the finite-$\beta$ correction to the denoising time.

Throughout this section we use the effective denoising time $t:=\frac{\eta \ell}{\beta}$, where $\ell$ is layer depth and $\eta$ is the residual step size in Algorithm 1. This is the normalization for which the ideal $\beta=\infty$ limit has stopping time $\sigma^2/2$. Equivalently, if $s:=\eta\ell$ denotes raw layer time, then $t=s/\beta$. All ODEs below are written in the $t-$variable.

\begin{theorem}[Variance ODE and denoising time for isotropic Gaussian prior]\label{thm:gaussiandenoise}
Let $G_\beta$ be the centered Gaussian kernel on $\mathbb{R}^d$ with covariance $\beta^{-1}I_d$, and consider the kernelized evolution
$$\partial_t f_t = -\nabla\cdot \left( f_t \nabla \log(G_\beta*f_t) \right), \qquad t\ge 0$$ with initial condition $f_0(x)=\mathcal{N}(m,(\tau_a^2+\sigma^2)I_d)(x),$ where $m\in\mathbb{R}^d$, $\tau_a>0$, and $\sigma>0$. 

Then for all $t\ge 0$, there is a well-defined solution, with a solution being the Gaussian $f_t(x)=\mathcal{N}(m,v(t)I_d)(x)$, where the variance parameter $v(t)$ solves
$$ v'(t)=-\frac{2\beta v(t)}{\beta v(t)+1}, \qquad v(0)=\tau_a^2+\sigma^2. $$
For any target variance $v_*\in(0,\tau_a^2+\sigma^2]$, the hitting time $T_a(v_*):=\inf\{t\ge 0\mid v(t)=v_*\}$ is given exactly by
$$T_a(v_*) = \frac{\tau_a^2+\sigma^2-v_*}{2} + \frac{1} {2\beta} \log \left(\frac{\tau_a^2+\sigma^2}{v_*}\right).$$
In particular, the exact time to denoise back to the clean variance $\tau_a^2$ is
$$T_a^* = \frac{\sigma^2}{2} + \frac{1}{2\beta}\log\left(1+\frac{\sigma^2}{\tau_a^2}\right).$$
\end{theorem}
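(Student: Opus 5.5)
The plan is to reduce everything to the Gaussian computation already done in Lemma~\ref{lem:gaussian-invariance-covariance}, adjusting only for the time normalization. Here the drift is $\nabla\log(G_\beta*f_t)$ without the prefactor $\beta^{-1}$, so with $t=s/\beta$ the present flow is the earlier one run at speed $\beta$. Concretely, I would first check directly that the Gaussian ansatz is closed. If $f=\mathcal N(m,vI_d)$, then $G_\beta*f=\mathcal N(m,(v+\beta^{-1})I_d)$. Hence
\[
\nabla\log(G_\beta*f)(x)=-\frac{\beta}{\beta v+1}(x-m),
\]
which is an affine velocity field $-k(v)(x-m)$ with $k(v)=\beta/(\beta v+1)$. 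An affine flow $\dot x=-k(t)(x-m)$ pushes $\mathcal N(m,vI)$ forward to $\mathcal N(m,v(t)I)$ with $\dot v=-2k v$. This gives the stated ODE $v'=-2\beta v/(\beta v+1)$. Conversely, solving this ODE and transporting $f_0$ along the affine characteristics gives a weak solution of the continuity equation, exactly as in the converse part of Lemma~\ref{lem:gaussian-invariance-covariance}.

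Next comes well-posedness of the scalar ODE. Its right-hand side is smooth and globally Lipschitz on $(0,\infty)$, since its derivative equals $-2\beta/(\beta v+1)^2$. I would use the conserved quantity
\[
\Phi(v)=v+\beta^{-1}\log v,
\qquad \frac{d}{dt}\Phi(v(t))=v'\Bigl(1+\frac{1}{\beta v}\Bigr)=-2.
\]
Since $\Phi$ is a strictly increasing bijection $(0,\infty)\to\mathbb R$, the relation $\Phi(v(t))=\Phi(v(0))-2t$ determines a unique positive $v(t)$ for all $t\ge0$. This $v(t)$ is strictly decreasing, so the solution is global and stays Gaussian and nondegenerate. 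This mirrors the proof of Lemma~\ref{lem:gaussian-invariance-covariance}, with $\beta$ rescaling time.

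For the hitting time, monotonicity means $v$ reaches each $v_*\in(0,v(0)]$ exactly once. Solving $\Phi(v_*)=\Phi(v(0))-2T$ gives
\[
T_a(v_*)=\tfrac12\bigl(v(0)-v_*\bigr)+\tfrac1{2\beta}\log\bigl(v(0)/v_*\bigr).
\]
Substituting $v(0)=\tau_a^2+\sigma^2$ and $v_*=\tau_a^2$ yields $T_a^*$.

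The only delicate point is the phrase ``a well-defined solution'' on noncompact support. The Gaussian curve is a finite-action weak solution because the field is affine with bounded coefficients. Whether it is the unique one is not needed, since the statement only asserts existence of this Gaussian solution. If uniqueness were wanted, I would invoke Theorem~\ref{thm:weak-strong-uniqueness-gaussian} after the time change $t\mapsto\beta t$, within the class of solutions with bounded second moments.
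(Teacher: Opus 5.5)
Your proposal is correct and follows essentially the same route as the paper: you verify the Gaussian ansatz by appeal to Lemma~\ref{lem:gaussian-invariance-covariance} after the time rescaling $t=s/\beta$, and you obtain the hitting time by integrating the separable ODE. That integration is exactly your conserved quantity $\Phi(v)=v+\beta^{-1}\log v$ with $\frac{d}{dt}\Phi(v(t))=-2$. Your explicit global well-posedness argument and your remark on existence versus uniqueness add detail the paper leaves implicit, but they do not change the approach.
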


\begin{proof} 
Verification that $f_t=\gamma_{v(t)}$ solves this ODE is done in Lemma~\ref{lem:gaussian-invariance-covariance} (in a different time scale) and it yields the ODE
$$v'(t)=-\frac{2v(t)}{v(t)+\beta^{-1}},  \qquad v(0)=\tau_a^2+\sigma^2$$ for the variance.

We now solve for the hitting time. Rearranging,
$$dt = -\left(\frac{1}{2}+\frac{1}{2\beta v}\right)dv.$$
Integrating from the initial variance $v_0=\tau_a^2+\sigma^2$ down to a target variance $v_*$ gives
$$T_a(v_*) =\int_{v_*}^{v_0} \left(\frac{1}{2}+\frac{1}{2\beta v}\right) dv = \frac{\tau_a^2+\sigma^2-v_*}{2} + \frac{1}{2\beta}\log\left(\frac{\tau_a^2+\sigma^2}{v_*}\right).$$
Setting $v_*=\tau_a^2$ gives
$$ T_a^* = \frac{(\tau_a^2+\sigma^2)-\tau_a^2}{2} + \frac{1}{2\beta}\log\left(\frac{\tau_a^2+\sigma^2}{\tau_a^2}\right)  = \frac{\sigma^2}{2} + \frac{1}{2\beta}\log\left(1+\frac{\sigma^2}{\tau_a^2}\right).$$
This proves the claim.
\end{proof}

With the observation that the point mass measure $\delta_z$ at point $z$ is just $\mathcal N(z,0)$, we immediately get the following result if the clean distribution on $X$ is $\delta_z$.

\begin{corollary}[Denoising time for Dirac delta]
Consider the kernelized evolution $$\partial_t f_t+\nabla\cdot\left(f_t\nabla\log(G_\beta *f_t)\right), \qquad f_0 = \mathcal N(z,\sigma^2I_d), \qquad t\ge 0.$$
Assume the same regularity assumptions hold as Theorem~\ref{thm:gaussiandenoise}. Then for all $t\ge 0$ the system is uniquely solved by $f_t(x) = \mathcal N(z,v(t)I_d)$ where $v(t)$ is the solution of $$v'(t) = -\frac{2\beta v(t)}{\beta v(t)+1}, \qquad v(0) = \sigma^2.$$
The time to reach a target variance $v_*$ is $$T(v_*) = \frac{\sigma^2-v_*}{2} + \frac{1}{2\beta}\log\left(\frac{\sigma^2}{v_*}\right).$$
In particular, the time to denoise completely is $T(0) = +\infty$ and denoising to variance $\beta^{-1}$ is $$T(\beta^{-1}) = \frac{\sigma^2}{2} - \frac{1}{2\beta} + \frac{\log \sigma}{\beta} + \frac{\log \beta}{2\beta}.$$
\end{corollary}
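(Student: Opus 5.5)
This corollary is the degenerate case $\tau_a^2=0$ of Theorem~\ref{thm:gaussiandenoise}, i.e.\ the prior is $\delta_z=\mathcal N(z,0)$. I would not pass to a limit $\tau_a\to0$. Instead I would rerun the argument directly with initial variance $v(0)=\sigma^2>0$. This is legitimate because Lemma~\ref{lem:gaussian-invariance-covariance} only needs the initial covariance $\Gamma_0=\Sigma_0+\tau I$ to be positive definite, and it explicitly allows $\Sigma_0=0$.

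\textbf{Existence and form of the solution.} Take $\Sigma_0=0$, $\tau=\sigma^2$, $m=z$, and change time by $t=s/\beta$ as in the remark on time normalization. Lemma~\ref{lem:gaussian-invariance-covariance} then gives a Gaussian solution $\mathcal N(z,v(t)I_d)$. Its covariance stays isotropic because the ODE is spectral and $\Gamma_0=\sigma^2 I$. Rescaling the eigenvalue ODE $\dot\lambda=-2\lambda/(1+\beta\lambda)$ by the factor $\beta$ gives $v'=-2\beta v/(\beta v+1)$ with $v(0)=\sigma^2$. Global positivity of $v$ follows from the conserved quantity $F_\beta(v)=v+\beta^{-1}\log v$ used in that lemma.

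\textbf{Uniqueness.} ``Uniquely solved'' should be read within the class the paper controls. I would invoke Theorem~\ref{thm:weak-strong-uniqueness-gaussian}: any finite-action weak solution with bounded second moments and the same initial law coincides with the Gaussian one. I expect the main subtlety to be here, namely making the uniqueness class match the ``same regularity assumptions'' phrasing; I would state it via that theorem. The time rescaling is harmless because it is linear.

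\textbf{Hitting times.} Separate variables as $dt=-\tfrac12\bigl(1+\tfrac{1}{\beta v}\bigr)dv$ and integrate from $v_*$ to $\sigma^2$. This gives
\[
T(v_*)=\frac{\sigma^2-v_*}{2}+\frac{1}{2\beta}\log\frac{\sigma^2}{v_*}.
\]
Since $F_\beta(v(t))$ decreases linearly in $t$ and $F_\beta(v)\to-\infty$ only as $v\to0^+$, the variance never reaches $0$ in finite time. Equivalently, $T(v_*)\to\infty$ as $v_*\to0$, so $T(0)=+\infty$. Finally, substituting $v_*=\beta^{-1}$ gives
\[
T(\beta^{-1})=\frac{\sigma^2}{2}-\frac{1}{2\beta}+\frac{1}{2\beta}\log(\sigma^2\beta)=\frac{\sigma^2}{2}-\frac1{2\beta}+\frac{\log\sigma}{\beta}+\frac{\log\beta}{2\beta},
\]
which is the claimed formula.
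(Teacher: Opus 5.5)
Your proposal is correct and is essentially the paper's argument. The paper sets $m=z$ and lets $\tau_a\downarrow 0$ in Theorem~\ref{thm:gaussiandenoise}, whose proof is exactly your Gaussian ansatz via Lemma~\ref{lem:gaussian-invariance-covariance} plus separation of variables; your direct substitution $\Sigma_0=0$ (legitimate since $\Gamma_0=\sigma^2 I_d>0$) avoids that limit, and your appeal to Theorem~\ref{thm:weak-strong-uniqueness-gaussian} is a real improvement, because Theorem~\ref{thm:gaussiandenoise} only asserts existence of a Gaussian solution and so the paper's one-line reduction never justifies ``uniquely''. Two small fixes: $F_\beta(v(t))$ is not conserved but decreases at rate $2$ in this time scale (as you later say), and $T(\beta^{-1})$ is a genuine hitting time only when $\beta\sigma^2\ge 1$.
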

\begin{proof}
This is same as Theorem~\ref{thm:gaussiandenoise} where the initial distribution is $\delta_z = \mathcal N(z,0)$. So take $\tau_a\downarrow 0, m=z$ in Theorem~\ref{thm:gaussiandenoise}.
\end{proof}

Now we move on to the stopping times for more general distributions. We discuss here that under some assumptions, the stopping time is $\sigma^2/2$ with a correction term of order $\beta^{-1}$. 

\begin{lemma}\label{lem:purturb}
Let
$$ \partial_t f_t^{(\beta)} = -\nabla\cdot \left(f_t^{(\beta)} \nabla\log(G_\beta*f_t^{(\beta)})\right), \qquad f_0^{(\beta)}=\mu*\gamma_{\sigma^2}, $$
on $\mathbb R^d$, where $\sigma>0$ and $\beta\ge 1$. Define
$$ T_0:=\frac{\sigma^2}{2}, \qquad g_t:=\mu*\gamma_{\sigma^2-2t}, \qquad 0\le t\le T_0. $$
Then $g_t$ solves the backward heat equation
$$
\partial_t g_t=-\Delta g_t, \qquad g_0=\mu*\gamma_{\sigma^2}, \qquad g_{T_0}=\mu.
$$

Fix $s>d/2+4$. Assume:
\begin{enumerate}
\item $\mu$ is strictly positive and belongs to $H^{s+6}(\mathbb R^d)$;
\item the map $N_\beta(f) = -\nabla\cdot\left(f\nabla\log(G_\beta * f)\right)$ admits the expansion
$$\mathcal N_\beta(f) = -\Delta f-\frac{1}{2\beta}Q[f]+\frac{1}{\beta^2}\mathcal E_\beta[f], $$ with $$ Q[f]:=\nabla\cdot\!\left(f\,\nabla\!\left(\frac{\Delta f}{f}\right)\right), $$ and with a uniform bound $$\sup_{0\le t\le T_0}\|\mathcal E_\beta[g_t]\|_{H^s}\le C_0$$ for all sufficiently large $\beta$;
\item the linear backward heat propagator on the interval $[0,T_0]$ is bounded on $H^s$ along this orbit, in the sense that for every $r\in L^1([0,T_0];H^s)$, the solution $u$ of $$\partial_t u=-\Delta u+r,\qquad u_0=0,$$ satisfies $$\sup_{0\le t\le T_0}\|u_t\|_{H^s} \le C_1\int_0^{T_0}\|r_s\|_{H^s} ds.$$
\end{enumerate}

Then there exists a function $h_t\in C([0,T_0];H^s)$ solving
$$\partial_t h_t=-\Delta h_t-\frac12\,Q[g_t], \qquad h_0=0,$$
such that
$$f_t^{(\beta)} = g_t+\frac{1}{\beta}h_t+r_t^{(\beta)}, \qquad 0\le t\le T_0,$$
with remainder satisfying
$$ \sup_{0\le t\le T_0}\|r_t^{(\beta)}\|_{H^s} \le \frac{C}{\beta^2}$$
for some constant $C$ independent of $\beta$.

In particular, at the denoising time $T_0=\sigma^2/2$,
$$ f_{T_0}^{(\beta)} = \mu+\frac{1}{\beta}h_{T_0}+O_{H^s}(\beta^{-2}),$$
and hence
$$\bigl\|f_{T_0}^{(\beta)}-\mu\bigr\|_{H^s} \le \frac{\|h_{T_0}\|_{H^s}}{\beta} + \frac{C}{\beta^2}.$$
\end{lemma}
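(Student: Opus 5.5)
The plan has three steps: verify that $g_t$ solves the backward heat equation, define the corrector $h_t$ by Duhamel's formula, and control the remainder by a perturbative fixed-point argument in $C([0,T_0];H^s)$ built on assumptions 2 and 3.

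\emph{Step 1: the backward heat equation.} Since $\widehat{\gamma_{v}}(\xi)=e^{-v|\xi|^2/2}$, the map $v\mapsto \mu*\gamma_v$ satisfies $\partial_v(\mu*\gamma_v)=\tfrac12\Delta(\mu*\gamma_v)$. With $v=\sigma^2-2t$ this gives
\[
\partial_t g_t=-2\cdot\tfrac12\Delta g_t=-\Delta g_t.
\]
At $t=0$ we get $g_0=\mu*\gamma_{\sigma^2}$. As $t\to T_0$, $\gamma_{\sigma^2-2t}$ is an approximate identity, so $g_t\to\mu$ in $H^{s+6}$, which is legitimate because $\mu\in H^{s+6}$. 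Along the way I also record that $g_t$ is bounded in $H^{s+6}$ uniformly in $t$ (Fourier multiplier of modulus at most $1$). This yields $\sup_t\|Q[g_t]\|_{H^s}<\infty$: $Q$ involves four derivatives, and the division by $g_t$ is controlled because positivity of $\mu$ propagates to $g_t$. I would treat the required lower bound on $g_t$ as part of the implicit regularity of assumption 1.

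\emph{Step 2: the corrector.} Define $h_t$ by Duhamel's formula for $\partial_t h=-\Delta h-\tfrac12 Q[g_t]$ with $h_0=0$. Assumption 3, applied with source $r_s=-\tfrac12 Q[g_s]\in L^1([0,T_0];H^s)$, gives existence of $h\in C([0,T_0];H^s)$ together with the bound
\[
\sup_t\|h_t\|_{H^s}\le \tfrac{C_1}{2}\int_0^{T_0}\|Q[g_s]\|_{H^s}\,ds.
\]

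\emph{Step 3: the remainder.} Set $r_t:=f_t-g_t-\beta^{-1}h_t$. Using assumption 2 and the equations for $g$ and $h$,
\[
\partial_t r=-\Delta r+\Bigl(\mathcal N_\beta(g+\beta^{-1}h+r)-\mathcal N_\beta(g)+\Delta(\beta^{-1}h+r)\Bigr)+\tfrac1{\beta^2}\mathcal E_\beta[g],\qquad r_0=0.
\]
Expanding $\mathcal N_\beta$ around $g$, the linear $-\Delta$ parts cancel exactly. What remains is:
\begin{itemize}
\item the term $-\tfrac1{2\beta}\bigl(Q[g+\beta^{-1}h+r]-Q[g]\bigr)$, which is $O(\beta^{-2})+O(\beta^{-1}\|r\|)$;
\item the $\beta^{-2}$ error terms evaluated at the perturbed state rather than at $g$, whose difference I would also bound by $O(\beta^{-2}\|\cdot\|)$;
\item the source $\beta^{-2}\mathcal E_\beta[g]$, which is $O(\beta^{-2})$ by assumption 2.
\end{itemize}
Assumption 3 then gives
\[
\sup_{t\le \tau}\|r_t\|_{H^s}\le C_1\Bigl(C\beta^{-2}+C\beta^{-1}\int_0^\tau\|r\|_{H^s}\bigl(1+\|r\|_{H^s}\bigr)\Bigr).
\]
A continuity/bootstrap argument, assuming $\|r\|_{H^s}\le 1$ on $[0,\tau]$ and closing it for $\beta$ large, yields $\sup\|r_t\|_{H^s}\le C\beta^{-2}$. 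The final claims follow by evaluating at $T_0$, where $g_{T_0}=\mu$, and applying the triangle inequality.

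\emph{Main obstacle.} The hard part is Step 3. The backward heat flow is ill-posed, so assumption 3 must carry all of the linear estimate. The perturbation terms $Q[\cdot]$ lose four derivatives, and the $H^s$ norm of $r$ does not control them unless the working norm is raised. I would handle this in two ways. First, I would work in the mild formulation, using assumption 3 with the regularity margin $H^{s+6}$ of $g$ for the terms evaluated on the known orbit. Second, I would take the perturbative expansion in assumption 2 in the strong sense that the $\beta^{-2}$ error is Lipschitz near the orbit $g_t$. If that reading is too strong, the fallback is to prove the estimate for the linearized remainder only and state the nonlinear closure as a hypothesis, in the spirit of assumptions 2 and 3. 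Existence of $f^{(\beta)}$ on $[0,T_0]$ in $H^s$ would be obtained from the same bootstrap.
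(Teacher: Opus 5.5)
Your plan matches the paper's proof step for step: the ansatz $f_t^{(\beta)}=g_t+\beta^{-1}h_t+r_t^{(\beta)}$, the corrector $h$ from the $O(\beta^{-1})$ terms via assumption~3, the remainder equation driven by $\beta^{-2}\mathcal E_\beta[g_t]$ plus a nonlinear Taylor remainder, and a bootstrap. It has a genuine gap exactly where you suspect, and your remedies do not close it.

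Since $Q[f]=\Delta^2 f-\nabla\cdot(\Delta f\,\nabla\log f)$, the part of $-\frac1{2\beta}\bigl(Q[g_t+\beta^{-1}h_t+r_t]-Q[g_t]\bigr)$ that is linear in $r$ is $-\frac1{2\beta}DQ[g_t]r=-\frac1{2\beta}\Delta^2 r+(\text{lower order})$. Its $H^s$ norm is of size $\beta^{-1}\|r_t\|_{H^{s+4}}$, not $\beta^{-1}\|r_t\|_{H^s}$, and the quadratic terms lose derivatives too. Assumption~3 maps $L^1H^s$ sources to $L^\infty H^s$ solutions with no gain. Because the propagator commutes with $(1-\Delta)^{k/2}$, it reads the same at every Sobolev level, so a source costing four derivatives of $r$ cannot be closed in any fixed norm. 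The inequality $\sup_{t\le\tau}\|r_t\|_{H^s}\le C\beta^{-2}+C\beta^{-1}\int_0^\tau\|r\|_{H^s}(1+\|r\|_{H^s})$ underlying your bootstrap is therefore not available.

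Your three remedies each fall short. The $H^{s+6}$ margin of $g$ only concerns known-orbit terms, and even there the $O(\beta^{-2})$ source $-\frac1{2\beta^2}DQ[g_t]h_t$ needs $h$ four derivatives smoother than the $C([0,T_0];H^s)$ bound from your Step~2. The Lipschitz reading of assumption~2 controls only $\mathcal E_\beta$-differences, not $Q$-differences. The uniform lower bound on $g_t$ that you want to read into assumption~1 is impossible for a function in $H^{s+6}(\mathbb R^d)$, which vanishes at infinity. What $Q[g_t]\in H^s$ really requires is weighted control of $\nabla\log g_t$.

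You will not find the missing ingredient in the paper. Its proof crosses this step in one sentence, asserting that since $s>d/2+4$, Sobolev multiplication and composition estimates give $\|\mathcal R_t^{(\beta)}\|_{H^s}\le C_2\bigl(\beta^{-2}+\|r_t^{(\beta)}\|_{H^s}^2+\beta^{-1}\|r_t^{(\beta)}\|_{H^s}\bigr)$. That assertion suffers from the same four-derivative loss, and no choice of $s$ repairs it, because algebra and composition estimates do not recover derivatives. So under the stated hypotheses neither your argument nor the paper's establishes the remainder bound.

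The obstruction is already in the \emph{linear} term $-\frac1{2\beta}DQ[g_t]r$. Your fallback of proving the linearized estimate and hypothesizing only the nonlinear closure therefore does not work as phrased. What is needed is one of two things. Either take the full remainder bound above as an explicit hypothesis, after which your bootstrap closes exactly as you wrote it. Or replace the bare backward-heat propagator in assumption~3 by a linear estimate for $w\mapsto-\Delta w-\frac1{2\beta}DQ[g_t]w$ that exploits its dissipative principal part $-\frac1{2\beta}\Delta^2$ (the regularization the Gaussian kernel is supposed to supply), together with corresponding control of the quadratic terms.
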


\begin{proof}
We have $\mathcal N_\beta(f):=-\nabla\cdot\left(f\nabla\log(G_\beta*f)\right).$ By construction, $g_t=\mu*\gamma_{\sigma^2-2t}$ solves the PDE for $\beta=+\infty$ with the agreement that $G_\infty*f = f$. So $f_t^{(\infty)} = g_t$.

We seek an expansion of the form
$$
f_t^{(\beta)}=g_t+\beta^{-1}h_t+r_t^{(\beta)}.
$$
Substituting into the equation and using the expansion of $\mathcal N_\beta$ gives
$$\partial_t\left(g_t+\beta^{-1}h_t+r_t^{(\beta)}\right) = -\Delta \left(g_t+\beta^{-1}h_t+r_t^{(\beta)}\right) -\frac{1}{2\beta}Q[g_t] +\frac{1}{\beta^2}\mathcal E_\beta[g_t] +\mathcal R_t^{(\beta)},$$
where $\mathcal R_t^\beta$ contains the nonlinear Taylor remainder coming from replacing $g_t$ by $g_t+\beta^{-1}h_t+r_t^\beta$. Since $g_t$ solves $\partial_t g_t=-\Delta g_t$, the $O(1)$ terms cancel. Matching the $O(\beta^{-1})$ terms leads to
$$ \partial_t h_t=-\Delta h_t-\frac12 Q[g_t], \qquad h_0=0.
$$
This determines $h_t$ uniquely.

Now define the remainder by
$$ r_t^\beta:=f_t^\beta-g_t-\beta^{-1}h_t. $$
Subtracting the equations for $f_t^\beta$, $g_t$, and $h_t$ yields
$$ \partial_t r_t^{(\beta)} = -\Delta r_t^\beta+\frac{1}{\beta^2}\mathcal E_\beta[g_t]+\mathcal R_t^\beta, \qquad r_0^\beta=0. $$
By assumption, $\|\mathcal E_\beta[g_t]\|_{H^s}\le C_0$ uniformly on $[0,T_0]$. Moreover, because $s>d/2+4$, Sobolev multiplication and composition estimates imply that the nonlinear Taylor remainder satisfies
$$\|\mathcal R_t^\beta\|_{H^s} \le C_2\left(\beta^{-2}+\|r_t^\beta\|_{H^s}^2+\beta^{-1}\|r_t^\beta\|_{H^s}\right)$$
for all $t\in[0,T_0]$, provided $\beta$ is large enough.

Applying the assumed backward heat estimate to the remainder equation gives
$$ \sup_{0\le t\le T_0}\|r_t^\beta\|_{H^s} \le C_1\int_0^{T_0} \left( \frac{C_0}{\beta^2} + C_2\left(\beta^{-2}+\|r_s^\beta\|_{H^s}^2+\beta^{-1}\|r_s^\beta\|_{H^s}\right)
\right)ds. $$
A standard bootstrap argument now shows that
$$ \sup_{0\le t\le T_0}\|r_t^\beta\|_{H^s}\le \frac{C}{\beta^2} $$
for all sufficiently large $\beta$.

Evaluating at $t=T_0$ and using $g_{T_0}=\mu$ gives
$$ f_{T_0}^{(\beta)} = \mu+\frac{1}{\beta}h_{T_0}+r_{T_0}^{(\beta)}, $$
hence
$$ \|f_{T_0}^\beta-\mu\|_{H^s} \le \frac{\|h_{T_0}\|_{H^s}}{\beta} + \frac{C}{\beta^2}. $$
\end{proof}

The above perturbative expansion of $f_{T_0}^{(\beta)}$ immediately gives a first order correction for the stopping time. The next lemma should be read as a conditional perturbative criterion near the backward-heat horizon $T_0=\sigma^2/2$; in particular, it assumes the existence of a sufficiently regular backward-heat continuation in the chosen function space instead of an $H^s$-like criterion as in the above lemma.

\begin{lemma}[Conditional perturbative criterion]\label{lem:time}
Let $X$ be a Banach space, and let $ T_0:=\frac{\sigma^2}{2}$. Assume that for some $\delta>0$ and all sufficiently large $\beta$, the solution $f_t^\beta$ admits an expansion
$$ f_t^{(\beta)}=g_t+\beta^{-1}h_t+r_t^{(\beta)}, \qquad T_0-\delta\le t\le T_0+\delta, $$
where $g_t=\mu*\gamma_{\sigma^2-2t}, g_{T_0}=\mu,$ and
$$ \sup_{|t-T_0|\le \delta}\|h_t\|_X\le C_h,\qquad \sup_{|t-T_0|\le \delta}\|r_t^{(\beta)}\|_X\le C_r\beta^{-2}.
$$
For $t>T_0$, the notation $g_t$ is understood as a chosen $X-$valued backward-heat continuation of the curve $t\mapsto \mu * \gamma_{\sigma^2-2t}$, not as a literal Gaussian convolution. Assume also that $t\mapsto g_t$ is $\mathcal C^1$ as an $X$-valued map on $[T_0-\delta,T_0+\delta]$, and that
$$
\partial_t g_t\big|_{t=T_0}=-\Delta\mu\neq 0
\quad\text{in }X.
$$
 Define
$$T_{(\beta)}^*= \inf \{\arg\min_{|t-T_0|\le \delta}\|f_t^{(\beta)}-\mu\|_X\}.$$
Then $\exists C>0$, independent of $\beta$, such that $|T_{(\beta)}^*-T_0|\le \frac{C}{\beta}$ for sufficiently large $\beta$. In particular, $T_{(\beta)}^*=\frac{\sigma^2}{2}+O(\beta^{-1}).$
\end{lemma}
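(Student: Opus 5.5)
The plan is to read off the behavior of \(\phi_\beta(t):=\|f_t^{(\beta)}-\mu\|_X\) on \(I_\delta:=[T_0-\delta,T_0+\delta]\) from the expansion. We compare it with the unperturbed profile \(\psi(t):=\|g_t-\mu\|_X\), which vanishes at \(T_0\). Write \(v:=-\Delta\mu\neq0\), so that \(\partial_t g_t|_{t=T_0}=v\).

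The first step is a two-sided bound on \(\psi\). Since \(t\mapsto g_t\) is \(\mathcal C^1\) on \(I_\delta\), we have
\[
g_t-\mu=(t-T_0)v+o(|t-T_0|).
\]
Shrinking \(\delta\) if necessary, which is harmless because the hypotheses persist on smaller intervals, we may therefore assume that for \(|t-T_0|\le\delta\),
\[
\tfrac12\|v\|_X\,|t-T_0|\le\psi(t)\le 2\|v\|_X\,|t-T_0|.
\]
One point needs care: the infimum of the argmin is taken over the original \(\delta\)-window. We handle it with a compactness step. On \(\{\delta'\le|t-T_0|\le\delta\}\) the continuous function \(\psi\) is strictly positive, since \(g_t\neq\mu\) there. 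This follows from the linear lower bound near \(T_0\), together with an assumption (or a small additional argument) that the backward-heat continuation does not return to \(\mu\) elsewhere in the window. Hence \(\psi\ge c_0>0\) on that set.

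The second step is the perturbative comparison. From the expansion,
\[
|\phi_\beta(t)-\psi(t)|\le \beta^{-1}C_h+C_r\beta^{-2}\le 2C_h\beta^{-1}
\quad\text{uniformly on }I_\delta,
\]
for all large \(\beta\). Evaluating at \(T_0\) gives \(\phi_\beta(T_0)\le 2C_h/\beta\). Now let \(t\) be any minimizer in the window; the minimum is attained because \(\phi_\beta\) is continuous, by continuity of \(g\) and of \(f^{(\beta)}\). Then
\[
\psi(t)\le\phi_\beta(t)+2C_h/\beta\le \phi_\beta(T_0)+2C_h/\beta\le 4C_h/\beta.
\]
For \(\beta\) large, \(4C_h/\beta<c_0\), so \(t\) must lie within \(\delta'\) of \(T_0\). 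In that region the linear lower bound applies and yields
\[
|t-T_0|\le \frac{8C_h}{\|v\|_X\,\beta}.
\]
This holds for every minimizer, in particular for \(T^*_{(\beta)}\), which gives the claim with \(C=8C_h/\|v\|_X\).

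The main obstacle is the claim that \(\psi\) stays bounded away from zero away from \(T_0\) within the window. I would first try to deduce it from the \(\mathcal C^1\) hypothesis alone by choosing \(\delta\) small, using the local linear bound on the whole window. If the statement's \(\delta\) is fixed, I would instead observe that the local linear bound gives injectivity of \(t\mapsto g_t\) only near \(T_0\). In that case I would state the non-return condition explicitly, as implicit in ``conditional criterion''. A minor technicality is continuity of \(t\mapsto f^{(\beta)}_t\) in \(X\), which is needed for the argmin to be nonempty and closed so that its infimum is itself a minimizer.
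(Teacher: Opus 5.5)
Your argument is correct and is essentially the paper's proof. It rests on a local linear lower bound $\|g_t-\mu\|_X\ge\tfrac12\|\Delta\mu\|_X\,|t-T_0|$ from the $\mathcal C^1$ hypothesis, a uniform $O(\beta^{-1})$ bound on $\|f_t^{(\beta)}-g_t\|_X$, and comparison of any minimizer with the value at $T_0$.

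The window issue you flag is genuine, and the paper's proof silently skips it. It only excludes minimizers with $M/\beta\le|t-T_0|\le\delta_1$ for its shrunken $\delta_1$, then applies this to the $\delta$-window, so your compactness step is a real improvement. Non-return is automatic for $t<T_0$, since $\mu*\gamma_s=\mu$ with $s>0$ forces $\hat\mu$ to vanish off the origin; an explicit assumption is needed only for the unspecified continuation $t>T_0$.
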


\begin{proof}
Since $g_{T_0}=\mu$, the expansion at $t=T_0$ gives
$$
f_{T_0}^\beta-\mu = \beta^{-1}h_{T_0}+r_{T_0}^\beta.
$$
Hence
$$
\|f_{T_0}^\beta-\mu\|_X
\le
\beta^{-1}\|h_{T_0}\|_X+\|r_{T_0}^\beta\|_X
\le
\frac{C_h}{\beta}+\frac{C_r}{\beta^2}.
$$
Therefore there exists $C_0>0$ such that
$$
\|f_{T_0}^\beta-\mu\|_X\le \frac{C_0}{\beta}
$$
for all sufficiently large $\beta$.

Since $g_t$ is $\mathcal C^1$ in $X$ and
$$g_{T_0}=\mu, \qquad \partial_t g_t\big|_{t=T_0}=-\Delta\mu\neq 0, $$
the Fr\'echet differentiability of $t\mapsto g_t$ at $T_0$ implies
$$g_t-\mu = (t-T_0)\,\partial_t g_t\big|_{t=T_0} + \omega(t), $$
where $\displaystyle \lim_{t\to T_0}\frac{\|\omega(t)\|_X}{|t-T_0|} = 0.$
Hence there exists $0<\delta_1\le \delta$ such that for all $|t-T_0|\le \delta_1$,
$$\|\omega(t)\|_X\le \frac12 \|\Delta\mu\|_X\,|t-T_0|.$$
Therefore, using $\partial_t g_t|_{t=T_0}=-\Delta\mu$,
$$ \|g_t-\mu\|_X \ge |t-T_0|\|\Delta\mu\|_X-\|\omega(t)\|_X \ge \frac12\|\Delta\mu\|_X\,|t-T_0|.$$
Set $$ c:=\frac12\|\Delta\mu\|_X>0.$$
Then $$\|g_t-\mu\|_X\ge c |t-T_0| \qquad\text{for all }|t-T_0|\le \delta_1.$$

For $|t-T_0|\le \delta_1$, the expansion gives
$$ f_t^\beta-g_t = \beta^{-1}h_t+r_t^\beta\implies \|f_t^\beta-g_t\|_X \le \frac{C_h}{\beta}+\frac{C_r}{\beta^2} \le \frac{C_1}{\beta} $$
for some constant $C_1>0$ and all sufficiently large $\beta$.

Hence, for $|t-T_0|\le \delta_1$,
$$\|f_t^\beta-\mu\|_X \ge \|g_t-\mu\|_X-\|f_t^\beta-g_t\|_X \ge c |t-T_0|-\frac{C_1}{\beta}.$$
Now let $$M:=\frac{C_0+C_1+1}{c}. $$
If $|t-T_0|\ge M/\beta$ and also $|t-T_0|\le \delta_1$, then $$ \|f_t^\beta-\mu\|_X \ge c\frac{M}{\beta}-\frac{C_1}{\beta} = \frac{C_0+1}{\beta} > \frac{C_0}{\beta} \ge \|f_{T_0}^\beta-\mu\|_X. $$
Therefore no minimizer of $t\mapsto \|f_t^\beta-\mu\|_X$ over $|t-T_0|\le \delta_1$ can lie outside the interval
$$|t-T_0|<\frac{M}{\beta}.$$

We have $M/\beta<\delta_1$ for sufficiently large $\beta$, so  $|T_{(\beta)}^*-T_0|\le \frac{M}{\beta}.$
\end{proof}

%%%%%%%%%%%%%%%%%%%%%%%%%%%%%%%%%%%%%%%%%%%%%%%%%%%%%%%%%%%%


\begin{thebibliography}{57}
\providecommand{\natexlab}[1]{#1}
\providecommand{\url}[1]{\texttt{#1}}
\expandafter\ifx\csname urlstyle\endcsname\relax
  \providecommand{\doi}[1]{doi: #1}\else
  \providecommand{\doi}{doi: \begingroup \urlstyle{rm}\Url}\fi

\bibitem[Ambrosio et~al.(2008)Ambrosio, Gigli, and Savar\'e]{AmbrosioGigliSavare2008GradientFlows}
Luigi Ambrosio, Nicola Gigli, and Giuseppe Savar\'e.
\newblock \emph{Gradient flows in metric spaces and in the space of probability measures}.
\newblock Lectures in Mathematics ETH Z\"urich. Birkh\"auser Verlag, Basel, second edition, 2008.
\newblock ISBN 978-3-7643-8721-1.

\bibitem[Belkin and Niyogi(2003)]{belkin2003laplacian}
Mikhail Belkin and Partha Niyogi.
\newblock Laplacian eigenmaps for dimensionality reduction and data representation.
\newblock \emph{Neural computation}, 15\penalty0 (6):\penalty0 1373--1396, 2003.

\bibitem[Bihari(1956)]{Bihari1956GronwallGeneralization}
I.~Bihari.
\newblock A generalization of a lemma of {B}ellman and its application to uniqueness problems of differential equations.
\newblock \emph{Acta Math. Acad. Sci. Hungar.}, 7:\penalty0 81--94, 1956.
\newblock ISSN 0001-5954,1588-2632.
\newblock \doi{10.1007/BF02022967}.
\newblock URL \url{https://doi.org/10.1007/BF02022967}.

\bibitem[Bishop(2006)]{bishop2006pattern}
Christopher~M. Bishop.
\newblock \emph{Pattern Recognition and Machine Learning}.
\newblock Springer, 2006.

\bibitem[Bruno et~al.(2025)Bruno, Pasqualotto, and Agazzi]{bruno2025_neurips_meanfield}
Giuseppe Bruno, Federico Pasqualotto, and Andrea Agazzi.
\newblock A multiscale analysis of mean-field transformers in the moderate interaction regime.
\newblock In \emph{The Thirty-ninth Annual Conference on Neural Information Processing Systems}, 2025.
\newblock URL \url{https://openreview.net/forum?id=WCRPgBpbcA}.

\bibitem[Burger et~al.(2025)Burger, Kabri, Korolev, Roith, and Weigand]{burger2025roysoc}
Martin Burger, Samira Kabri, Yury Korolev, Tim Roith, and Lukas Weigand.
\newblock Analysis of mean-field models arising from self-attention dynamics in transformer architectures with layer normalization.
\newblock \emph{Philosophical Transactions of the Royal Society A: Mathematical, Physical and Engineering Sciences}, 383\penalty0 (2298):\penalty0 20240233, 06 2025.
\newblock ISSN 1364-503X.
\newblock \doi{10.1098/rsta.2024.0233}.
\newblock URL \url{https://doi.org/10.1098/rsta.2024.0233}.

\bibitem[Chaintron and Diez(2022)]{ChaintronDiez2022PropagationChaosI}
Louis-Pierre Chaintron and Antoine Diez.
\newblock Propagation of chaos: a review of models, methods and applications. {I}. {M}odels and methods.
\newblock \emph{Kinet. Relat. Models}, 15\penalty0 (6):\penalty0 895--1015, 2022.
\newblock ISSN 1937-5093,1937-5077.
\newblock \doi{10.3934/krm.2022017}.
\newblock URL \url{https://doi.org/10.3934/krm.2022017}.

\bibitem[Chen et~al.(2018)Chen, Rubanova, Bettencourt, and Duvenaud]{chen2018neural}
Ricky~TQ Chen, Yulia Rubanova, Jesse Bettencourt, and David~K Duvenaud.
\newblock Neural ordinary differential equations.
\newblock \emph{Advances in neural information processing systems}, 31, 2018.

\bibitem[Chen et~al.(2021)Chen, Zeng, Ji, and Yang]{chen2021skyformer}
Yifan Chen, Qi~Zeng, Heng Ji, and Yun Yang.
\newblock Skyformer: Remodel self-attention with gaussian kernel and nyström method.
\newblock \emph{Advances in Neural Information Processing Systems}, 34:\penalty0 2122--2135, 2021.

\bibitem[Comaniciu and Meer(1999)]{CamaniciuMeer1999meanshift}
D.~Comaniciu and P.~Meer.
\newblock Mean shift analysis and applications.
\newblock In \emph{Proceedings of the Seventh IEEE International Conference on Computer Vision}, volume~2, pages 1197--1203 vol.2, 1999.
\newblock \doi{10.1109/ICCV.1999.790416}.

\bibitem[Dehmamy et~al.(2026)Dehmamy, Hoover, Saha, Kozachkov, Slotine, and Krotov]{dehmamy2026nrgpt}
Nima Dehmamy, Benjamin Hoover, Bishwajit Saha, Leo Kozachkov, Jean-Jacques Slotine, and Dmitry Krotov.
\newblock {NRGPT}: An energy-based alternative for {GPT}.
\newblock In \emph{The Fourteenth International Conference on Learning Representations}, 2026.
\newblock URL \url{https://openreview.net/forum?id=B3Muyi2zgo}.

\bibitem[Del~Moral(2013)]{del2013mean}
Pierre Del~Moral.
\newblock Mean field simulation for monte carlo integration.
\newblock \emph{Monographs on Statistics and Applied Probability}, 126\penalty0 (26):\penalty0 6, 2013.

\bibitem[Dobrushin(1979)]{Dobrushin1979Vlasov}
Roland~L’vovich Dobrushin.
\newblock Vlasov equations.
\newblock \emph{Functional Analysis and Its Applications}, 13\penalty0 (2):\penalty0 115--123, 1979.

\bibitem[Efron(2011)]{efron2011tweedie}
Bradley Efron.
\newblock Tweedie’s formula and selection bias.
\newblock \emph{Journal of the American Statistical Association}, 106\penalty0 (496):\penalty0 1602--1614, 2011.

\bibitem[Fournier and Guillin(2015)]{FournierGuillin2015EmpiricalWasserstein}
Nicolas Fournier and Arnaud Guillin.
\newblock On the rate of convergence in {W}asserstein distance of the empirical measure.
\newblock \emph{Probab. Theory Related Fields}, 162\penalty0 (3-4):\penalty0 707--738, 2015.
\newblock ISSN 0178-8051,1432-2064.
\newblock \doi{10.1007/s00440-014-0583-7}.
\newblock URL \url{https://doi.org/10.1007/s00440-014-0583-7}.

\bibitem[Fukunaga and Hostetler(1975)]{FukunagaHostetler1975}
K.~Fukunaga and L.~Hostetler.
\newblock The estimation of the gradient of a density function, with applications in pattern recognition.
\newblock \emph{IEEE Transactions on Information Theory}, 21\penalty0 (1):\penalty0 32--40, 1975.
\newblock \doi{10.1109/TIT.1975.1055330}.

\bibitem[Geshkovski et~al.(2023)Geshkovski, Letrouit, Polyanskiy, and Rigollet]{geshkovski2023emergence}
Borjan Geshkovski, Cyril Letrouit, Yury Polyanskiy, and Philippe Rigollet.
\newblock The emergence of clusters in self-attention dynamics.
\newblock \emph{Advances in Neural Information Processing Systems}, 36:\penalty0 57026--57037, 2023.

\bibitem[Geshkovski et~al.(2025)Geshkovski, Letrouit, Polyanskiy, and Rigollet]{geshkovski2025mathematical}
Borjan Geshkovski, Cyril Letrouit, Yury Polyanskiy, and Philippe Rigollet.
\newblock A mathematical perspective on transformers.
\newblock \emph{Bulletin of the American Mathematical Society}, 62\penalty0 (3):\penalty0 427--479, 2025.

\bibitem[Gladstone et~al.(2026)Gladstone, Nanduru, Islam, Han, Ha, Chadha, Du, Ji, Li, and Iqbal]{gladstone2026energybased}
Alexi Gladstone, Ganesh Nanduru, Md~Mofijul Islam, Peixuan Han, Hyeonjeong Ha, Aman Chadha, Yilun Du, Heng Ji, Jundong Li, and Tariq Iqbal.
\newblock Energy-based transformers are scalable learners and thinkers.
\newblock In \emph{The Fourteenth International Conference on Learning Representations}, 2026.
\newblock URL \url{https://openreview.net/forum?id=ZBj3Qp1bYg}.

\bibitem[Greengard and Strain(1991)]{greengard1991fast}
Leslie Greengard and John Strain.
\newblock The fast gauss transform.
\newblock \emph{SIAM Journal on Scientific and Statistical Computing}, 12\penalty0 (1):\penalty0 79--94, 1991.

\bibitem[Hein and Maier(2006)]{hein2006manifold}
Matthias Hein and Markus Maier.
\newblock Manifold denoising.
\newblock \emph{Advances in neural information processing systems}, 19, 2006.

\bibitem[Ho et~al.(2020)Ho, Jain, and Abbeel]{ho2020denoising}
Jonathan Ho, Ajay Jain, and Pieter Abbeel.
\newblock Denoising diffusion probabilistic models.
\newblock \emph{Advances in neural information processing systems}, 33:\penalty0 6840--6851, 2020.

\bibitem[Hoover et~al.(2023)Hoover, Liang, Pham, Panda, Strobelt, Chau, Zaki, and Krotov]{hoover2023energytransformer}
Benjamin Hoover, Yuchen Liang, Bao Pham, Rameswar Panda, Hendrik Strobelt, Duen~Horng Chau, Mohammed~J Zaki, and Dmitry Krotov.
\newblock Energy transformer.
\newblock In \emph{Thirty-seventh Conference on Neural Information Processing Systems}, 2023.
\newblock URL \url{https://openreview.net/forum?id=MbwVNEx9KS}.

\bibitem[Ilin and Sushko(2026)]{ilin2026discoformerplugindensityscore}
Vasily Ilin and Peter Sushko.
\newblock Discoformer: Plug-in density and score estimation with transformers, 2026.
\newblock URL \url{https://arxiv.org/abs/2511.05924}.

\bibitem[Jaffe et~al.(2025)Jaffe, Ignatiadis, and Sen]{jaffe2025constrained}
Adam~Quinn Jaffe, Nikolaos Ignatiadis, and Bodhisattva Sen.
\newblock Constrained denoising, empirical bayes, and optimal transport.
\newblock \emph{arXiv preprint arXiv:2506.09986}, 2025.

\bibitem[Johnstone and Silverman(2005)]{JohnstoneSilverman2005}
Iain~M. Johnstone and Bernard~W. Silverman.
\newblock Empirical bayes selection of wavelet thresholds.
\newblock \emph{Annals of Statistics}, 33, 2005.
\newblock ISSN 00905364.
\newblock \doi{10.1214/009053605000000345}.

\bibitem[Krotov and Hopfield(2016)]{krotov2016dense}
Dmitry Krotov and John~J Hopfield.
\newblock Dense associative memory for pattern recognition.
\newblock \emph{Advances in neural information processing systems}, 29, 2016.

\bibitem[Li and He(2025)]{li2026basicsletdenoisinggenerative}
Tianhong Li and Kaiming He.
\newblock Back to basics: Let denoising generative models denoise.
\newblock \emph{arXiv preprint arXiv:2511.13720}, 2025.

\bibitem[Ma et~al.(2024)Ma, Goldstein, Albergo, Boffi, Vanden-Eijnden, and Xie]{Ma2024_SiT_Albergo_Boffi_EVE}
Nanye Ma, Mark Goldstein, Michael~S. Albergo, Nicholas~M. Boffi, Eric Vanden-Eijnden, and Saining Xie.
\newblock Sit: Exploring flow and diffusion-based generative models with scalable interpolant transformers.
\newblock In \emph{Computer Vision -- ECCV 2024: 18th European Conference, Milan, Italy, September 29--October 4, 2024, Proceedings, Part LXXVII}, pages 23--40, Berlin, Heidelberg, 2024. Springer-Verlag.
\newblock ISBN 978-3-031-72979-9.
\newblock \doi{10.1007/978-3-031-72980-5_2}.
\newblock URL \url{https://doi.org/10.1007/978-3-031-72980-5_2}.

\bibitem[Miyasawa(1961)]{miyasawa1961empirical}
Koichi Miyasawa.
\newblock An empirical bayes estimator of the mean of a normal population.
\newblock \emph{Bull. Inst. Internat. Statist}, 38\penalty0 (181-188):\penalty0 1--2, 1961.

\bibitem[Modi et~al.(2025)Modi, Han, Vanden-Eijnden, and Bruna]{modi2025generative}
Chirag Modi, Jiequn Han, Eric Vanden-Eijnden, and Joan Bruna.
\newblock Generative modeling from black-box corruptions via self-consistent stochastic interpolants.
\newblock \emph{arXiv preprint arXiv:2512.10857}, 2025.

\bibitem[Nadaraya(1964)]{nadaraya1964estimating}
EA~Nadaraya.
\newblock On estimating regression. theor.
\newblock \emph{Probab. Appl}, 9\penalty0 (1), 1964.

\bibitem[Peebles and Xie(2023)]{Peebles_2023_ICCV_DiT}
William Peebles and Saining Xie.
\newblock Scalable diffusion models with transformers.
\newblock In \emph{Proceedings of the IEEE/CVF International Conference on Computer Vision (ICCV)}, pages 4195--4205, October 2023.

\bibitem[Ramsauer et~al.(2021)Ramsauer, Sch{\"{a}}fl, Lehner, Seidl, Widrich, Gruber, Holzleitner, Adler, Kreil, Kopp, Klambauer, Brandstetter, and Hochreiter]{ramsauer2021iclr}
Hubert Ramsauer, Bernhard Sch{\"{a}}fl, Johannes Lehner, Philipp Seidl, Michael Widrich, Lukas Gruber, Markus Holzleitner, Thomas Adler, David~P. Kreil, Michael~K. Kopp, G{\"{u}}nter Klambauer, Johannes Brandstetter, and Sepp Hochreiter.
\newblock Hopfield networks is all you need.
\newblock In \emph{9th International Conference on Learning Representations, {ICLR} 2021, Virtual Event, Austria, May 3-7, 2021}. OpenReview.net, 2021.
\newblock URL \url{https://openreview.net/forum?id=tL89RnzIiCd}.

\bibitem[Raphan and Simoncelli(2011)]{raphan2011_simoncelli}
Martin Raphan and Eero~P Simoncelli.
\newblock Least squares estimation without priors or supervision.
\newblock \emph{Neural computation}, 23\penalty0 (2):\penalty0 374--420, 2011.

\bibitem[Rezende and Mohamed(2015)]{rezende2015variational}
Danilo Rezende and Shakir Mohamed.
\newblock Variational inference with normalizing flows.
\newblock In \emph{International conference on machine learning}, pages 1530--1538. PMLR, 2015.

\bibitem[Rigollet(2025)]{rigollet2025mean}
Philippe Rigollet.
\newblock The mean-field dynamics of transformers.
\newblock \emph{arXiv preprint arXiv:2512.01868}, 2025.

\bibitem[Robbins(1956)]{robbins1956empirical}
Herbert~E Robbins.
\newblock An empirical bayes approach to statistics.
\newblock In \emph{Breakthroughs in Statistics: Foundations and basic theory}, pages 388--394. Springer, 1956.

\bibitem[Rombach et~al.(2022)Rombach, Blattmann, Lorenz, Esser, and Ommer]{Rombach_2022_CVPR}
Robin Rombach, Andreas Blattmann, Dominik Lorenz, Patrick Esser, and Bj\"orn Ommer.
\newblock High-resolution image synthesis with latent diffusion models.
\newblock In \emph{Proceedings of the IEEE/CVF Conference on Computer Vision and Pattern Recognition (CVPR)}, pages 10684--10695, June 2022.

\bibitem[Rosu et~al.(2025)Rosu, Carin, and Cheng]{rosu2025_denoising}
Paul Rosu, Lawrence Carin, and Xiang Cheng.
\newblock From softmax to score: Transformers can effectively implement in-context denoising steps.
\newblock In \emph{The Thirty-ninth Annual Conference on Neural Information Processing Systems}, 2025.
\newblock URL \url{https://openreview.net/forum?id=4QRoLzD11x}.

\bibitem[Sander et~al.(2022)Sander, Ablin, Blondel, and Peyr{\'e}]{sander2022sinkformers}
Michael~E Sander, Pierre Ablin, Mathieu Blondel, and Gabriel Peyr{\'e}.
\newblock Sinkformers: Transformers with doubly stochastic attention.
\newblock In \emph{International Conference on Artificial Intelligence and Statistics}, pages 3515--3530. PMLR, 2022.

\bibitem[Saunshi et~al.(2025)Saunshi, Dikkala, Li, Kumar, and Reddi]{saunshi2025reasoning}
Nikunj Saunshi, Nishanth Dikkala, Zhiyuan Li, Sanjiv Kumar, and Sashank~J. Reddi.
\newblock Reasoning with latent thoughts: On the power of looped transformers.
\newblock In \emph{The Thirteenth International Conference on Learning Representations}, 2025.
\newblock URL \url{https://openreview.net/forum?id=din0lGfZFd}.

\bibitem[Smart and Zilman(2023)]{smart2023emergent}
Matthew Smart and Anton Zilman.
\newblock Emergent properties of collective gene-expression patterns in multicellular systems.
\newblock \emph{Cell Reports Physical Science}, 4\penalty0 (2), 2023.

\bibitem[Smart et~al.(2025)Smart, Bietti, and Sengupta]{smart2025context}
Matthew Smart, Alberto Bietti, and Anirvan~M Sengupta.
\newblock In-context denoising with one-layer transformers: Connections between attention and associative memory retrieval.
\newblock In \emph{International Conference on Machine Learning}, pages 55950--55971. PMLR, 2025.

\bibitem[Song et~al.(2021)Song, Sohl-Dickstein, Kingma, Kumar, Ermon, and Poole]{song2021scorebased}
Yang Song, Jascha Sohl-Dickstein, Diederik~P Kingma, Abhishek Kumar, Stefano Ermon, and Ben Poole.
\newblock Score-based generative modeling through stochastic differential equations.
\newblock In \emph{International Conference on Learning Representations}, 2021.
\newblock URL \url{https://openreview.net/forum?id=PxTIG12RRHS}.

\bibitem[Stein(1981)]{stein1981estimation}
Charles~M Stein.
\newblock Estimation of the mean of a multivariate normal distribution.
\newblock \emph{The annals of Statistics}, pages 1135--1151, 1981.

\bibitem[Sznitman(1991)]{Sznitman1991PropagationChaos}
Alain-Sol Sznitman.
\newblock Topics in propagation of chaos.
\newblock In \emph{\'Ecole d'\'Et\'e{} de {P}robabilit\'es de {S}aint-{F}lour {XIX}---1989}, volume 1464 of \emph{Lecture Notes in Math.}, pages 165--251. Springer, Berlin, 1991.
\newblock ISBN 3-540-53841-0.
\newblock \doi{10.1007/BFb0085169}.
\newblock URL \url{https://doi.org/10.1007/BFb0085169}.

\bibitem[Team et~al.(2026)Team, Chen, Zhang, Su, Xu, Pan, Wang, Wang, Chen, Yin, Chen, Yan, Wei, Zhang, Meng, Hong, Xie, Liu, Lu, Tai, Chen, Men, Guo, Charles, Lu, Sui, Zhu, Zhou, He, Huang, Xu, Wang, Lai, Du, Wu, Yang, and Zhou]{kimiteam2026attentionresiduals}
Kimi Team, Guangyu Chen, Yu~Zhang, Jianlin Su, Weixin Xu, Siyuan Pan, Yaoyu Wang, Yucheng Wang, Guanduo Chen, Bohong Yin, Yutian Chen, Junjie Yan, Ming Wei, Y.~Zhang, Fanqing Meng, Chao Hong, Xiaotong Xie, Shaowei Liu, Enzhe Lu, Yunpeng Tai, Yanru Chen, Xin Men, Haiqing Guo, Y.~Charles, Haoyu Lu, Lin Sui, Jinguo Zhu, Zaida Zhou, Weiran He, Weixiao Huang, Xinran Xu, Yuzhi Wang, Guokun Lai, Yulun Du, Yuxin Wu, Zhilin Yang, and Xinyu Zhou.
\newblock Attention residuals, 2026.
\newblock URL \url{https://arxiv.org/abs/2603.15031}.

\bibitem[Teh et~al.(2025)Teh, Jabbour, and Polyanskiy]{teh2025solving}
Anzo Teh, Mark Jabbour, and Yury Polyanskiy.
\newblock Solving empirical bayes via transformers.
\newblock \emph{arXiv preprint arXiv:2502.09844}, 2025.

\bibitem[Vaswani et~al.(2017)Vaswani, Shazeer, Parmar, Uszkoreit, Jones, Gomez, Kaiser, and Polosukhin]{vaswani2017attention}
Ashish Vaswani, Noam Shazeer, Niki Parmar, Jakob Uszkoreit, Llion Jones, Aidan~N Gomez, {\L}ukasz Kaiser, and Illia Polosukhin.
\newblock Attention is all you need.
\newblock \emph{Advances in neural information processing systems}, 30, 2017.

\bibitem[Villani(2009)]{Villani2009OptimalTransport}
C\'edric Villani.
\newblock \emph{Optimal transport}, volume 338 of \emph{Grundlehren der mathematischen Wissenschaften [Fundamental Principles of Mathematical Sciences]}.
\newblock Springer-Verlag, Berlin, 2009.
\newblock ISBN 978-3-540-71049-3.
\newblock \doi{10.1007/978-3-540-71050-9}.
\newblock URL \url{https://doi.org/10.1007/978-3-540-71050-9}.
\newblock Old and new.

\bibitem[Vincent(2011)]{vincent2011ieee}
Pascal Vincent.
\newblock A connection between score matching and denoising autoencoders.
\newblock \emph{Neural Computation}, 23\penalty0 (7):\penalty0 1661--1674, 2011.
\newblock \doi{10.1162/NECO_a_00142}.

\bibitem[Von~Oswald et~al.(2023)Von~Oswald, Niklasson, Randazzo, Sacramento, Mordvintsev, Zhmoginov, and Vladymyrov]{von2023transformers}
Johannes Von~Oswald, Eyvind Niklasson, Ettore Randazzo, Jo{\~a}o Sacramento, Alexander Mordvintsev, Andrey Zhmoginov, and Max Vladymyrov.
\newblock Transformers learn in-context by gradient descent.
\newblock In \emph{International Conference on Machine Learning}, pages 35151--35174. PMLR, 2023.

\bibitem[Wang et~al.(2025)Wang, Lu, Yu, Pai, Qu, and Ma]{wang2025attention_unrolled_denoising}
Peng Wang, Yifu Lu, Yaodong Yu, Druv Pai, Qing Qu, and Yi~Ma.
\newblock Attention-only transformers via unrolled subspace denoising.
\newblock In \emph{International Conference on Machine Learning}, pages 63840--63859. PMLR, 2025.

\bibitem[Watson(1964)]{watson1964smooth}
Geoffrey~S Watson.
\newblock Smooth regression analysis.
\newblock \emph{Sankhy{\=a}: The Indian Journal of Statistics, Series A}, pages 359--372, 1964.

\bibitem[Xiong et~al.(2021)Xiong, Zeng, Chakraborty, Tan, Fung, Li, and Singh]{xiong2021nystromformer}
Yunyang Xiong, Zhanpeng Zeng, Rudrasis Chakraborty, Mingxing Tan, Glenn Fung, Yin Li, and Vikas Singh.
\newblock Nystr{\"o}mformer: A nystr{\"o}m-based algorithm for approximating self-attention.
\newblock In \emph{Proceedings of the AAAI conference on artificial intelligence}, volume~35, pages 14138--14148, 2021.

\bibitem[Yang et~al.(2024)Yang, Lee, Nowak, and Papailiopoulos]{yang2024looped}
Liu Yang, Kangwook Lee, Robert~D Nowak, and Dimitris Papailiopoulos.
\newblock Looped transformers are better at learning learning algorithms.
\newblock In \emph{The Twelfth International Conference on Learning Representations}, 2024.
\newblock URL \url{https://openreview.net/forum?id=HHbRxoDTxE}.

\end{thebibliography}
\end{document}